\documentclass{article} 
\usepackage[accepted]{icml2026}

\usepackage{amsmath,amsfonts,bm}

\def\eqref#1{equation~\ref{#1}}

\def\1{\bm{1}}

\DeclareMathAlphabet{\mathsfit}{\encodingdefault}{\sfdefault}{m}{sl}
\SetMathAlphabet{\mathsfit}{bold}{\encodingdefault}{\sfdefault}{bx}{n}

\usepackage{aligned-overset}

\usepackage{fontawesome5}
\usepackage[svgnames]{xcolor}
\usepackage[most]{tcolorbox}

\definecolor{mygreen}{HTML}{79BF41}
\definecolor{myblue}{HTML}{4DBCC9}
\definecolor{myred}{named}{FireBrick} 
\definecolor{codegreen}{rgb}{0,0.6,0}
\definecolor{codegray}{rgb}{0.5,0.5,0.5}
\definecolor{codepurple}{rgb}{0.58,0,0.82}
\definecolor{backcolour}{rgb}{0.95,0.95,0.92}

\lstdefinestyle{mystyle}{
    backgroundcolor=\color{backcolour},   
    commentstyle=\color{codegreen},
    keywordstyle=\color{magenta},
    numberstyle=\tiny\color{codegray},
    stringstyle=\color{codepurple},
    basicstyle=\ttfamily\scriptsize,
    breakatwhitespace=false,         
    breaklines=true,                 
    captionpos=b,                    
    keepspaces=true,                                 
    numbersep=1pt,                  
    showspaces=false,                
    showstringspaces=false,
    showtabs=false,                  
    tabsize=1
}
\tcbset {
  base/.style={
    arc=0mm, 
    bottomtitle=0.5mm,
    boxrule=0mm,
    colbacktitle=black!10!white, 
    coltitle=black, 
    fonttitle=\bfseries, 
    left=1mm,
    leftrule=1mm,
    right=1mm,
    top=1mm,
    bottom=1mm,
    title={#1},
    toptitle=0.75mm, 
    width=\linewidth
  }
}

\newtcolorbox{greybox}[1][]{%
    colback=blue!3!white,    
    colframe=blue!3!white,   
    sharp corners,           
    #1
}

\newtcolorbox{leftbarbox}[1][]{%
    blanker,                 
    left=3mm,                
    borderline west={1mm}{0pt}{blue!40!black}, 
    #1
}

\newtcolorbox{cleanframe}[1][]{%
    colback=white,           
    colframe=black!50,       
    boxrule=0.5pt,           
    sharp corners,           
    #1
}

\newtcolorbox{promptbox}[1]{
  colframe=black!15!white,
  base={#1},
  leftrule=0mm,
  breakable,
}

\newtcolorbox{bluebox}[1]{
  colframe=myblue!50!white,
  colback=myblue!15!white,
  base={#1},
  breakable
}

\newtcolorbox{greenbox}[1]{
  colframe=mygreen!50!white,
  colback=mygreen!15!white,
  base={#1},
  breakable
}

\newtcolorbox{redbox}[1]{
  colframe=myred!50!white,
  colback=myred!15!white,
  base={#1},
  breakable
}

\definecolor{blue1}{HTML}{2E86AB}
\usepackage[colorlinks=true,
            linkcolor=red,
            citecolor=blue1,
            filecolor=blue1,
            urlcolor=blue1]{hyperref}
\usepackage{url}            
\usepackage{booktabs}       
\usepackage{amsfonts}       
\usepackage{microtype}      
\usepackage{xcolor}         
\usepackage{float}
\usepackage{graphicx}
\usepackage{subfig}
\usepackage{wrapfig}

\usepackage{amssymb}
\usepackage{amsthm}

\usepackage[textsize=tiny]{todonotes}

\usepackage[capitalize,noabbrev]{cleveref}

\theoremstyle{plain}
\newtheorem{theorem}{Theorem}[section]

\newtheorem{lemma}[theorem]{Lemma}

\theoremstyle{definition}
\newtheorem{definition}[theorem]{Definition}

\theoremstyle{remark}

\usepackage{MnSymbol} 
\usepackage{tcolorbox}
\usepackage{algorithm}

\usepackage{algpseudocode}
\usepackage{tikz}
\usepackage{amsmath} 
\usepackage{mathtools}
\usepackage{longtable} 
\usepackage{makecell}

\usetikzlibrary{
    arrows.meta,  
    positioning   
}
\usepackage{caption}
\usepackage{multirow}
\usepackage{enumitem}

\usepackage{minitoc}

\icmltitlerunning{TimeLAVA: Learning-Agnostic Valuation for Time Series Data}

\begin{document}

\twocolumn[
    \icmltitle{TimeLAVA: Learning-Agnostic Valuation for Time Series Data}



  \begin{icmlauthorlist}
    \icmlauthor{Wenqin Liu}{melb}
    \icmlauthor{Weizhi Quan}{melb}
    \icmlauthor{Aoqi Zuo}{sydney}
    \icmlauthor{Erdun Gao}{adelaide_aiml,adelaide_math}
    \icmlauthor{Vu Nguyen}{amazon} \\
    \icmlauthor{Dino Sejdinovic}{adelaide_aiml,adelaide_math}
    \icmlauthor{Howard Bondell}{melb}
    \icmlauthor{Mingming Gong}{melb,mbzuai}
  \end{icmlauthorlist}

  \icmlaffiliation{melb}{School of Mathematics and Statistics, The University of Melbourne}
  \icmlaffiliation{sydney}{School of Mathematics and Statistics, University of Sydney}
  \icmlaffiliation{adelaide_aiml}{Responsible AI Research Centre, Australian Institute for Machine Learning}
  \icmlaffiliation{amazon}{Amazon}
  \icmlaffiliation{adelaide_math}{School of Mathematical Sciences, Adelaide University}
  \icmlaffiliation{mbzuai}{Department of Machine Learning, MBZUAI}

  \icmlcorrespondingauthor{Erdun Gao}{erdun.gao@adelaide.edu.au}

  \icmlkeywords{Machine Learning, ICML}

  \vskip 0.3in
]



\printAffiliationsAndNotice{}

\doparttoc 
\faketableofcontents

\begin{abstract}
Data valuation quantifies the intrinsic quality of individual samples to enable principled data curation, quality control, and robust learning. For time series in critical domains such as healthcare, finance, and industrial monitoring, effective valuation methods are essential yet fundamentally lacking. Existing approaches are either model-dependent, limiting their generalizability, or designed for i.i.d. data and thus fail to capture temporal dependencies, multi-scale patterns, and non-stationary dynamics inherent to sequential data. We introduce \textsc{TimeLAVA}, a learning-agnostic framework that values temporal segments by their marginal contribution to minimizing distributional discrepancy between evaluated and reference data. At its core is a novel Selective Wavelet-based Wasserstein ($\mathcal{W}_\text{SW}$) discrepancy combining multi-scale wavelet transforms for temporal localization with unbalanced optimal transport for robustness to distributional shifts. Segment values are efficiently computed via sensitivity analysis without requiring model training and aggregated into point-wise scores. We provide theoretical guarantees linking valuation to model-agnostic generalization and prove bounded sensitivity to outlier contamination. Extensive experiments across anomaly detection, data pruning, and label noise detection demonstrate that \textsc{TimeLAVA} produces significantly more informative value scores than existing methods on diverse real-world datasets. The code is available at \url{https://github.com/lww28/TimeLAVA}.
\end{abstract}

\section{Introduction}

Time series data forms the backbone of modern decision-making systems. In intensive care units, continuous monitoring of physiological signals informs life-critical interventions~\citep{celi2013big,johnson2016mimic}; in financial markets, millisecond-level price fluctuations guide trading strategies~\citep{franses2000non,sezer2020financial}; in industrial IoT systems, sensor streams enable predictive maintenance~\citep{carvalho2019systematic}. Across these domains, identifying influential temporal segments, including both informative patterns and anomalies, is essential for reliable modeling~\citep{hamilton2020time,pang2021deep}. Yet misidentification has tangible consequences: models that fail under distribution shift, spurious anomaly alerts that erode trust, or labeling budgets wasted on uninformative data. This raises a central question: \textit{how can we quantify the intrinsic value of temporal segments in time series data?}

\textbf{Motivation.} Despite its importance, time series data valuation remains fundamentally underexplored. Existing methods, predominantly designed for i.i.d. settings, prove ill-suited for these challenges. Model-dependent approaches like Influence Functions~\citep{koh2017inf} quantify how individual samples affect model predictions. While {TimeInf}~\citep{Zhang2025TimeInf} extends this to time series through temporal blocking, it remains fundamentally model-specific and assumes stationarity. Learning-agnostic methods using Optimal Transport (OT), such as {LAVA}~\citep{Just2023LAVA} and SAVA~\citep{kessler2025sava} for i.i.d. data, define value through distributional similarity but are fundamentally unsuited for time series due to temporal dependencies and non-stationary dynamics. This leaves a critical gap: \textit{the absence of a learning-agnostic valuation framework capable of capturing the complex, evolving dynamics inherent to sequential data}.

\textbf{Challenges.} Valuing time series data presents a distinct set of challenges stemming from its sequential nature. Observations in sequential data are linked through \textit{temporal dependencies}, where value depends not only on their content but also on the surrounding context and the order in which they appear. This intrinsic structure is further complicated by the fact that real-world time series are often \textit{non-stationary}: regime shifts, seasonal effects, and evolving dynamics can alter statistical properties over time. Furthermore, valuable information is often distributed across \textit{multiple temporal scales}, with long-term cyclic trends and short-lived transient events contributing in complementary ways. In addition, practical constraints demand that any valuation method be able to process \textit{large-scale}, high-frequency datasets efficiently while still preserving fine-grained insight.

\textbf{Contributions.} To address these challenges, we introduce \textsc{TimeLAVA}, a learning-agnostic framework that values time series segments by quantifying their marginal contribution to minimizing distributional discrepancy between evaluated and reference data. Our approach is built on a key insight: time series valuation is fundamentally a \textit{selective matching} problem, where segments that cannot be well-matched to the reference distribution should remain unmatched rather than forced into poor alignments. To this end, we propose the Selective Wavelet-based Wasserstein ($\mathcal{W}_{\text{SW}}$) discrepancy, which integrates (1) \textit{unbalanced optimal transport} to relax mass conservation constraints, enabling selective matching that naturally surfaces distributional outliers, and (2) \textit{wavelet-based ground metrics} to capture localized temporal patterns across multiple scales. Segment values are efficiently derived via sensitivity analysis of the $\mathcal{W}_{\text{SW}}$ dual formulation, requiring no model retraining. We provide theoretical guarantees including bounded sensitivity to outlier contamination (Theorem~\ref{thm:robustness}) and connections to model-agnostic generalization (Theorem~\ref{thm:perf_bound}), and demonstrate consistent improvements over strong baselines on anomaly detection, data pruning, and label noise detection across healthcare, finance, and IoT benchmarks.

\begin{figure*}[t]
\centering
\includegraphics[width=\linewidth]{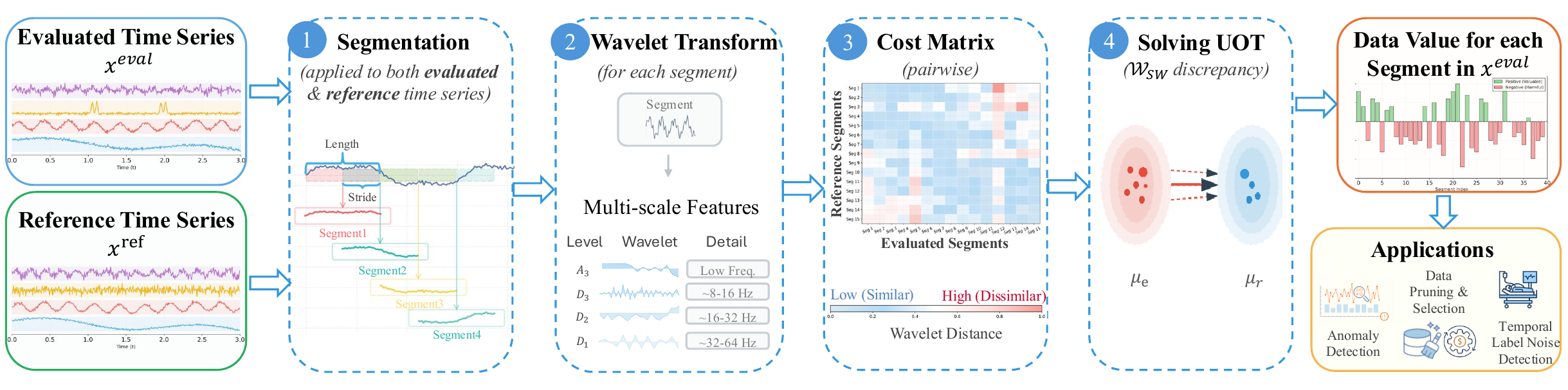}
\caption{\textbf{Overview of the \textsc{TimeLAVA} Framework.} Given an evaluated time series and a reference time series, \textsc{TimeLAVA} assigns a value to each temporal segment indicating its quality relative to the reference. Both series are partitioned into overlapping segments and transformed into multi-scale wavelet representations. The Selective Wavelet-based Wasserstein ($\mathcal{W}_{\text{SW}}$) discrepancy is then computed between the two segment distributions. Segment values are efficiently derived from the optimal dual potentials without model training, enabling diverse applications including anomaly detection, data pruning, and label noise detection.}
\label{fig:workflow}
\end{figure*}

\section{{Related Work}}

\textbf{Model-dependent Valuation.} 
Model-dependent methods quantify data value relative to specific predictive models. Leave-one-out retraining provides direct performance measurements but is computationally prohibitive for large datasets. Shapley value methods~\citep{jia2019towards,kwon2021beta} attribute value through marginal contributions across data subsets; however, exact computation requires exponential evaluations. Influence Functions~\citep{influence1974,koh2017inf} estimate how reweighting a data point perturbs model parameters without retraining, with \textsc{TimeInf}~\citep{Zhang2025TimeInf} extending this to time series through temporal blocking. However, these approaches remain tied to specific model architectures and training procedures, limiting their generalizability across tasks.

\textbf{Learning-Agnostic Valuation.} Learning-agnostic approaches aim to quantify intrinsic data value without pre-specifying a learning algorithm. The most prominent paradigm uses distributional similarity via OT: LAVA~\citep{Just2023LAVA} and its memory-efficient variant SAVA~\citep{kessler2025sava} to value i.i.d.\ data points by their contribution to reducing the Wasserstein distance between training and validation distributions, providing efficient model-agnostic estimates that generalize across tasks. However, these frameworks assume independent samples, making them fundamentally unsuited for time series where value depends on temporal context and evolving non-stationary dynamics.

\textbf{Optimal Transport for Time Series.} While OT provides a principled framework for comparing distributions~\citep{Villani2009,PeyreCuturi2019}, applying it to sequential data requires careful cost function design. Classical approaches like Dynamic Time Warping~\citep{berndt1994using} align sequences but lack the distributional perspective of OT. Unbalanced OT~\citep{chizat2018unbalanced,sejourne2019sinkhorn} relaxes strict mass conservation through divergence regularization, providing robustness to outliers. Recent work has applied spectral-domain UOT for imputation~\citep{wang2025optimal} to handle regime shifts. However, frequency-domain approaches sacrifice temporal localization, remaining insensitive to transient events critical for data valuation.

\section{Preliminaries}
\label{sec:preliminaries}



\subsection{Optimal Transport}
\label{sec:ot}
OT provides a framework for comparing distributions by quantifying the minimal cost to transform a source $\mu_s = \sum_{i=1}^{n} a_i \delta_{z_i}$ into a target $\mu_t = \sum_{j=1}^{m} b_j \delta_{z'_j}$~\citep{Villani2009,PeyreCuturi2019}. 
Given a cost matrix $\mathbf{C}$ encoding pairwise distances, the Wasserstein distance finds an optimal transport plan $\mathbf{T}$ where each entry $T_{ij}\ge 0$ specifies the mass moved from $z_i$ to $z'_j$:
\begin{equation}
    \mathcal{W}(\mu_s, \mu_t) = \min_{\mathbf{T} \in \Pi(\mu_s, \mu_t)} \langle \mathbf{T}, \mathbf{C} \rangle,
    \label{eq:wasserstein}
\end{equation}
where $\Pi(\mu_s, \mu_t)$ enforces mass conservation: $\mathbf{T} \mathbf{1}_m = \mathbf{a}$ and $\mathbf{T}^\top \mathbf{1}_n = \mathbf{b}$.

\textbf{Unbalanced Optimal Transport (UOT).} Standard OT requires all mass to be transported, UOT relaxes this constraint through KL-divergence penalties~\citep{chizat2018unbalanced,sejourne2019sinkhorn}:
{\small
\begin{equation}
\mathcal{W}_{\textup{UOT}}^{\kappa}(\mu_s, \mu_t) = \min_{\mathbf{T} \geq 0} \langle \mathbf{T}, \mathbf{C} \rangle + \kappa D_{\text{KL}}(\mathbf{T} \mathbf{1}_m \| \mathbf{a}) + \kappa D_{\text{KL}}(\mathbf{T}^\top \mathbf{1}_n \| \mathbf{b}),
    \label{eq:uot}
\end{equation}
}
where $\kappa > 0$ controls the trade-off between transport cost and mass conservation. This enables \emph{selective matching}: dissimilar points can remain partially unmatched rather than forced into high-cost alignments. The dual formulation yields optimal potentials $(\mathbf{f}^*, \mathbf{g}^*)$ satisfying $\nabla_{\mathbf{a}}\mathcal{W}_{\textup{UOT}}^{\kappa}=\mathbf{f}^*$ and $\nabla_{\mathbf{b}}\mathcal{W}_{\textup{UOT}}^{\kappa}=\mathbf{g}^*$, which encode each point's marginal contribution to the discrepancy.

\subsection{Problem Formulation}
\label{sec:problem}

\textbf{Setup.} 
Let $\mathbf{X}_{\text{eval}} \in \mathbb{R}^{T_{\text{eval}} \times d}$ denote a time series to be valued, and $\mathbf{X}_{\text{ref}} \in \mathbb{R}^{T_{\text{ref}} \times d}$ a reference time series representing the target distribution against which data quality is assessed. The specific choice of reference depends on the application, for example, validation sequences for forecasting tasks, normal segments for anomaly detection, or verified labeled data for noisy label detection. To preserve temporal dependencies, we partition each time series into overlapping segments using a sliding window of length $L$ with stride $S$, yielding $n$ evaluated segments $\mathcal{D}_{\text{eval}} = \{(\mathbf{x}_i, y_i)\}_{i=1}^n$ and $m$ reference segments $\mathcal{D}_{\text{ref}} = \{(\mathbf{x}_j', y_j')\}_{j=1}^m$, where $\mathbf{x}_i, \mathbf{x}_j' \in \mathbb{R}^{L \times d}$ and labels $y_i, y_j' \in \mathcal{Y}$ encode task-specific information when available. These segments induce empirical distributions $\mu_{\text{eval}}$ and $\mu_{\text{ref}}$ over the segment space $\mathcal{X}$.

\textbf{Objective.} We aim to construct a valuation function $v \colon \mathcal{X} \to \mathbb{R}$ that quantifies each segment's intrinsic contribution to downstream task performance without pre-specifying a learning algorithm. Formally, the valuation function must satisfy the following properties: 
\begin{enumerate}[label=(\arabic*), topsep=0pt, itemsep=0pt, parsep=0pt, leftmargin=*]
    \item \textit{Learning-agnostic}: Independent of any specific model architecture or training procedure, generalizing across diverse tasks;
    \item \textit{Robustness}: Stable under non-stationarity and distributional shifts inherent to real-world time series;
    \item \textit{Computational efficiency}: Scalable to large datasets without requiring model retraining for each data point.
\end{enumerate}

\section{The \textsc{TimeLAVA} Framework}
\label{sec:timelava}

We now present the \textsc{TimeLAVA} framework for learning-agnostic 
time series data valuation. Our approach addresses two fundamental challenges: 
(1) defining a meaningful distance between temporal segments that captures 
both transient events and sustained patterns, and (2) computing robust 
valuations under the non-stationarity inherent to real-world time series. 
To this end, we adopt a wavelet-based segment distance that captures 
multi-scale temporal structure (\S\ref{subsec:wavelets}), and integrate 
it with UOT to enable robust selective matching 
(\S\ref{subsec:wsw}). We then derive an efficient valuation scheme via 
sensitivity analysis of the dual formulation (\S\ref{subsec:valuation}). 
Figure~\ref{fig:workflow} illustrates the complete workflow.

\subsection{Capturing Temporal Patterns with Wavelets}
\label{subsec:wavelets}

A central objective in time series data valuation is to identify the 
segments that contribute most positively or negatively to downstream 
task performance. These influential segments can manifest in multiple 
forms: persistent trends shaping long-term behavior, localized dynamics 
capturing short-term fluctuations, or transient events.

\textbf{Why Wavelets?} Fourier transforms decompose signals 
into global frequency components but discard temporal locality~\citep{bloomfield2004fourier}. The 
Discrete Wavelet Transform (DWT) addresses this limitation by providing 
a joint time-frequency representation~\citep{heil1989continuous,rhif2019wavelet}. 
As stated in Figure~\ref{fig:wavelet_fourier_comparison}, while 
Fourier analysis reveals overall frequency content, wavelets preserve 
both the timing and frequency characteristics of transient events. This 
localization property is particularly valuable for non-stationary time 
series where distributional properties evolve over time.


\begin{definition}[Discrete Wavelet Transform~\citep{heil1989continuous}]
\label{def:dwt}
For a segment $\mathbf{x} = [x_0, x_1, \ldots, x_{L-1}]^\top \in \mathbb{R}^L$, 
the DWT coefficients are:
\begin{equation}
    \Psi(\mathbf{x})_{j,k} = \sum_{l=0}^{L-1} x_l \psi_{j,k}[l], 
    \quad j, k \in \mathbb{Z},
\end{equation}
where $\psi_{j,k}[l] = 2^{-j/2} \psi(2^{-j}l - k)$ are wavelets obtained 
by scaling and translating a mother wavelet $\psi$. For multivariate 
series, the DWT is applied independently along each feature dimension.
\end{definition}

\begin{definition}[Wavelet Distance]
\label{def:wavelet_distance}
For two segments $\mathbf{x}_i, \mathbf{x}_j \in \mathbb{R}^{L}$, the 
wavelet distance is:
\begin{equation}
    d_{\textup{wav}}(\mathbf{x}_i, \mathbf{x}_j) = 
    \| \Psi(\mathbf{x}_i) - \Psi(\mathbf{x}_j) \|_1.
    \label{eq:wavelet_distance}
\end{equation}
\end{definition}

\paragraph{Implementation.} We use the Daubechies-4 (db4) wavelet with 2-level decomposition in all experiments, which balances temporal localization and frequency resolution.


\begin{lemma}[Proof in Appendix~\ref{app:proof_lemma1}]
\label{lemma:wavelet_metric}
The wavelet distance $d_{\textup{wav}}$ is a valid metric, satisfying 
non-negativity, identity of indiscernibles, symmetry, and the triangle 
inequality.
\end{lemma}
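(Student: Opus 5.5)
The plan is to show that $d_{\textup{wav}}$ is the pullback of the $\ell_1$ metric under an injective linear map. Then all four axioms follow from those of $\|\cdot\|_1$.

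\textbf{Linearity.} First observe from Definition~\ref{def:dwt} that $\Psi$ is linear in $\mathbf{x}$. Each coefficient $\Psi(\mathbf{x})_{j,k}=\sum_l x_l\psi_{j,k}[l]$ is a linear functional of $\mathbf{x}$. In the finite setting used in practice (db4, 2-level decomposition, finitely many retained coefficients $(j,k)$), $\Psi$ is represented by a matrix $\mathbf{W}$ with $\Psi(\mathbf{x})=\mathbf{W}\mathbf{x}$. Hence $\Psi(\mathbf{x}_i)-\Psi(\mathbf{x}_j)=\Psi(\mathbf{x}_i-\mathbf{x}_j)$, and $d_{\textup{wav}}(\mathbf{x}_i,\mathbf{x}_j)=\|\mathbf{W}(\mathbf{x}_i-\mathbf{x}_j)\|_1$. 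For multivariate segments, apply this dimension-wise and sum the $\ell_1$ norms, which is again an $\ell_1$ norm of a stacked linear map.

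\textbf{Non-negativity, symmetry, triangle inequality.} These come directly from $\ell_1$. Non-negativity holds since norms are non-negative. Symmetry holds because $\|\mathbf{W}(\mathbf{x}_i-\mathbf{x}_j)\|_1=\|-\mathbf{W}(\mathbf{x}_j-\mathbf{x}_i)\|_1$. For the triangle inequality, write $\Psi(\mathbf{x}_i)-\Psi(\mathbf{x}_k)=(\Psi(\mathbf{x}_i)-\Psi(\mathbf{x}_j))+(\Psi(\mathbf{x}_j)-\Psi(\mathbf{x}_k))$ and apply the $\ell_1$ triangle inequality. Linearity is not even needed for these three properties.

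\textbf{Identity of indiscernibles (main obstacle).} One direction is trivial: $\mathbf{x}_i=\mathbf{x}_j$ gives distance $0$. For the converse, $d_{\textup{wav}}=0$ forces $\mathbf{W}(\mathbf{x}_i-\mathbf{x}_j)=\mathbf{0}$, so we need $\ker\mathbf{W}=\{\mathbf{0}\}$. Here I would use that the full multilevel DWT includes the approximation (scaling) coefficients at the coarsest level together with all detail coefficients. For an orthogonal wavelet family such as Daubechies, with a boundary extension/periodization convention, this transform is an orthogonal or at least invertible map (perfect reconstruction). So $\mathbf{W}$ has a left inverse, and $\mathbf{W}\mathbf{v}=0$ implies $\mathbf{v}=0$.

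The subtle point is that Definition~\ref{def:dwt} lists only wavelet coefficients $\psi_{j,k}$. With finitely many scales, these alone annihilate low-frequency content; for example, constant signals have vanishing detail coefficients. I would therefore state explicitly that $\Psi(\mathbf{x})$ denotes the complete decomposition including the coarsest scaling coefficients, as produced by the db4 2-level implementation. I would then invoke the perfect-reconstruction property to conclude injectivity. If padding modes make $\mathbf{W}$ non-square, it suffices that reconstruction recovers $\mathbf{x}$ exactly, which gives a left inverse and hence injectivity.
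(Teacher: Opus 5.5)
Your proposal is correct and takes essentially the same route as the paper. Non-negativity, symmetry and the triangle inequality are inherited from $\|\cdot\|_1$, identity of indiscernibles comes from invertibility (perfect reconstruction) of the DWT, and the case $d>1$ follows by summing channel-wise $\ell_1$ norms. Your explicit requirement that $\Psi$ include the coarsest-level scaling coefficients is a useful refinement: without them, constant signals are annihilated and $d_{\textup{wav}}$ is only a pseudometric, whereas the paper just asserts the DWT is ``typically implemented with a complete basis.''
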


\begin{figure}[t]
\centering
\includegraphics[width=\linewidth]{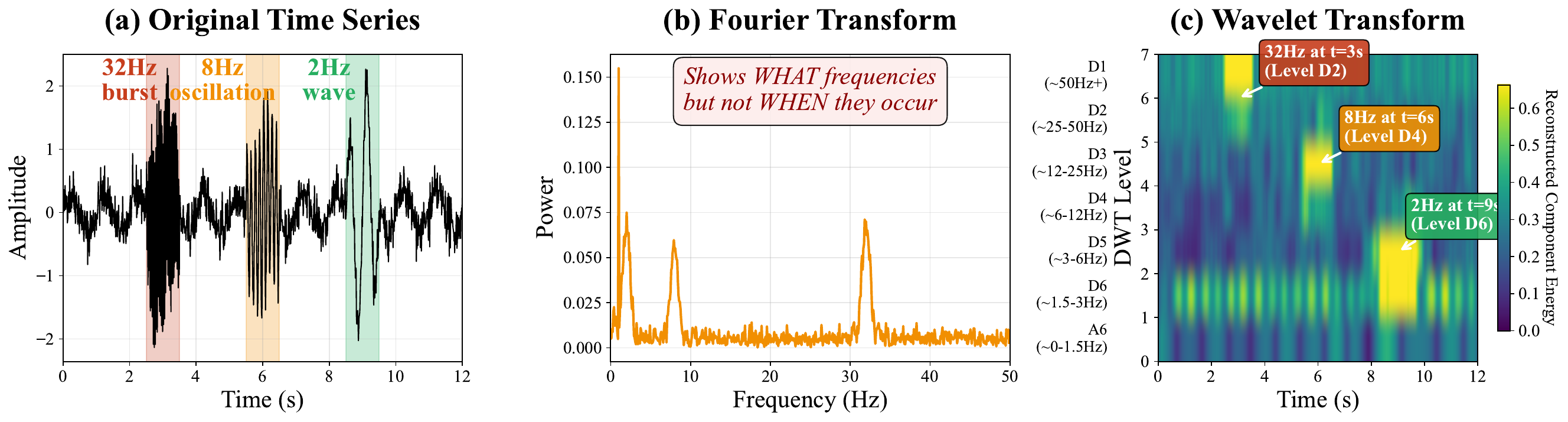}
\caption{\textbf{Wavelet vs.\ Fourier Transform.} 
(a) A signal containing a low-frequency baseline and two localized 
high-frequency events. (b) Fourier transform captures overall frequency 
content but loses temporal localization. (c) Wavelet transform preserves 
both time and frequency, enabling precise localization of transient events.}
\label{fig:wavelet_fourier_comparison}
\end{figure}

\subsection{Selective Wavelet-based Wasserstein Discrepancy}
\label{subsec:wsw}

Having defined a distance between segments, we now formalize the valuation principle and address the challenges posed by non-stationary time series.

\paragraph{Why UOT?} Real-world time series exhibit \emph{non-stationarity}: statistical properties evolve over time, creating coexisting temporal modes. Standard OT enforces strict mass conservation (Eq.~\ref{eq:wasserstein}), forcing every segment to be matched even across fundamentally different modes or to isolated outliers. This constraint causes two problems: (1) erroneous pairings that inflate discrepancy estimates, and (2) unbounded sensitivity to outliers, where a single anomaly can dominate the distance~\citep{fatras2021unbalanced}.

To overcome this vulnerability, we adopt UOT (Eq.~\ref{eq:uot}), which relaxes mass conservation through KL divergence penalties. As illustrated in Figure~\ref{fig:UOTvsOT}, this enables \emph{selective matching}: dissimilar segments can remain partially unmatched rather than forced into high-cost alignments. This mechanism addresses both modal shifts from non-stationarity and isolated outliers from sensor noise or data corruption, both of which manifest computationally as high-cost mismatches that destabilize standard OT.

Building on these principles, we define the Selective Wavelet-based 
Wasserstein discrepancy, integrating the 
robustness of UOT with the wavelet-based ground metric.

\begin{definition}[Selective Wavelet-based Wasserstein Discrepancy]
\label{def:wsw}
Let $\mu_{\textup{eval}}$ and $\mu_{\textup{ref}}$ denote the evaluated 
and reference segment distributions. The Selective Wavelet-based 
Wasserstein discrepancy is:
\begin{align}
    \mathcal{W}_\textup{SW}(\mu_{\textup{eval}}, \mu_{\textup{ref}}) 
    := \min_{\mathbf{T} \geq 0} \; & \langle \mathbf{T}, \mathbf{D} \rangle 
    + \kappa D_{\textup{KL}}(\mathbf{T} \mathbf{1}_m \| \boldsymbol{\Delta}_n) 
    \notag \\
    & + \kappa D_{\textup{KL}}(\mathbf{T}^\top \mathbf{1}_n \| 
    \boldsymbol{\Delta}_m),
    \label{eq:wsw}
\end{align}
where $\kappa > 0$ controls marginal relaxation, $\boldsymbol{\Delta}_n = 
\mathbf{1}_n/n$ and $\boldsymbol{\Delta}_m = \mathbf{1}_m/m$, and the 
cost matrix is:
\begin{align}
\mathbf{D}_{ij} = \underbrace{d_{\text{wav}}(\mathbf{x}_i, \mathbf{x}'_j)}_{\text{wavelet distance}} + \underbrace{c \cdot \mathcal{W}_{d_{\text{wav}}}(\mu_{\text{eval}}(\cdot|y_i), \mu_{\text{ref}}(\cdot|y'_j))}_{\text{label consistency}}
\label{eq:cost_matrix}
\end{align}

Here $d_{\textup{wav}}$ is the wavelet distance (Eq.~\ref{eq:wavelet_distance}), 
$\mu(\cdot \mid y)$ is the conditional distribution given 
label $y$. $c \ge 0$ balances feature similarity and label consistency. 

\end{definition}

\begin{figure*}[t]
\centering
\includegraphics[width=\linewidth]{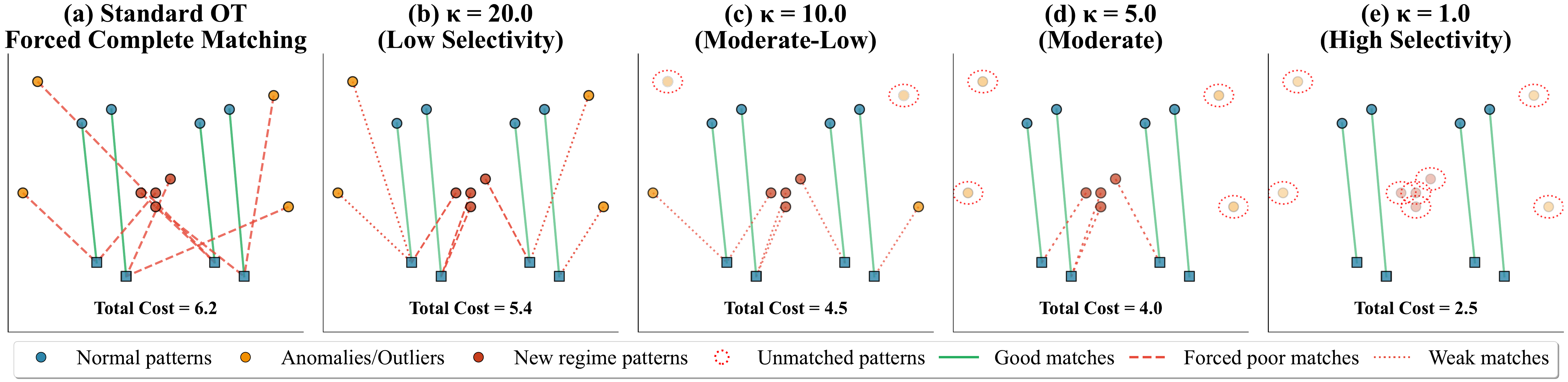}
\caption{\textbf{UOT vs.\ OT.} (a) Standard OT forces complete 
matching, creating costly alignments for outliers and regime shifts. 
(b--e) As $\kappa$ decreases, UOT becomes more selective, allowing 
dissimilar points to remain unmatched rather than forced into poor 
alignments.}
\label{fig:UOTvsOT}
\end{figure*}

\subsection{Efficient Valuation via Dual Sensitivity Analysis}
\label{subsec:valuation}

Having defined the Selective  $\mathcal{W}_\textup{SW}$ discrepancy, we now derive how to efficiently compute the value of each segment without model retraining.

\textbf{Valuation via Distributional Alignment.} 
Following the learning-agnostic paradigm of LAVA~\citep{Just2023LAVA}, 
we define a segment's value through its contribution to distributional 
alignment: segments that help reduce the discrepancy between 
$\mu_{\text{eval}}$ and $\mu_{\text{ref}}$ are beneficial, while those 
that increase it are detrimental. The key insight is that OT-based 
distributional discrepancy is theoretically linked to generalization 
(formalized in Section~\ref{sec:theory}), justifying using 
$\mathcal{W}_{\text{SW}}$ to assess data quality without training any 
predictive model.

\textbf{Segment Valuation.} 
The dual formulation of UOT provides an elegant solution: dual potentials represent the marginal cost of transporting mass at each location~\citep{Villani2009}. A segment with high dual 
potential is costly to match to the reference, indicating it deviates 
from $\mu_{\text{ref}}$, while a segment with low potential matches 
easily, indicating alignment. 


\begin{theorem}[Segment Valuation (Proof in Appendix~\ref{app:proof_theorem4})]
\label{thm:segment_value}
Let $(\mathbf{f}^*, \mathbf{g}^*)$ be the optimal dual variables from 
solving the $\mathcal{W}_{\textup{SW}}$ problem (Eq.~\ref{eq:wsw}), 
and define $\psi_\kappa(u) = \kappa(1 - e^{-u/\kappa})$. The value of 
evaluated segment $(\mathbf{x}_i, y_i)$ is:
\begin{equation}
    v(\mathbf{x}_i) = -\left(\psi_\kappa(\mathbf{f}^*_i) - 
    \frac{1}{n-1}\sum_{j \neq i} \psi_\kappa(\mathbf{f}^*_j)\right).
    \label{eq:segment_value}
\end{equation}
\end{theorem}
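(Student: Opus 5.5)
The plan is to show that $v(\mathbf{x}_i)$ is the negative of a \emph{calibrated} directional derivative of $\mathcal{W}_\textup{SW}$ with respect to the mass on segment $i$, as in LAVA~\citep{Just2023LAVA}. The derivative comes from the dual of Eq.~\ref{eq:wsw} via an envelope (Danskin) argument. I treat the source marginal as a free weight vector $\mathbf{a}$ (later set to $\boldsymbol{\Delta}_n$) and hold $\mathbf{b}=\boldsymbol{\Delta}_m$ and the cost $\mathbf{D}$ fixed.

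\textbf{Step 1: the dual.} Apply Fenchel--Rockafellar duality to the KL-penalized problem, using that the convex conjugate of $\mathbf{u}\mapsto\kappa D_{\textup{KL}}(\mathbf{u}\|\mathbf{a})$ is $\mathbf{p}\mapsto \kappa\sum_i a_i(e^{p_i/\kappa}-1)$. Writing the dual variables with the sign convention $p_i=-f_i$ gives
\begin{equation*}
\mathcal{W}_\textup{SW}(\mathbf{a},\mathbf{b}) = \max_{\mathbf{f}_i+\mathbf{g}_j\le \mathbf{D}_{ij}} \; \sum_{i} a_i\,\psi_\kappa(\mathbf{f}_i) + \sum_j b_j\,\psi_\kappa(\mathbf{g}_j),
\end{equation*}
with $\psi_\kappa(u)=\kappa(1-e^{-u/\kappa})$. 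Strong duality holds because the primal is convex, its value is finite, and the point $\mathbf{T}=\mathbf{a}\mathbf{b}^\top$ is strictly feasible in the relative-interior sense for the KL terms.

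\textbf{Step 2: the gradient.} The objective is linear in $\mathbf{a}$ for each fixed $(\mathbf{f},\mathbf{g})$, so $\mathcal{W}_\textup{SW}$ is a pointwise maximum of linear functions of $\mathbf{a}$. Danskin's theorem then gives $\partial \mathcal{W}_\textup{SW}/\partial a_i=\psi_\kappa(\mathbf{f}^*_i)$ whenever the maximizer is unique, and a supergradient in general. This is the exact form of the informal identity stated after Eq.~\ref{eq:uot}, where "$\nabla_{\mathbf{a}}=\mathbf{f}^*$" should be read through the reparametrization $\psi_\kappa$.

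\textbf{Step 3: the calibrated perturbation.} Shift mass $\epsilon$ onto segment $i$ and remove it uniformly from the others. This is the direction $\mathbf{e}_i-\frac{1}{n-1}\sum_{j\neq i}\mathbf{e}_j$, which preserves total mass so that valuations stay comparable across $i$, matching LAVA's calibration. By the chain rule the first-order change is
\begin{equation*}
\epsilon\Bigl(\psi_\kappa(\mathbf{f}^*_i)-\tfrac{1}{n-1}\textstyle\sum_{j\ne i}\psi_\kappa(\mathbf{f}^*_j)\Bigr).
\end{equation*}
A segment is valuable exactly when adding its mass decreases the discrepancy, so the value is the negative of this rate, which is Eq.~\ref{eq:segment_value}.

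\textbf{Main obstacle: uniqueness of the dual maximizer.} Danskin's theorem needs a unique $(\mathbf{f}^*,\mathbf{g}^*)$ for a true gradient. Strict concavity of $\psi_\kappa$ makes the dual objective strictly concave in each $\mathbf{f}_i$ with $a_i>0$ and each $\mathbf{g}_j$ with $b_j>0$. That pins down the optimal potentials on the supports, and all weights are positive since $\boldsymbol{\Delta}_n$ and $\boldsymbol{\Delta}_m$ are uniform.

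Unlike balanced OT, there is no additive-constant ambiguity here: the KL penalties break the shift invariance $\mathbf{f}\mapsto\mathbf{f}+t$, $\mathbf{g}\mapsto\mathbf{g}-t$. I expect uniqueness to follow from this strict concavity, but I would verify it carefully, since the maximum is taken over a polyhedron.

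If uniqueness fails, I would instead state the result for the entropically regularized (Sinkhorn) version actually used in computation. There the dual is smooth and the potentials are unique, so the formula holds with the regularized $\mathbf{f}^*$.
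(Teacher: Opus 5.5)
Your proposal is correct and follows essentially the same route as the paper's proof: the Fenchel--Rockafellar dual written with $\psi_\kappa$, uniqueness of $(\mathbf{f}^*,\mathbf{g}^*)$ from strict concavity, and an envelope-theorem derivative along the mass-preserving direction $\mathbf{e}_i-\frac{1}{n-1}\sum_{j\neq i}\mathbf{e}_j$. Three refinements: your dual weights $a_i\psi_\kappa(\mathbf{f}_i)$ are the right ones, whereas the paper's dual carries a spurious extra factor $\kappa$ and so ends with $-\frac{\kappa}{n}v(\mathbf{x}_i)$; your uniqueness worry is unnecessary, because a positively weighted separable sum of strictly concave functions is jointly strictly concave and therefore has at most one maximizer over any convex set, polyhedral or not; and since $\mathcal{W}_{\textup{SW}}$ is a maximum of affine functions of $\mathbf{a}$ and hence convex in $\mathbf{a}$, Danskin yields a subgradient, not a supergradient.
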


The negative sign in Eq.~\ref{eq:segment_value} ensures that segments 
with low dual potentials receive positive values, while the centering 
term provides relative assessment against the average contribution of 
other segments. Thus, $v(\mathbf{x}_i) > 0$ indicates the segment helps 
align the distributions, while $v(\mathbf{x}_i) < 0$ indicates it hinders 
alignment. In practice, we use entropy regularization~\citep{Cuturi2013,sejourne2019sinkhorn} 
for efficient computation. Theorem~\ref{thm:entropic_convergence} 
(Appendix~\ref{app:proof_theorem5}) shows that as the regularization 
strength $\varepsilon \to 0$, the approximate values converge to the 
true values with ranking preserved. Algorithm~\ref{alg:w-lava} summarizes 
the complete procedure.


\textbf{Point-wise Valuation.} 
While segment-level values are useful for coarse-grained data curation, 
many applications require finer temporal resolution. For instance, in 
anomaly detection, we need to pinpoint the exact time points where anomalies occur rather than entire segments. Since our sliding window approach produces overlapping segments, each time point appears in multiple segments. We aggregate these segment-level values to obtain point-wise scores:
\begin{equation}
    v(t) = \frac{1}{|\mathcal{S}_t|} \sum_{\mathbf{x}_i \in \mathcal{S}_t} 
    v(\mathbf{x}_i),
    \label{eq:pointwise}
\end{equation}
where $\mathcal{S}_t$ is the set of segments containing time point $t$. 
This aggregation provides context-aware estimation by considering each 
point's contribution across temporal contexts.

\textbf{Computational Complexity.}
The computational complexity is dominated by three components: wavelet 
transforms require $O((n+m)L)$ using the fast pyramidal algorithm, 
pairwise cost matrix computation requires $O(nmL)$, and entropy-regularized 
UOT solving requires $O(nm\log(1/\varepsilon))$. The overall complexity 
is $O(nmL + nm\log(1/\varepsilon))$.

\section{Theoretical Analysis}
\label{sec:theory}

Having established the \textsc{TimeLAVA} framework, we now provide theoretical guarantees justifying its design. We prove that $\mathcal{W}_{\text{SW}}$ (1) exhibits {bounded sensitivity} to contamination, enabling the distinction between isolated anomalies and regime shifts (\S\ref{subsec:robustness}), and (2) {justifies its use as a valuation proxy by linking distributional discrepancy to model-agnostic generalization} (\S\ref{subsec:generalization}). These theoretical guarantees distinguish our approach from heuristic valuation methods and justify the learning-agnostic paradigm for time series.

\textbf{Notation.} 
For theoretical analysis involving labels, we denote the joint distributions 
over segments and labels as $\mu_e^{f_e}$ and $\mu_r^{f_r}$, where 
$f_e: \mathcal{X} \to \mathcal{Y}$ and $f_r: \mathcal{X} \to \mathcal{Y}$ 
are the labeling functions for evaluated and reference data. This generalizes 
the notation $\mu_{\text{eval}}$ and $\mu_{\text{ref}}$ from Section~\ref{sec:preliminaries} 
to explicitly account for potentially different label distributions.

\subsection{Robustness to Non-stationarity}
\label{subsec:robustness}

We now formalize the robustness arguments from \S\ref{subsec:wsw} by proving that $\mathcal{W}_{\text{SW}}$ exhibits bounded sensitivity to contamination. Unlike standard OT where costs grow unboundedly with outlier distance~\citep{fatras2021unbalanced}, UOT ensures stability under non-stationarity.



\begin{theorem}[Robustness to Outlier Contamination (Proof in Appendix~\ref{app:proof_theorem3})]
\label{thm:robustness}
Consider an evaluated distribution $\mu_e^{f_e}$ contaminated by an outlier 
mode $(\mathbf{z}, y_z)$, resulting in 
$\tilde{\mu}_e^{f_e} = \zeta \delta_{(\mathbf{z}, y_z)} + (1-\zeta) \mu_e^{f_e}$ 
with relative mass $\zeta \in (0,1)$. The $\mathcal{W}_{\textup{SW}}$ 
discrepancy between the contaminated and reference distributions satisfies:
\begin{equation}
\begin{split}
\mathcal{W}_{\textup{SW}}(\tilde{\mu}_e^{f_e}, \mu_r^{f_r}) 
&\leq (1-\zeta)\mathcal{W}_{\textup{SW}}(\mu_e^{f_e}, \mu_r^{f_r}) \\
&\quad + 2\zeta\kappa\left(1 - e^{-\bar{d}(\mathbf{z},y_z)/(2\kappa)}\right),
\end{split}
\label{eq:robustness_bound}
\end{equation}
where $\bar{d}(\mathbf{z}, y_z)$ is the average cost $\mathbf{D}$ of 
transporting the outlier to samples in $\mu_r^{f_r}$.
\end{theorem}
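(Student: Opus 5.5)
The plan is to construct a feasible transport plan for the contaminated problem and bound its UOT objective. This mirrors the robustness argument of \citet{fatras2021unbalanced}. Let $\mathbf{T}^*$ be an optimal plan for $\mathcal{W}_{\textup{SW}}(\mu_e^{f_e},\mu_r^{f_r})$. We work with general (not necessarily normalized) marginals $\alpha=\mu_e^{f_e}$, $\beta=\mu_r^{f_r}$. The candidate plan for $\tilde\alpha=\zeta\delta_{(\mathbf z,y_z)}+(1-\zeta)\alpha$ against $\beta$ is the mixture
\begin{equation}
\tilde{\mathbf{T}} = (1-\zeta)\mathbf{T}^* + \zeta\, \mathbf{T}_z,
\end{equation}
where $\mathbf{T}_z = s\,\delta_{(\mathbf z,y_z)}\otimes\beta$ sends the outlier to the reference with a free total mass $s\ge 0$, to be optimized later.

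\textbf{Convexity of the objective.} The objective $F(\mathbf{T};\alpha,\beta)=\langle \mathbf T,\mathbf D\rangle+\kappa D_{\textup{KL}}(\mathbf T_1\|\alpha)+\kappa D_{\textup{KL}}(\mathbf T_2\|\beta)$ is jointly convex in $(\mathbf T,\alpha)$, because the unnormalized KL $D_{\textup{KL}}(p\|q)=\sum p\log(p/q)-p+q$ is jointly convex. Since $\tilde{\mathbf T}$ and $\tilde\alpha$ are the same $\zeta$-mixtures, we get
\begin{equation}
F(\tilde{\mathbf T};\tilde\alpha,\beta)\le (1-\zeta)F(\mathbf T^*;\alpha,\beta)+\zeta F(\mathbf T_z;\delta_{(\mathbf z,y_z)},\beta).
\end{equation}
The second marginal is $\beta$ in both terms, so convexity applies there too. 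The first term equals $(1-\zeta)\mathcal{W}_{\textup{SW}}(\mu_e^{f_e},\mu_r^{f_r})$. The label-consistency part of $\mathbf D$ is fixed data here, since the conditional Wasserstein terms are treated as part of the cost, so linearity in $\mathbf T$ is unaffected.

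\textbf{The single-atom problem.} Next we evaluate $\min_s F(s\,\delta_z\otimes\beta;\delta_z,\beta)$. Its three pieces are:
\begin{itemize}
\item the transport cost, $s\bar d$ with $\bar d=\int \mathbf D(\mathbf z,y_z;\cdot)\,d\beta$;
\item the source KL, $\kappa(s\log s-s+1)$;
\item the target KL, which equals $\kappa(s\log s - s+1)$ because $\beta$ has unit mass.
\end{itemize}
The total $s\bar d+2\kappa(s\log s-s+1)$ is minimized at $s=e^{-\bar d/(2\kappa)}$. The minimum value is $2\kappa(1-e^{-\bar d/(2\kappa)})$, which is exactly the second term of Eq.~\ref{eq:robustness_bound}. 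Taking the infimum over plans for $\mathcal{W}_{\textup{SW}}(\tilde\mu_e^{f_e},\mu_r^{f_r})$ then finishes the proof.

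\textbf{Main obstacle.} The hard step is justifying the convexity split cleanly. The KL terms are taken relative to measures that are themselves mixed, so I need joint convexity of $(p,q)\mapsto D_{\textup{KL}}(p\|q)$ in its unnormalized form, and that fact should be stated explicitly. A related subtlety is that the paper's Eq.~\ref{eq:wsw} uses uniform weights $\boldsymbol\Delta_n$. I would restate the discrepancy for general weighted measures so that $\tilde\mu_e^{f_e}$ is admissible, then check that the mass bookkeeping (unit total mass of $\beta$) makes the two KL contributions coincide. If joint convexity becomes awkward, the fallback is to expand the KL of the mixture directly and use the log-sum inequality term by term.
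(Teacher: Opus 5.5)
Your proposal is correct and follows essentially the same route as the paper: you take the mixture plan $(1-\zeta)\mathbf{T}^* + \zeta\phi\,(\delta_{\mathbf{z}}\otimes\boldsymbol{\Delta}_m)$ (your $s\,\delta_{\mathbf z}\otimes\beta$), bound both marginal penalties by joint convexity of the generalized KL, and minimize $\phi\bar d + 2\kappa(\phi\log\phi-\phi+1)$ at $\phi^*=e^{-\bar d/(2\kappa)}$. Like the paper, you treat $\mathbf D$, including its label-consistency term, as fixed under contamination, and this convention is what lets the transport cost split linearly.
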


This result shows that even an infinitely distant outlier ($\bar{d} \to \infty$) contributes at most $2\zeta\kappa$ to the discrepancy, in stark contrast to standard OT where the impact is unbounded~\citep{fatras2021unbalanced}. 
Since non-stationarity manifests as collections of mismatched points, this single-outlier stability naturally extends to regime shifts, with total influence bounded proportionally to the new regime's mass. This bounded-influence property makes the resulting valuations both stable and informative: isolated anomalies produce extreme negative scores, 
while segments within a regime shift receive moderately low scores. The extremity of the score thus distinguishes genuine anomalies from systemic shifts (see Appendix~\ref{app:outlier_and_nonstationarity}).

\subsection{Connection to Learning-Agnostic Generalization}
\label{subsec:generalization}

To justify using distributional alignment for data valuation, we prove that 
$\mathcal{W}_{\text{SW}}$ directly controls the gap between training 
and reference performance. We consider any predictive model 
$f: \mathcal{X} \to [0,1]$ with loss function 
$\mathcal{L}: \mathcal{Y} \times [0,1] \to \mathbb{R}_+$ that is 
$k$-Lipschitz in both arguments. We assume $f$ is $\epsilon$-Lipschitz 
with respect to $d_{\textup{wav}}$, and that predictions and labels 
are bounded: $|f(\mathbf{x})|, |y| \le M$. We first introduce a 
regularity condition on labeling functions.

\begin{definition}[Probabilistic Cross-Lipschitzness in the Wavelet Domain]
\label{def:prob_cross_lipschitz}
Two labeling functions $f_e: \mathcal{X} \to \mathcal{Y}$ and 
$f_r: \mathcal{X} \to \mathcal{Y}$ are 
$(\epsilon_{er}, \delta_{\textup{wav}})$-probabilistic cross-Lipschitz 
with respect to a coupling $\pi$ over $\mathcal{X} \times \mathcal{X}$ if:
\begin{equation}
\mathbb{P}_{(\mathbf{x}_e,\mathbf{x}_r)\sim\pi}\left[ 
|f_e(\mathbf{x}_e) - f_r(\mathbf{x}_r)| > \epsilon_{er} \cdot 
d_{\textup{wav}}(\mathbf{x}_e, \mathbf{x}_r) \right] \leq \delta_{\textup{wav}}.
\end{equation}
\end{definition}

Intuitively, this condition requires that the two labeling functions assign similar labels to time series segments that are close in the wavelet domain with high probability. 

\begin{theorem}[Performance Bound (Proof in Appendix~\ref{app:proof_theorem2})]
\label{thm:perf_bound}
Under the regularity conditions above, if $f_e$ and $f_r$ are 
$(\epsilon_{er}, \delta_{\textup{wav}})$-probabilistic cross-Lipschitz 
with respect to the optimal coupling from $\mathcal{W}_{\textup{SW}}$, then:
\begin{equation}
\begin{aligned}
\mathbb{E}_{\mu_r^{f_r}} [\mathcal{L}(y, f(\mathbf{x}))] 
\le{}& \mathbb{E}_{\mu_e^{f_e}} [\mathcal{L}(y, f(\mathbf{x}))] \\
&+ k\epsilon\,\mathcal{W}_{\textup{SW}}(\mu_e^{f_e}, \mu_r^{f_r}) 
+ 2kM\delta_{\textup{wav}}.
\end{aligned}
\label{eq:gen_bound}
\end{equation}
\end{theorem}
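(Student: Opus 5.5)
The plan follows the LAVA-style argument (Just et al., 2023): couple the two joint distributions through the optimal plan $\mathbf{T}^*$ of $\mathcal{W}_{\textup{SW}}$ and control the loss difference pointwise along the coupling. Let $\pi^*$ be the (normalized) optimal coupling, with first marginal matching $\mu_e^{f_e}$ and second marginal $\mu_r^{f_r}$. The unbalanced case is handled at the end. We write
\begin{equation*}
\mathbb{E}_{\mu_r^{f_r}}[\mathcal{L}(y,f(\mathbf{x}))]-\mathbb{E}_{\mu_e^{f_e}}[\mathcal{L}(y,f(\mathbf{x}))]
=\mathbb{E}_{\pi^*}\bigl[\mathcal{L}(f_r(\mathbf{x}_r),f(\mathbf{x}_r))-\mathcal{L}(f_e(\mathbf{x}_e),f(\mathbf{x}_e))\bigr].
\end{equation*}
We then apply the $k$-Lipschitzness of $\mathcal{L}$ in each argument. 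This bounds the integrand by
\begin{equation*}
k\,|f_r(\mathbf{x}_r)-f_e(\mathbf{x}_e)| + k\,|f(\mathbf{x}_r)-f(\mathbf{x}_e)|.
\end{equation*}

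The second term is at most $k\epsilon\, d_{\textup{wav}}(\mathbf{x}_e,\mathbf{x}_r)$ because $f$ is $\epsilon$-Lipschitz. For the first term, we split on the event $E=\{|f_e(\mathbf{x}_e)-f_r(\mathbf{x}_r)|\le \epsilon_{er}\, d_{\textup{wav}}(\mathbf{x}_e,\mathbf{x}_r)\}$.
\begin{itemize}
\item On $E$, the term is at most $k\epsilon_{er} d_{\textup{wav}}$.
\item On $E^c$, whose $\pi^*$-probability is at most $\delta_{\textup{wav}}$ by Definition~\ref{def:prob_cross_lipschitz}, we use the crude bound $|f_e-f_r|\le 2M$ (labels bounded by $M$). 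This contributes $2kM\delta_{\textup{wav}}$.
\end{itemize}
To match the stated constant $k\epsilon$, I would absorb $\epsilon_{er}$ into $\epsilon$: either take $\epsilon\ge\max(\epsilon,\epsilon_{er})$, or observe that the label-consistency term $c\,\mathcal{W}_{d_{\textup{wav}}}(\cdot\mid y,\cdot\mid y')$ in $\mathbf{D}$ dominates the label discrepancy for suitable $c$. Collecting the pieces gives $\mathbb{E}_{\pi^*}[\cdot]\le k\epsilon\,\mathbb{E}_{\pi^*}[\mathbf{D}]+2kM\delta_{\textup{wav}}$, since $d_{\textup{wav}}\le \mathbf{D}$ entrywise because $c\ge 0$.

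It then remains to show $\mathbb{E}_{\pi^*}[\mathbf{D}]=\langle \mathbf{T}^*,\mathbf{D}\rangle\le \mathcal{W}_{\textup{SW}}$. This follows because the KL penalties are nonnegative. Note that $D_{\textup{KL}}$ here must be the generalized KL for unnormalized measures, which is still nonnegative.

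The main obstacle is that $\mathbf{T}^*$ from UOT does not have the marginals $\boldsymbol{\Delta}_n,\boldsymbol{\Delta}_m$, so the first display is not exact. I would first interpret the theorem, as the statement does, ``with respect to the optimal coupling'', and treat $\mu_e^{f_e},\mu_r^{f_r}$ as the (reweighted) marginals of $\mathbf{T}^*$ normalized to unit mass. This interpretation is implicitly the setting in which the cross-Lipschitz assumption is posed. If normalization by the total mass $s=\mathbf{1}^\top\mathbf{T}^*\mathbf{1}$ is needed, the cost term becomes $\langle\mathbf{T}^*,\mathbf{D}\rangle/s$. In that case I would argue that the bound holds up to this factor, or assume the balanced regime where $s=1$.

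A fallback is to add the marginal-mismatch terms explicitly: bound $\mathbb{E}_{\mu}\mathcal{L}-\mathbb{E}_{\mathbf{T}^*\mathbf{1}}\mathcal{L}$ by $\sup\mathcal{L}\cdot\|\mu-\mathbf{T}^*\mathbf{1}\|_1$ and control that via Pinsker from the KL penalties inside $\mathcal{W}_{\textup{SW}}/\kappa$. This would give a slightly weaker bound that reduces to the stated one as $\kappa\to\infty$.
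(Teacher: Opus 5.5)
\textbf{Where you agree with the paper.} Up to the point you flag, this is the paper's proof. The paper inserts the intermediate term $\mathcal{L}(y_r,f(\mathbf{x}_e))$, which produces your two Lipschitz pieces. It bounds the label piece by the same split on the violation set, with the crude bound $2M$ off it. It then uses $d_{\textup{wav}}\le\mathbf{D}$ and nonnegativity of the generalized KL terms to get $\int d_{\textup{wav}}\,d\pi^*\le\langle\mathbf{D},\pi^*\rangle\le\mathcal{W}_{\textup{SW}}$.

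The coefficient this argument actually yields is $k(\epsilon+\epsilon_{er})$. The paper removes $\epsilon_{er}$ by assuming $\epsilon_{er}\le c\epsilon$ and renaming $(1+c)\epsilon$ as $\epsilon$. Your two options for absorbing it both need adjustment:
\begin{itemize}
\item Enlarging $\epsilon$ to $\max(\epsilon,\epsilon_{er})$ is off by up to a factor $2$; you need $\epsilon+\epsilon_{er}$.
\item Letting $c\,\mathcal{W}_{d_{\textup{wav}}}(\mu_e(\cdot|y),\mu_r(\cdot|y'))$ pay for $|y-y'|$ needs a hypothesis linking those two quantities. The statement does not contain one, and the paper simply discards that term.
\end{itemize}

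\textbf{The gap.} The obstacle you single out is genuine, and the paper does not close it. It just asserts that the optimal $\mathcal{W}_{\textup{SW}}$ plan ``is a coupling between'' $\mu_e^{f_e}$ and $\mu_r^{f_r}$, which is false for finite $\kappa$. The gap cannot be closed, because the displayed inequality is false. Here is a counterexample.
\begin{itemize}
\item Take $c=0$, $\mu_e^{f_e}=\delta_{(\mathbf{x}_1,0)}$ and $\mu_r^{f_r}=\delta_{(\mathbf{x}_2,0)}$, with $d=d_{\textup{wav}}(\mathbf{x}_1,\mathbf{x}_2)\in(0,1/\epsilon]$.
\item Take $f(\mathbf{x})=\min\{1,\epsilon\, d_{\textup{wav}}(\mathbf{x},\mathbf{x}_1)\}$, $\mathcal{L}(y,p)=k|y-p|$ and $M=1$. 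All hypotheses hold.
\item The labels agree, so $\delta_{\textup{wav}}=0$ (with $\epsilon_{er}=0$).
\item The loss gap is $k\epsilon d$.
\item The scalar problem $\min_{t\ge 0}\{td+2\kappa(t\log t-t+1)\}$ gives $\mathcal{W}_{\textup{SW}}=2\kappa(1-e^{-d/(2\kappa)})<d$.
\end{itemize}
So the claimed right-hand side is strictly smaller than the left for every finite $\kappa$. Your reinterpretation therefore proves a different statement. As you suspected, it also still needs the factor $1/s$: here $s=e^{-d/(2\kappa)}$ and $\langle\mathbf{T}^*,\mathbf{D}\rangle/s=d>\mathcal{W}_{\textup{SW}}$.

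\textbf{What can be salvaged.} Your fallback is the right kind of corrected theorem, and it can be made quantitative.
\begin{itemize}
\item Scaling $\mathbf{T}^*\mapsto\lambda\mathbf{T}^*$ and using first-order optimality at $\lambda=1$ gives $\mathcal{W}_{\textup{SW}}=2\kappa(1-s)$, where $s$ is the total mass of $\mathbf{T}^*$.
\item The same identity, combined with the log-sum inequality, shows that the normalized marginals of $\mathbf{T}^*$ are within KL divergence $2\log(1/s)$ of $\boldsymbol{\Delta}_n$ and $\boldsymbol{\Delta}_m$ respectively.
\item Your argument, run on $\mathbf{T}^*/s$, gives the bound between the normalized marginals with $k(\epsilon+\epsilon_{er})\,\mathcal{W}_{\textup{SW}}/s$.
\item Pinsker then transfers this back to $\mu_e^{f_e},\mu_r^{f_r}$ at a cost of order $\sqrt{\log(1/s)}$ times the oscillation of $\mathcal{L}$.
\end{itemize}
All corrections vanish as $\kappa\to\infty$. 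This is because $1-s=\mathcal{W}_{\textup{SW}}/(2\kappa)$, and $\mathcal{W}_{\textup{SW}}$ is at most the balanced OT cost with ground cost $\mathbf{D}$.
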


This bound establishes that $\mathcal{W}_{\text{SW}}$ controls model-agnostic generalization: smaller discrepancy guarantees better performance for any well-behaved model. This directly justifies our valuation scheme (Theorem~\ref{thm:segment_value}): a segment's marginal contribution to reducing $\mathcal{W}_{\text{SW}}$, 
captured by its dual potential, quantifies its impact on downstream tasks without training any specific model. Compared to LAVA's i.i.d. bound~\citep{Just2023LAVA}, our result accounts for temporal structure via wavelets and provides robustness via UOT. Combined with Theorem~\ref{thm:robustness}, this establishes \textsc{TimeLAVA} as both stable under non-stationarity and theoretically grounded for learning-agnostic valuation.

\section{Use Cases of \textsc{TimeLAVA}}
\label{sec:experiments}

We conduct a comprehensive empirical study to evaluate \textsc{TimeLAVA} across diverse time series data valuation scenarios. Our experiments span multiple real-world datasets and tasks, assessing point-wise valuation in anomaly detection, and segment-wise valuation in both data selection/pruning and temporal label noise detection, thereby covering both predictive and diagnostic use cases.

\begin{figure*}[t]
\centering
\includegraphics[width=\linewidth]{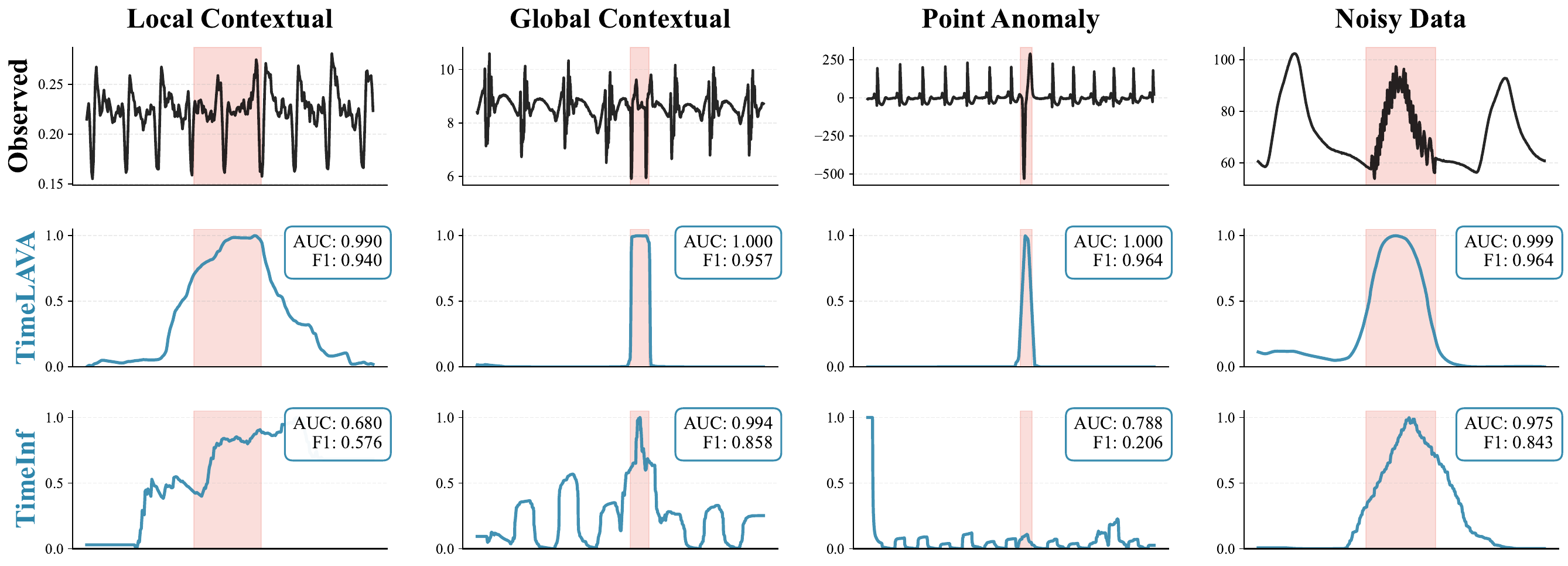}
\caption{
\textbf{Qualitative analysis on the UCR Time Series Anomaly Archive.}
Each column corresponds to one anomaly type: \emph{Local Contextual}, \emph{Global Contextual}, \emph{Point Anomaly}, and \emph{Noisy Data}. The first row shows the observed time series with ground-truth anomalous regions shaded in red. Subsequent rows plot the normalized negated anomaly scores produced by each method. Boxes show AUC/F1 scores (higher is better); higher curves in red regions with flat responses elsewhere indicate better localization and fewer false alarms.}
\label{fig:UCR}
\end{figure*}

\subsection{Anomaly Detection}
\label{sec:anomalydetection}
In many time series applications, parts of the training data may contain anomalies from sensor faults, rare events, or unexpected operational states that deviate from normal temporal dynamics~\citep{xu2021anomaly}. This task aims to evaluate whether \textsc{TimeLAVA}’s value estimates can distinguish anomalous from normal segments without prior location information, enabling robust anomaly identification.

\textbf{Experimental Setup.} 
Following~\citep{jiang2022softpatch,Zhang2025TimeInf}, all methods compute anomaly scores directly on potentially contaminated data with unknown anomaly proportion. \textsc{TimeLAVA} approaches this as a data valuation problem: we compute point-wise values via Eq.~\ref{eq:pointwise} as anomaly scores, identifying low-value points as anomalies. This aligns with the intuition that anomalies increase distributional discrepancy with the reference set (Theorem~\ref{thm:segment_value}). A clean validation series serves as the reference distribution, and we set $c=0$ in Eq.~\ref{eq:cost_matrix} for unsupervised detection.

\textbf{Datasets.} We evaluate \textsc{TimeLAVA} on {several} widely used benchmark datasets: UCR~\citep{wu2021current}, NAB~\citep{nab}, SMD~\citep{su2019robust}, SMAP and MSL~\citep{hundman2018detecting}, PSM~\citep{abdulaal2021practical}, SWaT~\citep{mathur2016swat}, WADI~\citep{wadi}, and KDD-CUP99~\citep{kdd}. These datasets span diverse domains and exhibit varying sampling frequencies, sequence lengths, and anomaly characteristics, providing a comprehensive basis for evaluation.

\textbf{Baselines.} We compare \textsc{TimeLAVA} against two categories of baselines: (1) time series data valuation methods: LWCV \citep{ghosh2020approximate}, and the recent state-of-the-art TimeInf \citep{Zhang2025TimeInf}, and (2) anomaly detection algorithms: Isolation Forest \citep{liu2008isolation}, and an LSTM-based detector \citep{hundman2018detecting}, recent deep learning approaches (DCdetector \citep{yang2023dcdetector}, Anomaly Transformer \citep{xu2021anomaly}, {ModernTCN \citep{donghao2024moderntcn} and CATCH \citep{wu2025catch})}.

\textbf{Results.} We qualitatively evaluate on the UCR~\citep{wu2021current}, examining how different methods assign values to normal and anomalous across four representative anomaly types (Fig.~\ref{fig:UCR}). \textsc{TimeLAVA} produces discriminative scores that robustly identify diverse anomalies with high values, while keeping scores in normal regions low to prevent false alarms. Quantitative evaluations on multivariate benchmark datasets (Table~\ref{tab:anomaly_results}) align with these qualitative findings. \textsc{TimeLAVA} attains the highest or near-highest AUC and F1 scores across all datasets, while preserving competitive computational efficiency. These results demonstrate that \textsc{TimeLAVA}'s valuation scores accurately reflect true data quality: segments assigned negative values correspond to anomalies, validating the effectiveness of learning-agnostic valuation for time series quality assessment. Additional experimental details, ablation studies validating both UOT and wavelet components, and parameter sensitivity analyses are provided in Appendix~\ref{app:anomalydetection}. 


\begin{table*}[t]
\centering
\caption{
\textbf{Quantitative evaluation of anomaly detection performance and runtime.}
Higher AUC and F1 scores indicate better detection accuracy. Across four real-world datasets, \textsc{TimeLAVA} achieves superior or competitive accuracy over established baselines, while maintaining low computational cost.
}
\footnotesize  
{
\begin{tabular}{l
ccc|
ccc|
ccc|
ccc}
\toprule
 & \multicolumn{3}{c|}{SMD} & \multicolumn{3}{c|}{SMAP} & \multicolumn{3}{c|}{MSL} & \multicolumn{3}{c}{PSM} \\
\cmidrule(lr){2-4} \cmidrule(lr){5-7} \cmidrule(lr){8-10} \cmidrule(lr){11-13}
 & AUC & F1 & Time (s) & AUC & F1 & Time (s) & AUC & F1 & Time (s) & AUC & F1 & Time (s) \\
\midrule
Isolation Forest      & 0.82 & \underline{0.37} & {0.30} & 0.67 & 0.33 & {0.14} & 0.68 & 0.31 & {0.11} & \underline{0.71} & \underline{0.51} & {1.12} \\
LSTM                  & 0.79 & 0.30 & 307.08 & 0.55 & 0.31 & 245.52 & 0.70 & 0.34 & 8.77  & 0.62 & 0.38 & {817.13} \\
LWCV                  & 0.79 & 0.29 & 3369.52 & 0.63 & 0.26 & 354.30 & 0.65 & 0.32 & 109.84 & {0.56}   & {0.34}   & {1852.07} \\
DCdetector            & 0.70 & 0.25 & 106.70 & 0.67 & \underline{0.35} & 194.52 & 0.69 & 0.34 & 50.13 & {0.49}   & {0.27}   & {4075.59} \\
Anomaly Transformer   & 0.51 & 0.05 & 246.80 & 0.49 & 0.04 & 51.24  & 0.42 & 0.10 & 20.13 & 0.48 & 0.15 & {2823.42} \\
{ModernTCN}                  & 0.72 & 0.15 &  225.66 & 0.46 & 0.12 &  650.70 & 0.63 & 0.12 & 81.56 & 0.59   & {0.09}   & 4257.66 \\
{CATCH}               & 0.81 & 0.24 & 3332.20 & 0.50 & 0.06 & 1314.04 & 0.66 & 0.14 & 677.21  & 0.65 & 0.12 & 1367.47\\
TimeInf               & \underline{0.87} & 0.25 & 20.85 & \underline{0.73} & 0.34 & 10.59 & \underline{0.72} & \underline{0.35} & 6.17 & 0.63 & 0.02 & {350.10} \\
\midrule
\textbf{\textsc{TimeLAVA} (Ours)} & \textbf{0.91} & \textbf{0.52} & 14.01 & \textbf{0.74} & \textbf{0.54} & 0.72 & \textbf{0.81} & \textbf{0.49} & 0.25 & \textbf{0.77} & \textbf{0.58} & 8.16 \\
\bottomrule
\end{tabular}
}
\label{tab:anomaly_results}
\end{table*}

\subsection{Data Pruning and Selection}
\label{sec:datapruningandselection}
A primary objective of data valuation is to identify high-quality segments that improve model performance while detecting corrupted or low-value segments that degrade it. This section evaluates \textsc{TimeLAVA}'s ability to produce 
meaningful value rankings for time series segments.

\textbf{Experimental Setup.} 
We evaluate data valuation methods through two complementary experiments: {(i)} Data Selection: retaining only the top-$k$\% highest-valued segments for training, and {(ii)} Data Pruning: progressively removing the lowest-valued 
segments. To enable controlled evaluation with known ground truth, we inject synthetic noise into 20\% of randomly selected training segments, simulating four common corruption types: Gaussian noise, spike artifacts, linear drift, and scale perturbations. This semi-synthetic approach enables two types of evaluation: downstream forecasting performance after data selection/pruning, and corruption detection accuracy against known labels. All data valuation methods compute per-segment quality scores, rank segments accordingly, and either retain top-$k$\% (selection) or remove bottom-$k$\% (pruning). {We then train} a linear AR model on the selected segments and evaluate forecasting performance (RMSE, $R^2$) on a held-out test set. We set $c=0$ in Eq.~\ref{eq:cost_matrix} since these forecasting tasks focus on temporal pattern valuation without label information.

\textbf{Datasets.} We evaluate on four benchmark time series datasets commonly used in forecasting literature: ETTh1 \citep{etth1}, Traffic, Exchange, and Electricity~\citep{Lai2018Modeling}. Each dataset is segmented into fixed-length, non-overlapping segments and divided into training, validation, and test sets, with the validation set serving as the reference time series.

\begin{figure}[t]
\centering
\includegraphics[width=0.5\columnwidth]{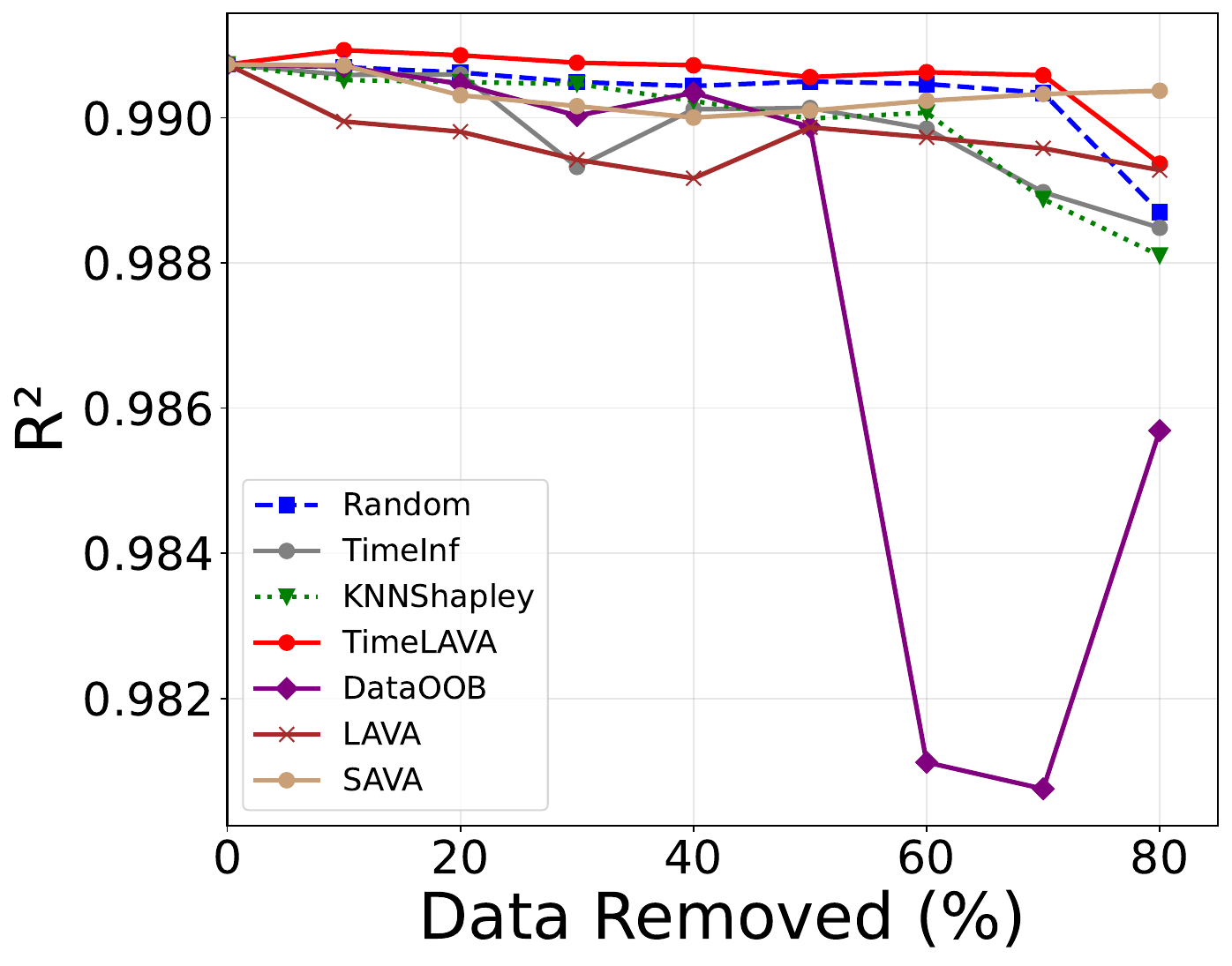}\hfill
\includegraphics[width=0.5\columnwidth]{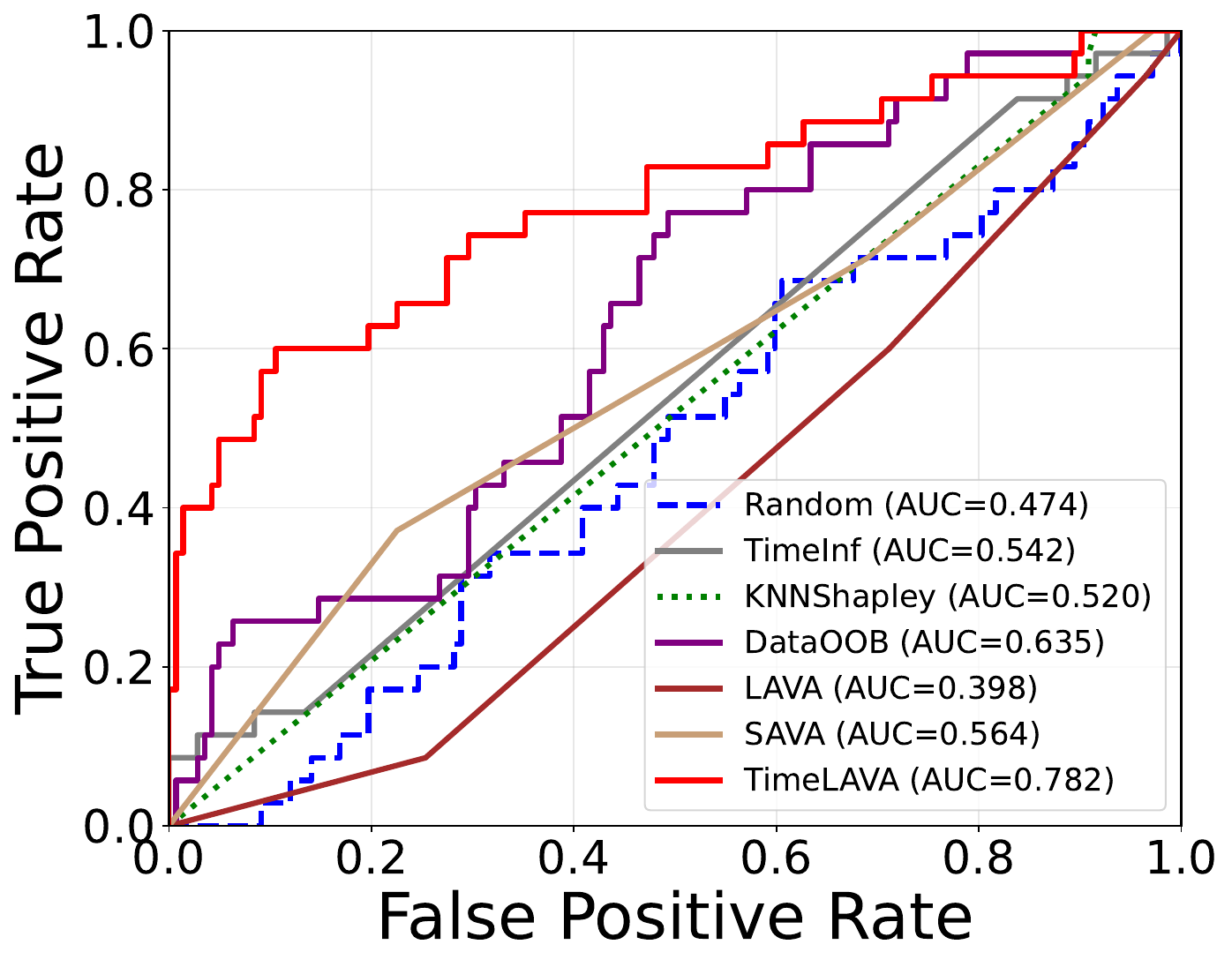}
\caption{\small Data pruning performance ($R^2$) and noise detection (ROC) 
on Exchange dataset with 20\% corrupted segments.}
\label{fig:datapruning}
\end{figure}

\textbf{Baselines.} 
We evaluate \textsc{TimeLAVA} against several representative data valuation methods: TimeInf~\citep{Zhang2025TimeInf} adapts influence function to time series by estimating the effect of removing a segment 
on test loss. KNN-Shapley~\citep{jia2019efficient} approximates Shapley values via $k$NN-based marginal utility estimation. Data-OOB~\citep{kwon2023dataoob} assigns values using out-of-bag predictions from a bagged decision-tree ensemble. LAVA~\citep{Just2023LAVA} measures training–validation alignment via the Wasserstein distance, and SAVA~\citep{kessler2025sava} provides a scalable batch-based variant. A Random baseline is also included for reference.

\textbf{Results.} 
Fig.~\ref{fig:datapruning} demonstrates \textsc{TimeLAVA}'s effective performance in both data quality assessment and noise detection on the Exchange dataset. In the data pruning experiment {(left)}, \textsc{TimeLAVA} maintains the highest $R^2$ as low-value segments are removed, with performance degrading only after removing 45\% of the data, indicating accurate identification of detrimental segments. Other methods show earlier performance degradation, with Data-OOB experiencing a sharp drop after 50\% removal. The ROC curve (right) evaluates corruption detection: since we know which segments were corrupted during injection, we can measure each method's ability to identify them via standard binary classification metrics. {\textsc{TimeLAVA} achieves} the highest AUC {(0.782)} for detecting corrupted segments, substantially outperforming all baselines including TimeInf {(0.542)} and KNN-Shapley {(0.520)}. Additional experimental details and results across all datasets are provided in Appendix \ref{app:datapruning}.
\subsection{Noisy label detection}
\label{sec:labelnoise}
In real-world detection systems, label quality often varies over time due to operational conditions, annotator fatigue, or system degradation~\citep{carpenter2003seasonal,hsieh2016you,nagaraj2024learning}. Traditional approaches assume static noise distributions, fundamentally misspecifying the noise model and leading to suboptimal performance. This task evaluates whether \textsc{TimeLAVA}'s temporal modeling can effectively identify different time-varying label corruption patterns.


\textbf{Experimental Setup.} Following established practices~\citep{Just2023LAVA,jiang2023opendataval}, we adopt a semi-synthetic framework that treats original dataset labels as ground truth and systematically injects temporal noise patterns. Labels are corrupted by flipping to the opposite class at segments selected according to four temporal corruption patterns with average 
noise rate $\eta \in \{0.05, 0.10, 0.15, 0.20\}$: \textit{Random}, \textit{Periodic}, \textit{Decay} and \textit{Growth}~\citep{nagaraj2024learning}. We split each dataset temporally into 70\% evaluated data (with injected noise) and 30\% reference data (clean). All methods compute segment-level value scores and rank segments to identify those most likely to be mislabeled. We evaluate detection performance using F1 score. We set $c=1$ in Eq.~\ref{eq:cost_matrix} to incorporate label information. We compare against the same baseline methods used in \S\ref{sec:datapruningandselection}.

\textbf{Datasets.} We evaluate on three healthcare classification datasets: {Moving}~\citep{human_activity}, {Senior}~\citep{har70+}, and {Blinking}~\citep{eeg_eye}. These tasks involve sequential labeling where temporal corruption patterns can realistically occur in practical applications. 

\begin{figure}[h]
\centering
\includegraphics[width=\columnwidth]{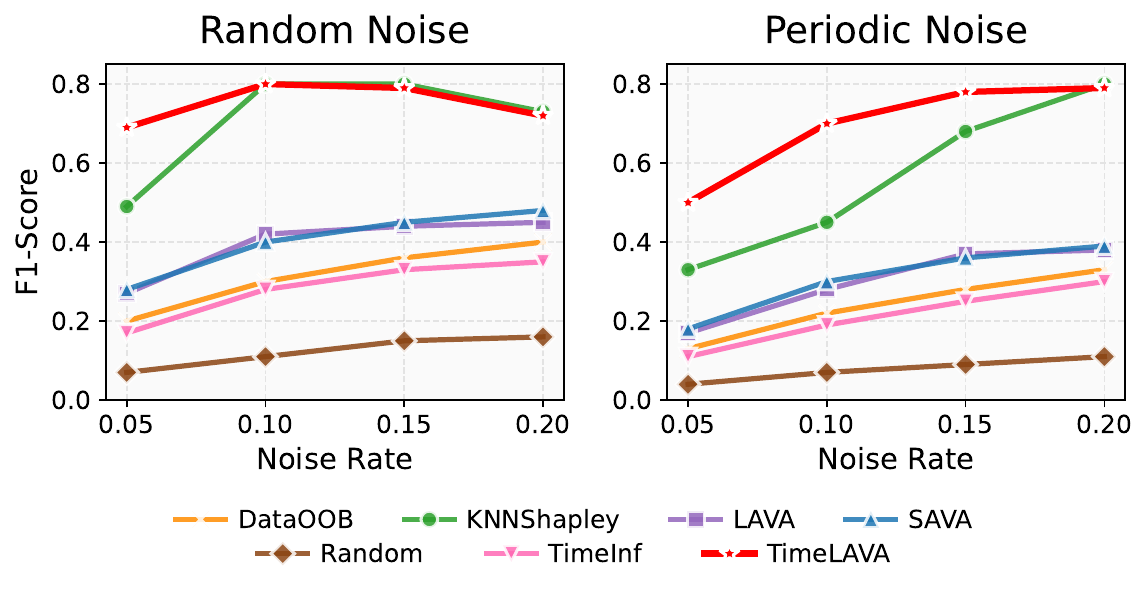}
\caption{\small {F1 scores for noisy label detection on Moving dataset.}}
\label{fig:labelnoise}
\vspace{1em}
\end{figure}

\textbf{Results.} Fig.~\ref{fig:labelnoise} shows results on the Moving dataset {for two representative noise patterns}. For periodic noise, \textsc{TimeLAVA} consistently outperforms other methods across all noise rates{, achieving F1 scores of 0.78 at 15\% noise compared to 0.65 for TimeInf and 0.52 for KNN-Shapley}, showing robust performance in realistic low-corruption scenarios. This superiority stems from \textsc{TimeLAVA}'s ability to model temporal dependencies, enabling it to identify systematic patterns in label corruption that occur in real-world scenarios such as periodic sensor failures or shift-based annotation quality variations. For random noise, \textsc{TimeLAVA} performs comparably to KNN-Shapley, which is expected when corruption lacks temporal structure. This validates that our method exploits temporal patterns when present without overfitting when absent. Results for all datasets and noise patterns are provided in Appendix \ref{app:labelnoise}.

\subsection{Parameter Sensitivity}
\label{sec:sensitivity}
We conduct comprehensive sensitivity analyses on key hyperparameters (details in Appendix~\ref{app:anomalydetection}). \textsc{TimeLAVA} is robust to wavelet basis choice (AUC ranges from 0.87 to 0.91 across different wavelet types and decomposition levels on SMD), confirming that db4 with 2-level decomposition provides a strong default without dataset-specific tuning. For domain-specific applications, practitioners may consider: higher-order wavelets (db6, db8) for smooth signals, lower-order wavelets (Haar, db2) for discontinuous or step-like signals, shallower decomposition (1--2 levels) for high-frequency transient detection, and deeper decomposition (3--4 levels) for slowly-varying or low-frequency data. Segment length and stride show stable performance: AUC saturates around length 100--150, and remains stable for stride $\in [1, 10]$ (Appendix Figures~\ref{fig:segment_length}--\ref{fig:stride}). For the label consistency weight $c$ in Eq.~\ref{eq:cost_matrix}, performance follows an inverted-U shape peaking at $c=1$ and remains stable for $c \in [0.5, 2.0]$ (Appendix Figure~\ref{fig:sensitivity}), as underweighting label information or over-emphasizing label alignment both degrade performance.


\section{Conclusion}
\label{sec:conclusion}

This paper introduced \textsc{TimeLAVA}, a learning-agnostic framework for valuing temporal segments in time series data. By combining multi-scale wavelet transforms with unbalanced optimal transport, the proposed $\mathcal{W}_\text{SW}$ discrepancy captures localized temporal patterns while remaining resilient to non-stationarity. Segment values are efficiently derived via dual sensitivity analysis without model training, with theoretical guarantees linking $\mathcal{W}_\text{SW}$ to model-agnostic generalization and bounded outlier sensitivity. Extensive experiments demonstrate that \textsc{TimeLAVA}'s value scores enable substantial improvements across anomaly detection, data pruning, and noisy label detection, consistently outperforming existing valuation methods. These results validate that learning-agnostic valuation provides a principled approach to quantifying intrinsic data quality that translates directly into downstream task performance.

\par\textbf{Limitations and Future Work.} The effectiveness of \textsc{TimeLAVA} depends on the representativeness of the reference set: biased or incomplete references may undervalue rare yet important patterns. Mitigating this through diversity-aware sampling (e.g., $k$-medoids in wavelet feature space) or bootstrap averaging across multiple reference subsets is a promising direction. The current per-channel processing also does not explicitly model cross-channel dependencies; incorporating such modeling while preserving the learning-agnostic principle remains an open challenge. Additionally, adopting mini-batch strategies similar to SAVA~\citep{kessler2025sava} would improve scalability to sequences with millions of time points. Finally, \textsc{TimeLAVA}'s valuation scores naturally extend to tasks such as active learning and time series classification, where segment values can guide annotation priorities or identify the most informative training samples.


\section*{Acknowledgements}

This research was supported by the University of Melbourne's Research Computing Services and the Petascale Campus Initiative. WL was supported by a Melbourne Research Scholarship. EG and DS were supported by the Responsible AI Research Centre (RAIR). MG was supported by ARC grants DP240102088, and WIS-MBZUAI grant 142571.

\section*{Impact Statement}

This work contributes to the fields of machine learning and data-centric AI, with implications for time series applications in domains such as healthcare, finance, and industrial monitoring. The proposed TimeLAVA framework addresses a critical challenge in data curation: identifying which temporal segments are valuable for model training and which are detrimental due to noise, corruption, or distributional shift. By providing learning-agnostic valuation scores, our method can reduce the computational and financial costs of training on low-quality data, accelerate model development cycles, and improve deployed system reliability. In safety-critical domains such as medical diagnosis or financial risk assessment, more accurate identification of data quality issues could reduce the risk of models learning from corrupted or biased data. However, practitioners should be aware of important limitations. The valuation 
quality depends critically on the representativeness of the reference distribution: a biased or incomplete reference set may cause rare but important patterns to be undervalued. In domains with evolving data distributions, reference sets must be periodically updated to remain valid.

\bibliography{references}

@inproceedings{kwon2023dataoob,
    author = {Kwon, Yongchan and Zou, James},
    title = {Data-OOB: out-of-bag estimate as a simple and efficient data value},
    year = {2023},
    booktitle = {Proceedings of the 40th International Conference on Machine Learning},
    articleno = {749},
    numpages = {18},
    location = {Honolulu, Hawaii, USA},
}

@inproceedings{Just2023LAVA,
    title={{LAVA}: Data Valuation without Pre-Specified Learning Algorithms},
    author={Hoang Anh Just and Feiyang Kang and Tianhao Wang and Yi Zeng and Myeongseob Ko and Ming Jin and Ruoxi Jia},
    booktitle={The Eleventh International Conference on Learning Representations },
    year={2023},
}

@inproceedings{wang2025optimal,
    title={Optimal Transport for Time Series Imputation},
    author={Hao Wang and Zhengnan Li and Haoxuan Li and Xu Chen and Mingming Gong and Bin Chen and Zhichao Chen},
    booktitle={The Thirteenth International Conference on Learning Representations},
    year={2025},
}

@inproceedings{Zhang2025TimeInf,
    title={TimeInf: Time Series Data Contribution via Influence Functions},
    author={Yizi Zhang and Jingyan Shen and Xiaoxue Xiong and Yongchan Kwon},
    booktitle={The Thirteenth International Conference on Learning Representations},
    year={2025},
}

@inproceedings{koh2017inf, 
    author = {Koh, Pang Wei and Liang, Percy}, 
    title = {Understanding black-box predictions via influence functions}, 
    year = {2017}, 
    booktitle = {Proceedings of the 34th International Conference on Machine Learning - Volume 70}, 
    pages = {1885–1894}, 
    numpages = {10}, location = {Sydney, NSW, Australia}, 
}

@inproceedings{
liu2023itransformer,
title={iTransformer: Inverted Transformers Are Effective for Time Series Forecasting},
author={Yong Liu and Tengge Hu and Haoran Zhang and Haixu Wu and Shiyu Wang and Lintao Ma and Mingsheng Long},
booktitle={The Twelfth International Conference on Learning Representations},
year={2024},
}

@article{Cuturi2013,
  title={Sinkhorn distances: Lightspeed computation of optimal transport},
  author={Cuturi, Marco},
  journal={Advances in neural information processing systems},
  volume={26},
  year={2013}
}

@article{PeyreCuturi2019,
  title={Computational optimal transport: With applications to data science},
  author={Peyr{\'e}, Gabriel and Cuturi, Marco},
  journal={Foundations and Trends{\textregistered} in Machine Learning},
  volume={11},
  number={5-6},
  pages={355--607},
  year={2019},
  publisher={Now Publishers, Inc.}
}

@book{Villani2009,
  title={Optimal transport: old and new},
  author={Villani, C{\'e}dric and others},
  volume={338},
  year={2008},
  publisher={Springer}
}

@inproceedings{Lai2018Modeling,
  title={Modeling long-and short-term temporal patterns with deep neural networks},
  author={Lai, Guokun and Chang, Wei-Cheng and Yang, Yiming and Liu, Hanxiao},
  booktitle={The 41st international ACM SIGIR conference on research \& development in information retrieval},
  pages={95--104},
  year={2018}
}

@article{influence1974,
  title={The influence curve and its role in robust estimation},
  author={Hampel, Frank R},
  journal={Journal of the American Statistical Association},
  volume={69},
  number={346},
  pages={383--393},
  year={1974},
  publisher={Taylor \& Francis}
}

@inproceedings{Jia2019Towards,
  title={Towards efficient data valuation based on the shapley value},
  author={Jia, Ruoxi and Dao, David and Wang, Boxin and Hubis, Frances Ann and Hynes, Nick and G{\"u}rel, Nezihe Merve and Li, Bo and Zhang, Ce and Song, Dawn and Spanos, Costas J},
  booktitle={The 22nd International Conference on Artificial Intelligence and Statistics (AISTATS)},
  pages={1167--1176},
  year={2019},
  series={Proceedings of Machine Learning Research},
  volume={89},
  publisher={PMLR}
}

@inproceedings{pham2020unbalanced,
  title={On unbalanced optimal transport: An analysis of sinkhorn algorithm},
  author={Pham, Khiem and Le, Khang and Ho, Nhat and Pham, Tung and Bui, Hung},
  booktitle={International Conference on Machine Learning},
  pages={7673--7682},
  year={2020},
  organization={PMLR}
}

@article{jiang2022softpatch,
  title={Softpatch: Unsupervised anomaly detection with noisy data},
  author={Jiang, Xi and Liu, Jianlin and Wang, Jinbao and Nie, Qiang and Wu, Kai and Liu, Yong and Wang, Chengjie and Zheng, Feng},
  journal={Advances in Neural Information Processing Systems},
  volume={35},
  pages={15433--15445},
  year={2022}
}

@article{wu2021current,
  title={Current time series anomaly detection benchmarks are flawed and are creating the illusion of progress},
  author={Wu, Renjie and Keogh, Eamonn J},
  journal={IEEE transactions on knowledge and data engineering},
  volume={35},
  number={3},
  pages={2421--2429},
  year={2021},
  publisher={IEEE}
}

@inproceedings{hundman2018detecting,
  title={Detecting spacecraft anomalies using lstms and nonparametric dynamic thresholding},
  author={Hundman, Kyle and Constantinou, Valentino and Laporte, Christopher and Colwell, Ian and Soderstrom, Tom},
  booktitle={Proceedings of the 24th ACM SIGKDD international conference on knowledge discovery \& data mining},
  pages={387--395},
  year={2018}
}

@inproceedings{su2019robust,
  title={Robust anomaly detection for multivariate time series through stochastic recurrent neural network},
  author={Su, Ya and Zhao, Youjian and Niu, Chenhao and Liu, Rong and Sun, Wei and Pei, Dan},
  booktitle={Proceedings of the 25th ACM SIGKDD international conference on knowledge discovery \& data mining},
  pages={2828--2837},
  year={2019}
}

@inproceedings{abdulaal2021practical,
  title={Practical approach to asynchronous multivariate time series anomaly detection and localization},
  author={Abdulaal, Ahmed and Liu, Zhuanghua and Lancewicki, Tomer},
  booktitle={Proceedings of the 27th ACM SIGKDD conference on knowledge discovery \& data mining},
  pages={2485--2494},
  year={2021}
}

@inproceedings{mathur2016swat,
  title={SWaT: A water treatment testbed for research and training on ICS security},
  author={Mathur, Aditya P and Tippenhauer, Nils Ole},
  booktitle={2016 international workshop on cyber-physical systems for smart water networks (CySWater)},
  pages={31--36},
  year={2016},
  organization={IEEE}
}

@inproceedings{
xu2021anomaly,
title={Anomaly Transformer: Time Series Anomaly Detection with Association Discrepancy},
author={Jiehui Xu and Haixu Wu and Jianmin Wang and Mingsheng Long},
booktitle={International Conference on Learning Representations},
year={2022},
}

@inproceedings{yang2023dcdetector,
  title={Dcdetector: Dual attention contrastive representation learning for time series anomaly detection},
  author={Yang, Yiyuan and Zhang, Chaoli and Zhou, Tian and Wen, Qingsong and Sun, Liang},
  booktitle={Proceedings of the 29th ACM SIGKDD conference on knowledge discovery and data mining},
  pages={3033--3045},
  year={2023}
}

@article{ghosh2020approximate,
  title={Approximate cross-validation for structured models},
  author={Ghosh, Soumya and Stephenson, Will and Nguyen, Tin D and Deshpande, Sameer and Broderick, Tamara},
  journal={Advances in neural information processing systems},
  volume={33},
  pages={8741--8752},
  year={2020}
}

@inproceedings{liu2008isolation,
  title={Isolation forest},
  author={Liu, Fei Tony and Ting, Kai Ming and Zhou, Zhi-Hua},
  booktitle={2008 eighth ieee international conference on data mining},
  pages={413--422},
  year={2008},
  organization={IEEE}
}

@inproceedings{berndt1994using,
  title={Using dynamic time warping to find patterns in time series},
  author={Berndt, Donald J and Clifford, James},
  booktitle={Proceedings of the 3rd international conference on knowledge discovery and data mining},
  pages={359--370},
  year={1994}
}

@inproceedings{fatras2021unbalanced,
  title={Unbalanced minibatch optimal transport; applications to domain adaptation},
  author={Fatras, Kilian and S{\'e}journ{\'e}, Thibault and Flamary, R{\'e}mi and Courty, Nicolas},
  booktitle={International conference on machine learning},
  pages={3186--3197},
  year={2021},
  organization={PMLR}
}

@book{hamilton2020time,
  title={Time series analysis},
  author={Hamilton, James D},
  year={1994},
  publisher={Princeton university press}
}

@article{johnson2016mimic,
  title={MIMIC-III, a freely accessible critical care database},
  author={Johnson, Alistair EW and Pollard, Tom J and Shen, Lu and Lehman, Li-wei H and Feng, Mengling and Ghassemi, Mohammad and Moody, Benjamin and Szolovits, Peter and Anthony Celi, Leo and Mark, Roger G},
  journal={Scientific data},
  volume={3},
  number={1},
  pages={1--9},
  year={2016},
  publisher={Nature Publishing Group}
}

@article{celi2013big,
  title={“Big data” in the intensive care unit. Closing the data loop},
  author={Celi, Leo Anthony and Mark, Roger G and Stone, David J and Montgomery, Robert A},
  journal={American journal of respiratory and critical care medicine},
  volume={187},
  number={11},
  pages={1157},
  year={2013}
}

@book{franses2000non,
  title={Non-linear time series models in empirical finance},
  author={Franses, Philip Hans and Van Dijk, Dick},
  year={2000},
  publisher={Cambridge university press}
}

@article{carvalho2019systematic,
  title={A systematic literature review of machine learning methods applied to predictive maintenance},
  author={Carvalho, Thyago P and Soares, Fabr{\'\i}zzio AAMN and Vita, Roberto and Francisco, Roberto da P and Basto, Jo{\~a}o P and Alcal{\'a}, Symone GS},
  journal={Computers \& Industrial Engineering},
  volume={137},
  pages={106024},
  year={2019},
  publisher={Elsevier}
}

@article{pang2021deep,
  title={Deep learning for anomaly detection: A review},
  author={Pang, Guansong and Shen, Chunhua and Cao, Longbing and Hengel, Anton Van Den},
  journal={ACM computing surveys (CSUR)},
  volume={54},
  number={2},
  pages={1--38},
  year={2021},
  publisher={ACM New York, NY, USA}
}

@article{sezer2020financial,
  title={Financial time series forecasting with deep learning: A systematic literature review: 2005--2019},
  author={Sezer, Omer Berat and Gudelek, Mehmet Ugur and Ozbayoglu, Ahmet Murat},
  journal={Applied soft computing},
  volume={90},
  pages={106181},
  year={2020},
  publisher={Elsevier}
}

@article{jiang2023opendataval,
  title={Opendataval: a unified benchmark for data valuation},
  author={Jiang, Kevin and Liang, Weixin and Zou, James Y and Kwon, Yongchan},
  journal={Advances in Neural Information Processing Systems},
  volume={36},
  pages={28624--28647},
  year={2023}
}

@article{heil1989continuous,
  title={Continuous and discrete wavelet transforms},
  author={Heil, Christopher E and Walnut, David F},
  journal={SIAM review},
  volume={31},
  number={4},
  pages={628--666},
  year={1989},
  publisher={SIAM}
}

@article{nab,
title = {Unsupervised real-time anomaly detection for streaming data},
journal = {Neurocomputing},
volume = {262},
pages = {134-147},
year = {2017},
note = {Online Real-Time Learning Strategies for Data Streams},
issn = {0925-2312},
author = {Subutai Ahmad and Alexander Lavin and Scott Purdy and Zuha Agha},
}

@inproceedings{wadi,
author = {Ahmed, Chuadhry Mujeeb and Palleti, Venkata Reddy and Mathur, Aditya P.},
title = {WADI: a water distribution testbed for research in the design of secure cyber physical systems},
year = {2017},
publisher = {Association for Computing Machinery},
address = {New York, NY, USA},
booktitle = {Proceedings of the 3rd International Workshop on Cyber-Physical Systems for Smart Water Networks},
pages = {25–28},
numpages = {4},
location = {Pittsburgh, Pennsylvania},
series = {CySWATER '17}
}

@INPROCEEDINGS{kdd,
  author={Stolfo, S.J. and Wei Fan and Wenke Lee and Prodromidis, A. and Chan, P.K.},
  booktitle={Proceedings DARPA Information Survivability Conference and Exposition. DISCEX'00}, 
  title={Cost-based modeling for fraud and intrusion detection: results from the JAM project}, 
  year={2000},
  volume={2},
  number={},
  pages={130-144 vol.2},}

@article{rhif2019wavelet,
  title={Wavelet transform application for/in non-stationary time-series analysis: A review},
  author={Rhif, Manel and Ben Abbes, Ali and Farah, Imed Riadh and Mart{\'\i}nez, Beatriz and Sang, Yanfang},
  journal={Applied sciences},
  volume={9},
  number={7},
  pages={1345},
  year={2019},
  publisher={MDPI}
}

@book{bloomfield2004fourier,
  title={Fourier analysis of time series: an introduction},
  author={Bloomfield, Peter},
  year={2004},
  publisher={John Wiley \& Sons}
}

@inproceedings{
nagaraj2024learning,
title={Learning under Temporal Label Noise},
author={Sujay Nagaraj and Walter Gerych and Sana Tonekaboni and Anna Goldenberg and Berk Ustun and Thomas Hartvigsen},
booktitle={The Thirteenth International Conference on Learning Representations},
year={2025},
}

@misc{human_activity,
  author = {Reyes-Ortiz, Jorge and Anguita, Davide and Ghio, Alessandro and Oneto, Luca and Parra, Xavier},
  title        = {{Human Activity Recognition Using Smartphones}},
  year         = {2013},
  howpublished = {UCI Machine Learning Repository},
}

@misc{har70+,
  author       = {Logacjov, Aleksej and Ustad, Astrid},
  title        = {{HAR70+}},
  howpublished = {UCI Machine Learning Repository},
  year         = {2023},
}

@misc{eeg_eye,
  author       = {Roesler, Oliver},
  title        = {{EEG Eye State}},
  year         = {2013},
  howpublished = {UCI Machine Learning Repository},
}

@inproceedings{hsieh2016you,
  title={You get who you pay for: The impact of incentives on participation bias},
  author={Hsieh, Gary and Kocielnik, Rafa{\l}},
  booktitle={Proceedings of the 19th ACM conference on computer-supported cooperative work \& social computing},
  pages={823--835},
  year={2016}
}

@article{carpenter2003seasonal,
  title={Seasonal variation in self-reports of recent alcohol consumption: racial and ethnic differences.},
  author={Carpenter, Christopher},
  journal={Journal of Studies on Alcohol},
  volume={64},
  number={3},
  pages={415--418},
  year={2003},
  publisher={Rutgers University Piscataway, NJ}
}

@article{chizat2018unbalanced,
  title={Unbalanced optimal transport: Dynamic and Kantorovich formulations},
  author={Chizat, Lenaic and Peyr{\'e}, Gabriel and Schmitzer, Bernhard and Vialard, Fran{\c{c}}ois-Xavier},
  journal={Journal of Functional Analysis},
  volume={274},
  number={11},
  pages={3090--3123},
  year={2018},
  publisher={Elsevier}
}

@article{sejourne2019sinkhorn,
  title={Sinkhorn divergences for unbalanced optimal transport},
  author={S{\'e}journ{\'e}, Thibault and Feydy, Jean and Vialard, Fran{\c{c}}ois-Xavier and Trouv{\'e}, Alain and Peyr{\'e}, Gabriel},
  journal={arXiv preprint arXiv:1910.12958},
  year={2019}
}

@inproceedings{kwon2021beta,
  title={Beta Shapley: a Unified and Noise-reduced Data Valuation Framework for Machine Learning},
  author={Yongchan Kwon and James Y. Zou},
  booktitle={International Conference on Artificial Intelligence and Statistics},
  year={2021},
}

@inproceedings{etth1,
  title={Informer: Beyond efficient transformer for long sequence time-series forecasting},
  author={Zhou, Haoyi and Zhang, Shanghang and Peng, Jieqi and Zhang, Shuai and Li, Jianxin and Xiong, Hui and Zhang, Wancai},
  booktitle={Proceedings of the AAAI conference on artificial intelligence},
  volume={35},
  pages={11106--11115},
  year={2021}
}

@ARTICLE{mallat1989theory,
  author={Mallat, S.G.},
  journal={IEEE Transactions on Pattern Analysis and Machine Intelligence}, 
  title={A theory for multiresolution signal decomposition: the wavelet representation}, 
  year={1989},
  volume={11},
  number={7},
  pages={674-693},
}

@inproceedings{
kessler2025sava,
title={{SAVA}: Scalable Learning-Agnostic Data Valuation},
author={Samuel Kessler and Tam Le and Vu Nguyen},
booktitle={The Thirteenth International Conference on Learning Representations},
year={2025}
}

@inproceedings{
donghao2024moderntcn,
title={Modern{TCN}: A Modern Pure Convolution Structure for General Time Series Analysis},
author={Luo, Donghao  and Wang, Xue },
booktitle={The Twelfth International Conference on Learning Representations},
year={2024},
}

@inproceedings{
wu2025catch,
title={{CATCH}: Channel-Aware Multivariate Time Series Anomaly Detection via Frequency Patching},
author={Xingjian Wu and Xiangfei Qiu and Zhengyu Li and Yihang Wang and Jilin Hu and Chenjuan Guo and Hui Xiong and Bin Yang},
booktitle={The Thirteenth International Conference on Learning Representations},
year={2025},
}

@article{jia2019efficient,
  title={Efficient task-specific data valuation for nearest neighbor algorithms},
  author={Jia, Ruoxi and Dao, David and Wang, Boxin and Hubis, Frances Ann and Gurel, Nezihe Merve and Li, Bo and Zhang, Ce and Spanos, Costas J and Song, Dawn},
  journal={arXiv preprint arXiv:1908.08619},
  year={2019}
}
\bibliographystyle{icml2026}

\newpage
\appendix
\onecolumn
\part{Appendix} 
\parttoc

\newpage
\section{Notation Summary}
\label{app:notation}

\newcommand{\groupheader}[1]{\addlinespace\multicolumn{2}{@{}l}{\textbf{#1}}\\\addlinespace}

\begin{longtable}{@{}p{3cm}p{11cm}@{}}
\caption{Table of Notations}
\label{tab:notation} \\
\toprule
\textbf{Symbol} & \textbf{Description} \\
\midrule
\endfirsthead

\toprule
\textbf{Symbol} & \textbf{Description} \\
\midrule
\endhead

\bottomrule
\endfoot

\groupheader{Time Series and Data Structures}
$X_{\text{eval}} \in \mathbb{R}^{T_{\text{eval}} \times d}$ & Evaluated time series \\
$X_{\text{ref}} \in \mathbb{R}^{T_{\text{ref}} \times d}$ & Reference (validation) time series \\
$\mathbf{x}_i, \mathbf{x}_j' \in \mathbb{R}^{L \times d}$ & Evaluated and reference time series segments of length $L$ \\
$t$ & Time index \\
$T$ & Total length of time series \\
$L$ & Length of each segment (window size) \\
$S$ & Stride for sliding window \\
$d$ & Number of features in multivariate time series \\
$\mathcal{D}_{\text{eval}}$ & Evaluated segment-label pairs \\
$\mathcal{D}_{\text{ref}}$ & Reference segment-label pairs \\
$n, m$ & Number of segments in $\mathcal{D}_{\text{eval}}$ and $\mathcal{D}_{\text{ref}}$ \\
$\mathcal{X}$ & Input space of time series segments \\
$\mathcal{Y}$ & Output or label space \\
$\mathcal{S}_t$ & Set of segments containing time point $t$ \\

\groupheader{Functions and Models}
$f: \mathcal{X} \rightarrow [0,1]$ & Predictive model \\
$f_e, f_r$ & Labeling functions for evaluated and reference data \\
$L: \mathcal{Y} \times [0,1] \rightarrow \mathbb{R}_+$ & Loss function \\

\groupheader{Distributions and Measures}
$\mu_{\text{eval}}, \mu_{\text{ref}}$ & Empirical distributions over evaluated and reference segments \\
$\mu_{\text{eval}}^{f_e}, \mu_{\text{ref}}^{f_r}$ & Joint distributions over segments and labels \\
$\mu_{\text{eval}}(\cdot|y), \mu_{\text{ref}}(\cdot|y)$ & Conditional distributions given label $y$ \\
$\delta_z$ & Dirac delta measure centered at $z$ \\
$\tilde{\mu}_e^{f_e}$ & Contaminated distribution \\

\groupheader{Wavelet Transform}
$\psi$ & Mother wavelet function \\
$\psi_{j,k}[t]$ & Discrete wavelet at scale $j$ and position $k$ \\
$\Psi(\cdot)$ & Discrete Wavelet Transform (DWT) operator \\
$\Psi(\mathbf{x})_{j,k}$ & DWT coefficient at scale $j$ and position $k$ \\
$d_{\text{wav}}(\mathbf{x}_i, \mathbf{x}_j)$ & Wavelet distance: $\|\Psi(\mathbf{x}_i) - \Psi(\mathbf{x}_j)\|_1$ \\

\groupheader{Optimal Transport (OT)}
$\mathbf{T} \in \mathbb{R}_+^{n \times m}$ & Transport plan (coupling matrix) \\
$\mathbf{C}$ & Cost matrix for standard OT \\
$\mathbf{D}$ & Joint feature-label cost matrix for $\mathcal{W}_{\text{SW}}$ distance\\
$\Pi(\mu_{\text{eval}}, \mu_{\text{ref}})$ & Set of valid transport plans \\
$\mathcal{W}(\mu_{\text{eval}}, \mu_{\text{ref}})$ & Standard Wasserstein distance \\
$\mathcal{W}_{d_{\text{wav}}}$ & Wasserstein distance with wavelet ground metric \\
$\mathcal{W}_{\text{SW}}$ & Selective Wavelet-based Wasserstein discrepancy \\

\groupheader{Unbalanced OT and Regularization}
$\kappa > 0$ & Regularization parameter for UOT marginal relaxation \\
$D_{\text{KL}}(\cdot \| \cdot)$ & Kullback-Leibler divergence \\
$\boldsymbol{\Delta}_n = \mathbf{1}_n/n$ & Uniform distribution over $n$ samples \\
$\boldsymbol{\Delta}_m = \mathbf{1}_m/m$ & Uniform distribution over $m$ samples \\
$\mathbf{f}^* \in \mathbb{R}^n, \mathbf{g}^* \in \mathbb{R}^m$ & Optimal dual potentials \\
$\mathbf{f}_\varepsilon^*, \mathbf{g}_\varepsilon^*$ & Optimal dual potentials with entropy regularization \\
$\psi_\kappa(u)$ & Transform function from KL penalty \\
$\varepsilon > 0$ & Entropy regularization strength \\
$H(\mathbf{T})$ & Entropy of transport plan \\

\groupheader{Theoretical Constants and Bounds}
$k$ & Lipschitz constant of loss function $L$ \\
$\epsilon$ & Lipschitz constant of model $f$ w.r.t. $d_{\text{wav}}$ \\
$M$ & Upper bound on $|f(\mathbf{x})|$ and $|y|$ \\
$\epsilon_{er}$ & Probabilistic cross-Lipschitz constant \\
$\delta_{\text{wav}} \in [0,1]$ & Cross-Lipschitz violation probability \\
$\pi^*$ & Optimal coupling for $\mathcal{W}_{\text{SW}}$ \\
$\zeta \in (0,1)$ & Relative mass of outlier mode \\
$\bar{d}(z,y_z)$ & Average cost of transporting outlier \\

\groupheader{Data Valuation}
$v(\mathbf{x}_i)$ & Data value of segment $\mathbf{x}_i$ \\
$v_\varepsilon(\mathbf{x}_i)$ & Entropy-regularized data value \\
$v(t)$ & Point-wise data value at time $t$ \\
$w(t, \mathbf{z}^{[m]})$ & Weight for point-wise aggregation \\

\groupheader{Miscellaneous}
$\mathbf{1}_k \in \mathbb{R}^k$ & Column vector of ones \\
$\langle \cdot, \cdot \rangle$ & Frobenius inner product \\
$\| \cdot \|_1$ & $L_1$ norm (sum of absolute values) \\
$\| \cdot \|_\infty$ & $L_\infty$ norm (maximum absolute value) \\
$c \geq 0$ & Hierarchical weight in $\mathbf{D}^{(W)}$ \\
$\mathcal{N}$ & Set of corrupted time points (noise injection) \\
$\eta$ & Noise rate \\

\end{longtable}
\section{Proofs}
\label{app:proofs}

This section provides the detailed proofs for the lemmas and theorems presented in the main text.

\subsection{Proof of Lemma \ref{lemma:wavelet_metric}: Wavelet Distance Properties}
\label{app:proof_lemma1}
We need to prove that the wavelet distance $d_{\text{wav}}(\mathbf{x}_i, \mathbf{x}_j) = \| \mathcal{W}(\mathbf{x}_i) - \mathcal{W}(\mathbf{x}_j) \|_1$ is a metric, satisfying the four metric properties: non-negativity, identity of indiscernibles, symmetry, and the triangle inequality.

\begin{proof}
Let $\mathbf{x}_i, \mathbf{x}_j, \mathbf{x}_k \in \mathbb{R}^T$ be time series segments (for simplicity, consider $d=1$; the extension to $d>1$ by applying DWT channel-wise and summing L1 norms maintains metric properties if the sum is used, or if applied to concatenated wavelet coefficients). Let $\mathcal{W}(\mathbf{x})$ denote the vector of wavelet coefficients for $\mathbf{x}$.

\begin{enumerate}
    \item \textbf{Non-negativity:} For any time series $\mathbf{x}_i, \mathbf{x}_j$,
    $$ d_{\text{wav}}(\mathbf{x}_i, \mathbf{x}_j) = \| \mathcal{W}(\mathbf{x}_i) - \mathcal{W}(\mathbf{x}_j) \|_1 \ge 0 $$
    This follows directly from the definition of the L1-norm, which is always non-negative.
    
    \item \textbf{Identity of indiscernibles:} We need to show $d_{\text{wav}}(\mathbf{x}_i, \mathbf{x}_j) = 0 \iff \mathbf{x}_i = \mathbf{x}_j$.
    \begin{itemize}
        \item $(\Rightarrow)$: If $d_{\text{wav}}(\mathbf{x}_i, \mathbf{x}_j) = 0$, then $\| \mathcal{W}(\mathbf{x}_i) - \mathcal{W}(\mathbf{x}_j) \|_1 = 0$. By properties of norms, this implies $\mathcal{W}(\mathbf{x}_i) - \mathcal{W}(\mathbf{x}_j) = \mathbf{0}$, so $\mathcal{W}(\mathbf{x}_i) = \mathcal{W}(\mathbf{x}_j)$. Since the DWT (as typically implemented with a complete basis) is invertible, we can apply the inverse DWT $\mathcal{W}^{-1}$ to both sides:
        $$ \mathcal{W}^{-1}(\mathcal{W}(\mathbf{x}_i)) = \mathcal{W}^{-1}(\mathcal{W}(\mathbf{x}_j)) \implies \mathbf{x}_i = \mathbf{x}_j $$
        \item $(\Leftarrow)$: If $\mathbf{x}_i = \mathbf{x}_j$, then $\mathcal{W}(\mathbf{x}_i) = \mathcal{W}(\mathbf{x}_j)$ since the DWT is a deterministic linear transform. Therefore, $\| \mathcal{W}(\mathbf{x}_i) - \mathcal{W}(\mathbf{x}_j) \|_1 = \| \mathbf{0} \|_1 = 0$.
    \end{itemize}
    
    \item \textbf{Symmetry:} We need to show $d_{\text{wav}}(\mathbf{x}_i, \mathbf{x}_j) = d_{\text{wav}}(\mathbf{x}_j, \mathbf{x}_i)$.
    \begin{align*}
        d_{\text{wav}}(\mathbf{x}_i, \mathbf{x}_j) &= \| \mathcal{W}(\mathbf{x}_i) - \mathcal{W}(\mathbf{x}_j) \|_1 \\
        &= \| -( \mathcal{W}(\mathbf{x}_j) - \mathcal{W}(\mathbf{x}_i) ) \|_1 \\
        &= \| \mathcal{W}(\mathbf{x}_j) - \mathcal{W}(\mathbf{x}_i) \|_1 \quad (\text{since } \|-\mathbf{v}\|_1 = \|\mathbf{v}\|_1) \\
        &= d_{\text{wav}}(\mathbf{x}_j, \mathbf{x}_i)
    \end{align*}
    Thus, symmetry holds.
    
    \item \textbf{Triangle inequality:} We need to show $d_{\text{wav}}(\mathbf{x}_i, \mathbf{x}_k) \le d_{\text{wav}}(\mathbf{x}_i, \mathbf{x}_j) + d_{\text{wav}}(\mathbf{x}_j, \mathbf{x}_k)$.
    \begin{align*}
        d_{\text{wav}}(\mathbf{x}_i, \mathbf{x}_k) &= \| \mathcal{W}(\mathbf{x}_i) - \mathcal{W}(\mathbf{x}_k) \|_1 \\
        &= \| \mathcal{W}(\mathbf{x}_i) - \mathcal{W}(\mathbf{x}_j) + \mathcal{W}(\mathbf{x}_j) - \mathcal{W}(\mathbf{x}_k) \|_1 \\
        &\le \| \mathcal{W}(\mathbf{x}_i) - \mathcal{W}(\mathbf{x}_j) \|_1 + \| \mathcal{W}(\mathbf{x}_j) - \mathcal{W}(\mathbf{x}_k) \|_1 \quad (\text{by triangle inequality of L1-norm}) \\
        &= d_{\text{wav}}(\mathbf{x}_i, \mathbf{x}_j) + d_{\text{wav}}(\mathbf{x}_j, \mathbf{x}_k)
    \end{align*}
    Therefore, the triangle inequality holds.
\end{enumerate}

Having proved all four properties, we conclude that $d_{\text{wav}}(\mathbf{x}_i, \mathbf{x}_j)$ is a metric. 
\end{proof}


\subsection{Proof of Theorem \ref{thm:robustness}: Robustness to Non-stationarity}
\label{app:proof_theorem3}

\begin{lemma} 
\label{lemma:outlier} 
Suppose that $\tilde{\alpha}=\zeta \delta_{z}+(1-\zeta)\alpha$ is a distribution perturbed by a Dirac mode at $z$ with relative mass $\zeta\in(0,1)$. For a sample $y^{*}$ in the support of $\beta$, \cite{fatras2021unbalanced} demonstrates:
\[
  \mathcal{W}(\tilde{\alpha},\beta)
  \;\ge\;
  (1-\zeta)\,\mathcal{W}(\alpha,\beta)
  \;+\;
  \zeta\!\left(
      D(z,y^{*}) - g(y^{*}) + \int g \,\mathrm{d}\beta
  \right),
\]
where $D(z,y^{*})$ is the deviation of $\delta_{z}$, and $g$ is the optimal dual potential of $\mathcal{W}(\alpha,\beta)$.
\end{lemma}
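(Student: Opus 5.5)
The plan is to rederive the inequality of Lemma~\ref{lemma:outlier} from the dual formulation of balanced OT, following the argument of \cite{fatras2021unbalanced}. We work with the Kantorovich dual
\[
\mathcal{W}(\mu,\beta)=\sup_{f(x)+g(y)\le D(x,y)} \int f\,\mathrm{d}\mu+\int g\,\mathrm{d}\beta .
\]
Any feasible pair $(f,g)$ gives a lower bound on $\mathcal{W}(\tilde\alpha,\beta)$, so we only need to build one well-chosen feasible pair.

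\textbf{Step 1 (choose the potentials).} Let $(f^*,g^*)$ be optimal for $\mathcal{W}(\alpha,\beta)$. We take its $c$-transform form, with $f^*(x)=\min_y \{D(x,y)-g^*(y)\}$ on the support of $\alpha$. We keep $g=g^*$ and define $f$ on the extended support $\operatorname{supp}\alpha\cup\{z\}$ by
\[
f(x)=f^*(x)\ (x\neq z),\qquad f(z)=\min_{y\in\operatorname{supp}\beta}\{D(z,y)-g^*(y)\}.
\]
This is the $c$-transform of $g^*$ evaluated at $z$, so the constraint $f(x)+g(y)\le D(x,y)$ still holds everywhere. If $z$ already lies in $\operatorname{supp}\alpha$, we use the $c$-transform there too; it agrees with $f^*$ on the support.

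\textbf{Step 2 (evaluate the dual objective).} The objective at this pair is $\zeta f(z)+(1-\zeta)\int f^*\,\mathrm{d}\alpha+\int g^*\,\mathrm{d}\beta$. We rewrite $\int g^*\,\mathrm{d}\beta$ as $(1-\zeta)\int g^*\,\mathrm{d}\beta+\zeta\int g^*\,\mathrm{d}\beta$. Strong duality then gives
\[
\mathcal{W}(\tilde\alpha,\beta)\ge(1-\zeta)\mathcal{W}(\alpha,\beta)+\zeta\Big(f(z)+\int g^*\,\mathrm{d}\beta\Big).
\]
Finally we set $y^*$ to be the minimizer in the definition of $f(z)$, so that $f(z)=D(z,y^*)-g^*(y^*)$. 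Substituting this yields exactly the stated bound, with $D(z,y^*)$ read as the deviation cost of the Dirac mode.

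\textbf{Main obstacle.} The statement is written for an arbitrary $y^*$ in the support, but the construction above only proves it for the minimizing $y^*$. For general $y^*$ we have $f(z)\le D(z,y^*)-g^*(y^*)$, which runs the wrong direction. So the lemma should be read with $y^*$ chosen as the minimizer, as in \cite{fatras2021unbalanced}, and the plan makes that choice explicit.

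The second technical point is existence and attainment of optimal dual potentials on a general (possibly non-compact) space. Since the measures here are empirical, with finite support and finite costs $\mathbf{D}$, this reduces to finite linear-programming duality, and the $\min$ is attained.
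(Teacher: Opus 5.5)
Your proof is correct. The paper itself gives no argument for this lemma: it is quoted from \cite{fatras2021unbalanced}. The proof of Theorem~\ref{thm:robustness} that follows never actually uses it; it works on the primal side, exhibiting the feasible plan $\tilde{\mathbf{T}}_\phi$ to get an \emph{upper} bound for the unbalanced problem.

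Your construction is the natural dual argument. You keep $g^*$, extend $f^*$ to $z$ by the $c$-transform $\min_{y}\{D(z,y)-g^*(y)\}$, then apply weak duality to $\tilde\alpha$ and strong duality to $\alpha$ (your ``strong duality'' step really uses both). The form of the bound, $D(z,y^*)-g(y^*)+\int g\,\mathrm{d}\beta$, is exactly the $c$-transform value at $z$ plus the unchanged $\beta$-potential. So your argument supplies the \emph{lower} bound for balanced OT that the paper only cites and contrasts with its UOT bound.

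Your caveat about $y^*$ is right and should stay explicit. Read universally, the printed phrase ``for a sample $y^*$ in the support of $\beta$'' is false. Take $\alpha=\delta_a$, $\beta=\tfrac12(\delta_{b_1}+\delta_{b_2})$, $D(a,b_1)=D(a,b_2)=1$, $D(z,b_1)=0$, $D(z,b_2)=R>0$ and $\zeta\le\tfrac12$. Then every optimal $g$ is constant and $\mathcal{W}(\tilde\alpha,\beta)=1-\zeta$, yet the right-hand side with $y^*=b_2$ is $1-\zeta+\zeta R$. The bound holds only for $y^*$ minimizing $D(z,\cdot)-g(\cdot)$.

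Finally, finite LP duality suffices for the paper's empirical measures. For compactly supported measures you would instead invoke existence of $c$-concave Kantorovich potentials, with the minimum attained by continuity of the cost.
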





\begin{proof} This theorem builds upon the foundation of Lemma~\ref{lemma:outlier} by \cite{fatras2021unbalanced}. To establish this robustness bound, we construct a parametric family of feasible transport plans and optimize over the parameter to obtain the tightest upper bound. Let $\mathbf{T}^*$ be the optimal transport plan for the clean problem $\tilde{\mathcal{W}}_{\text{SW}}(\mu_t^{f_t}, \mu_v^{f_v})$. We construct a parametric family of transport plans $\tilde{\mathbf{T}}_\phi$ with parameter $\phi \in [0,1]$ as follows:
\begin{equation}
\tilde{\mathbf{T}}_\phi = (1-\zeta)\mathbf{T}^* + \zeta\phi(\delta_\mathbf{z} \otimes \boldsymbol{\Delta}_m) = \begin{pmatrix}
(1-\zeta)\mathbf{T}^* \\
\frac{\zeta\phi}{m}\mathbf{1}_{1 \times m}
\end{pmatrix}
\end{equation}
The parameter $\phi$ controls how the outlier mass is handled: when $\phi = 1$, all outlier mass $\zeta$ is transported to the validation set; when $\phi = 0$, the outlier mass is completely destroyed (incurring only mass penalty); and for $\phi \in (0,1)$, we have partial transport with the remainder destroyed.

Since $\tilde{\mathbf{T}}_\phi$ is a feasible transport plan for the contaminated problem, we have the upper bound $\tilde{\mathcal{W}}_{\text{SW}}(\tilde{\mu}_t^{f_t}, \mu_v^{f_v}) \leq \text{Cost}(\tilde{\mathbf{T}}_\phi)$, where the cost function is
\begin{align}
\text{Cost}(\tilde{\mathbf{T}}_\phi) &= \langle \mathbf{D}, \tilde{\mathbf{T}}_\phi\rangle + \kappa D_{\text{KL}}(\tilde{\mathbf{T}}_\phi\mathbf{1}_m \| \tilde{\mu}_t^{f_t}) + \kappa D_{\text{KL}}(\tilde{\mathbf{T}}_\phi^\top\mathbf{1}_{n+1} \| \mu_v^{f_v})
\end{align}
We now analyze each component of this cost function to express it in terms of the parameter $\phi$.

For the transport cost component, we have
\begin{align}
\langle \mathbf{D}, \tilde{\mathbf{T}}_\phi\rangle &= \langle \mathbf{D}, (1-\zeta)\mathbf{T}^* + \zeta\phi(\delta_\mathbf{z} \otimes \boldsymbol{\Delta}_m)\rangle \\
&= (1-\zeta)\langle \mathbf{D}, \mathbf{T}^*\rangle + \zeta\phi \cdot \bar{d}(\mathbf{z},y_z)
\end{align}
where $\bar{d}(\mathbf{z},y_z) = \frac{1}{m}\sum_{j=1}^m \mathbf{D}((\mathbf{z},y_z), (\mathbf{x}_j^v, y_j^v))$ is the average cost of transporting the outlier to the validation set.

For the first KL divergence term on the training side, the marginal distribution is $\tilde{\mathbf{T}}_\phi\mathbf{1}_m = (1-\zeta)\mathbf{T}^*\mathbf{1}_m + \zeta\phi\delta_\mathbf{z}$. Using the joint convexity of the KL divergence, we obtain
\begin{align}
D_{\text{KL}}(\tilde{\mathbf{T}}_\phi\mathbf{1}_m \| \tilde{\mu}_t^{f_t}) &\leq (1-\zeta)D_{\text{KL}}(\mathbf{T}^*\mathbf{1}_m \| \mu_t^{f_t}) + \zeta D_{\text{KL}}(\phi\delta_\mathbf{z} \| \delta_\mathbf{z})
\end{align}
where the key calculation shows that $D_{\text{KL}}(\phi\delta_\mathbf{z} \| \delta_\mathbf{z}) = \phi \log \phi - \phi + 1$ for the generalized KL divergence.

Similarly, for the second KL divergence term on the validation side, the marginal is $\tilde{\mathbf{T}}_\phi^\top\mathbf{1}_{n+1} = (1-\zeta)(\mathbf{T}^*)^\top\mathbf{1}_n + \zeta\phi\boldsymbol{\Delta}_m$, which gives us
\begin{align}
D_{\text{KL}}(\tilde{\mathbf{T}}_\phi^\top\mathbf{1}_{n+1} \| \mu_v^{f_v}) &\leq (1-\zeta)D_{\text{KL}}((\mathbf{T}^*)^\top\mathbf{1}_n \| \mu_v^{f_v}) + \zeta D_{\text{KL}}(\phi\boldsymbol{\Delta}_m \| \boldsymbol{\Delta}_m)
\end{align}
where $D_{\text{KL}}(\phi\boldsymbol{\Delta}_m \| \boldsymbol{\Delta}_m) = \phi \log \phi - \phi + 1$ by the same calculation.

Combining all cost components, we obtain
\begin{align}
\text{Cost}(\tilde{\mathbf{T}}_\phi) &\leq (1-\zeta)\tilde{\mathcal{W}}_{\text{SW}}(\mu_t^{f_t}, \mu_v^{f_v}) + \zeta E(\phi)
\end{align}
where the excess cost function is
\begin{equation}
E(\phi) = \phi \cdot \bar{d}(\mathbf{z},y_z) + 2\kappa(\phi \log \phi - \phi + 1)
\end{equation}
To find the tightest upper bound, we minimize $E(\phi)$ over $\phi \in [0,1]$ by taking the derivative:
\begin{align}
\frac{dE}{d\phi} = \bar{d}(\mathbf{z},y_z) + 2\kappa\log \phi
\end{align}
Setting this equal to zero yields the optimal parameter $\phi^* = e^{-\bar{d}(\mathbf{z},y_z)/(2\kappa)}$.

Substituting the optimal parameter $\phi^*$ into the excess cost function and simplifying, we get
\begin{align}
E(\phi^*) &= e^{-\bar{d}/(2\kappa)} \cdot \bar{d} + 2\kappa\left(e^{-\bar{d}/(2\kappa)} \cdot \left(-\frac{\bar{d}}{2\kappa}\right) - e^{-\bar{d}/(2\kappa)} + 1\right) \\
&= 2\kappa(1 - e^{-\bar{d}(\mathbf{z},y_z)/(2\kappa)})
\end{align}
Therefore, we conclude that
\begin{equation}
\label{eq:constraint-uot}
\tilde{\mathcal{W}}_{\text{SW}}(\tilde{\mu}_t^{f_t}, \mu_v^{f_v}) \leq (1-\zeta)\tilde{\mathcal{W}}_{\text{SW}}(\mu_t^{f_t}, \mu_v^{f_v}) + 2\zeta\kappa\left(1 - e^{-\bar{d}(\mathbf{z},y_z)/(2\kappa)}\right)
\end{equation}
This bound demonstrates the robustness of the $\mathcal{W}_\text{SW}$ distance: the impact of the outlier is modulated by both its proportion $\zeta$ and its distance to the validation set, with the regularization parameter $\kappa$ controlling how aggressively distant outliers are penalized through the selective matching mechanism.
\end{proof}

\begin{figure}[tb]
\centering
\includegraphics[width=\textwidth]{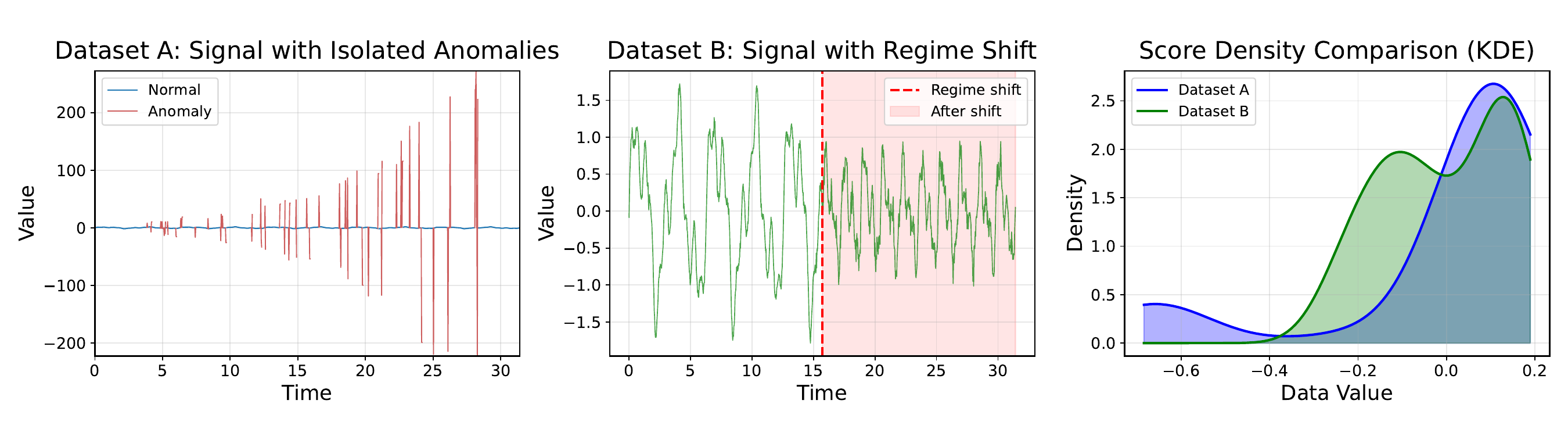}
\caption{Experimental validation of Theorem~\ref{thm:robustness}: \textsc{TimeLAVA} distinguishes between isolated anomalies and regime shifts through characteristic score distributions. \textbf{Left}: Isolated anomalies manifest as sporadic spikes distributed across the time series (red markers indicate $50$ anomalies, $5\%$ of data). \textbf{Middle}: Regime shift exhibits systematic change after the transition point (red dashed line at $t=500$), with the shaded region indicating the new regime. \textbf{Right}: Score density comparison reveals fundamentally different distributions: isolated anomalies produce a heavy-tailed distribution with extreme outliers (blue), while regime shifts yield a bimodal distribution (green) reflecting pre- and post-shift regimes. The JS divergence of $0.47$ confirms statistical distinguishability, demonstrating that the bounded response property of the $\mathcal{W}_\text{SW}$ distance enables nuanced data quality assessment.}
\label{fig:theorem2_validation}
\end{figure}

\paragraph{Empirical Validation of Theorem~\ref{thm:robustness}.}
\label{app:outlier_and_nonstationarity} To demonstrate that the distribution patterns of data value scores can distinguish isolated anomalies from systemic regime shifts, we designed a controlled synthetic experiment. This approach provides complete control over ground truth, enabling precise validation of the theoretical predictions from Theorem~\ref{thm:robustness}.

We generated a clean reference time series, $x(t)=\sin(2\pi t/100)+0.5\sin(2\pi t/25)+\epsilon$, $\epsilon\sim\mathcal{N}(0,0.01)$. We then constructed two test datasets. \textbf{Dataset A (Isolated Anomalies)} contains $50$ sporadic spike anomalies (5\% contamination) with magnitudes between $8$ and $12$ standard deviations at the time of injection, resulting in progressively larger absolute magnitudes as the contamination accumulates. This creates a realistic cascading effect often observed in deteriorating sensor systems where initial faults compound over time. \textbf{Dataset B (Regime Shift)} exhibits a systematic change at the midpoint, where the second half has an altered frequency and amplitude, simulating operational reconfiguration.

As shown in Figure~\ref{fig:theorem2_validation}, the induced score distributions are fundamentally different. Isolated anomalies yield a heavy-tailed unimodal distribution (kurtosis = $2.95$, skewness = $-2.22$; minimum = $-0.67$), whereas the regime shift produces a bimodal distribution with two peaks corresponding to pre/post-shift regimes and no extreme values  (bimodality score = $0.38$; range = $[-0.23, 0.19]$). The Jensen–Shannon divergence of $0.47$ further confirms statistical distinguishability. This pattern is consistent with the bound in Eq. (\ref{eq:constraint-uot}), which encodes a trade-off between anomaly severity $\bar d$ and prevalence $\zeta$: large $\bar d$ but small $\zeta$ pushes a few segments close to the bound, producing heavy-tailed distribution with extreme outliers, while moderate $\bar d$ with large $\zeta$ shifts many segments coherently, yielding bimodal distributions reflecting dual regimes. 

A key insight from our experiments is that, as established by Theorem~\ref{thm:robustness}, the bounded-response property naturally induces distinguishable distribution patterns for different anomaly types, enabling actionable data curation without hand-crafted design. Consequently, a single scoring rule enables both detection and differentiation of anomaly types without hand-crafted detectors. Consistent real-world behavior is observed in Fig.~\ref{fig:OT-compare}, where \textsc{TimeLAVA}-OT shows sharp, isolated negative peaks for true anomalies in SMAP channels G-1 and A-2 while remaining stable elsewhere.


\subsection{Proof of Theorem \ref{thm:perf_bound}: Validation Performance Bound}
\label{app:proof_theorem2}
\begin{proof}
We adapt the proof structure from \cite{Just2023LAVA} for the wavelet distance $d_{\text{wav}}$ and the $\mathcal{W}_\text{SW}$ cost $\mathbf{D}$. Let $\Delta L$ denote the difference in expected loss between validation and training:
\begin{equation}
    \Delta L = \mathbb{E}_{(\mathbf{x},y) \sim \mu_v^{f_v}} \Bigl[L \bigl(y, f(\mathbf{x}) \bigr) \Bigr] - \mathbb{E}_{(\mathbf{x},y) \sim \mu_t^{f_t}} \Bigl[L(y, f \bigl(\mathbf{x}) \bigr) \Bigr].
\end{equation}
Let $\mathcal{Z} = \mathcal{X} \times \mathcal{Y}$ be the joint space, and let $\pi^*$ be the optimal coupling that achieves the minimum in $\mathcal{W}_{HW}(\mu_t^{f_t}, \mu_v^{f_v})$. Since $\pi^*$ is a coupling between $\mu_t^{f_t}$ and $\mu_v^{f_v}$, we can express: 
\begin{align}
\Delta L 
&= \int_{\mathcal{Z}^2} [L(y_v, f(\mathbf{x}_v)) - L(y_t, f(\mathbf{x}_t))] \, d\pi^*((\mathbf{x}_t, y_t), (\mathbf{x}_v, y_v)) \nonumber \\
&= \int_{\mathcal{Z}^2} [L(y_v, f(\mathbf{x}_v)) - L(y_v, f(\mathbf{x}_t)) + L(y_v, f(\mathbf{x}_t)) - L(y_t, f(\mathbf{x}_t))] \, d\pi^* \nonumber \\
&\leq 
\underbrace{\int_{\mathcal{Z}^2} |L(y_v, f(\mathbf{x}_v)) - L(y_v, f(\mathbf{x}_t))| \, d\pi^*}_{U_1}
+
\underbrace{\int_{\mathcal{Z}^2} |L(y_v, f(\mathbf{x}_t)) - L(y_t, f(\mathbf{x}_t))| \, d\pi^*}_{U_2}
\end{align}
where the last inequality follows from the triangle inequality.
\paragraph{Bounding $U_1$:}
Using the Lipschitz properties of $L$ and $f$:
\begin{align}
    U_1 &\leq \int_{\mathcal{Z}^2} k |f(\mathbf{x}_v) - f(\mathbf{x}_t)| \, d\pi^*((\mathbf{x}_t, y_t), (\mathbf{x}_v, y_v)) \quad \text{($L$ is $k$-Lipschitz)} \nonumber \\
    &\leq k \epsilon \int_{\mathcal{Z}^2} d_{\text{wav}}(\mathbf{x}_v, \mathbf{x}_t) \, d\pi^*((\mathbf{x}_t, y_t), (\mathbf{x}_v, y_v)) \quad \text{($f$ is $\epsilon$-Lipschitz w.r.t. $d_{\text{wav}}$)}.
    \label{eq:U1_bound}
\end{align}
\paragraph{Bounding $U_2$:}
Using the $k$-Lipschitz property of $L$ in its first argument:
\begin{equation}
    U_2 \leq \int_{\mathcal{Z}^2} k |y_v - y_t| \, d\pi^*((\mathbf{x}_t, y_t), (\mathbf{x}_v, y_v)).
\end{equation}
Let $\pi^*_{\mathcal{X} \times \mathcal{X}}$ denote the marginal distribution of $\pi^*$ on $\mathcal{X} \times \mathcal{X}$:
\begin{equation}
    \pi^*_{\mathcal{X} \times \mathcal{X}}(\mathbf{x}_t, \mathbf{x}_v) = \int_{\mathcal{Y}^2} d\pi^*((\mathbf{x}_t, y_t), (\mathbf{x}_v, y_v)).
\end{equation}

Define the violation set:
\begin{equation}
    V = \{(\mathbf{x}_t, \mathbf{x}_v) \in \mathcal{X} \times \mathcal{X} : |f_v(\mathbf{x}_v) - f_t(\mathbf{x}_t)| > \epsilon_{tv} \cdot d_{\text{wav}}(\mathbf{x}_t, \mathbf{x}_v)\}.
\end{equation}

By assumption (iv) and Definition~\ref{def:prob_cross_lipschitz}:
\begin{equation}
    \pi^*_{\mathcal{X} \times \mathcal{X}}(V) = \mathbb{P}_{(\mathbf{x}_t,\mathbf{x}_v)\sim\pi^*_{\mathcal{X} \times \mathcal{X}}}[|f_v(\mathbf{x}_v) - f_t(\mathbf{x}_t)| > \epsilon_{tv} \cdot d_{\text{wav}}(\mathbf{x}_t, \mathbf{x}_v)] \leq \delta_{\text{wav}}.
\end{equation}

Define the extended violation set in $\mathcal{Z}^2$:
\begin{equation}
    \tilde{V} = \{((\mathbf{x}_t, y_t), (\mathbf{x}_v, y_v)) \in \mathcal{Z}^2 : (\mathbf{x}_t, \mathbf{x}_v) \in V\}.
\end{equation}

Since $y_t = f_t(\mathbf{x}_t)$ and $y_v = f_v(\mathbf{x}_v)$ for the support of $\pi^*$:
\begin{equation}
    \pi^*(\tilde{V}) = \pi^*_{\mathcal{X} \times \mathcal{X}}(V) \leq \delta_{\text{wav}}.
\end{equation}

We can decompose $U_2$:
\begin{align}
    U_2 &\leq \int_{\tilde{V}} k |y_v - y_t| \, d\pi^* + \int_{\mathcal{Z}^2 \setminus \tilde{V}} k |y_v - y_t| \, d\pi^* \nonumber \\
    &\leq \int_{\tilde{V}} k \cdot 2M \, d\pi^* + \int_{\mathcal{Z}^2 \setminus \tilde{V}} k \cdot \epsilon_{tv} d_{\text{wav}}(\mathbf{x}_t, \mathbf{x}_v) \, d\pi^* \nonumber \\
    &\quad \text{(since $|y_v|, |y_t| \leq M$ and on $\mathcal{Z}^2 \setminus \tilde{V}$: $|y_v - y_t| \leq \epsilon_{tv} d_{\text{wav}}(\mathbf{x}_t, \mathbf{x}_v)$)} \nonumber \\
    &\leq 2kM \cdot \pi^*(\tilde{V}) + k \epsilon_{tv} \int_{\mathcal{Z}^2} d_{\text{wav}}(\mathbf{x}_t, \mathbf{x}_v) \, d\pi^* \nonumber \\
    &\leq 2kM \delta_{\text{wav}} + k \epsilon_{tv} \int_{\mathcal{Z}^2} d_{\text{wav}}(\mathbf{x}_t, \mathbf{x}_v) \, d\pi^*.
    \label{eq:U2_bound}
\end{align}

\paragraph{Combining the bounds:}
From Eqs.~(\ref{eq:U1_bound}) and~(\ref{eq:U2_bound}):
\begin{align}
    \Delta L &\leq U_1 + U_2 \nonumber \\
    &\leq k\epsilon \int_{\mathcal{Z}^2} d_{\text{wav}}(\mathbf{x}_t, \mathbf{x}_v) \, d\pi^* + 2kM\delta_{\text{wav}} + k\epsilon_{tv} \int_{\mathcal{Z}^2} d_{\text{wav}}(\mathbf{x}_t, \mathbf{x}_v) \, d\pi^* \nonumber \\
    &= k(\epsilon + \epsilon_{tv}) \int_{\mathcal{Z}^2} d_{\text{wav}}(\mathbf{x}_t, \mathbf{x}_v) \, d\pi^* + 2kM\delta_{\text{wav}}.
\end{align}

We now need to relate the integral $\int_{\mathcal{Z}^2} d_{\text{wav}}(\mathbf{x}_t, \mathbf{x}_v) d\pi^*$ to $\mathcal{W}_{\text{SW}}(\mu^{f_t}_t, \mu^{f_v}_v)$. Let $\pi^*$ be the optimal coupling for $\mathcal{W}_{\text{SW}}$, i.e., the optimal transport plan that solves the above optimization problem. Consider the expected cost under this optimal plan:
\begin{align}
\mathbb{E}_{\pi^*}[\mathbf{D}((\mathbf{x}_t, y_t),(\mathbf{x}_v, y_v))] &= \int_{\mathcal{Z}^2} [d_{\text{wav}}(\mathbf{x}_t, \mathbf{x}_v) + c \cdot \mathcal{W}_{d_{\text{wav}}}(\mu_t(\cdot|y_t), \mu_v(\cdot|y_v))] d\pi^* \nonumber \\
&= \int_{\mathcal{Z}^2} d_{\text{wav}}(\mathbf{x}_t, \mathbf{x}_v) d\pi^* + c \cdot \int_{\mathcal{Z}^2} \mathcal{W}_{d_{\text{wav}}}(\mu_t(\cdot|y_t), \mu_v(\cdot|y_v)) d\pi^*
\end{align}

Since the second term is non-negative (as the Wasserstein distance is non-negative), we have:
\begin{align}
\int_{\mathcal{Z}^2} d_{\text{wav}}(\mathbf{x}_t, \mathbf{x}_v) d\pi^* \leq \mathbb{E}_{\pi^*}[\mathbf{D}((\mathbf{x}_t, y_t),(\mathbf{x}_v, y_v))]
\end{align}

Now, since $\pi^*$ is the optimal transport plan for $\mathcal{W}_{\text{SW}}$, we have:
\begin{align}
\mathbb{E}_{\pi^*}[\mathbf{D}((\mathbf{x}_t, y_t),(\mathbf{x}_v, y_v))] = \langle\mathbf{D}, \pi^* \rangle
\end{align}
where $\langle\mathbf{D}, \pi^* \rangle$ denotes the Frobenius inner product between the cost matrix and transport plan.

Considering the contribution of the KL divergence terms at the optimal solution $\pi^*$, we have:
\begin{align}
\mathcal{W}_{\text{SW}}(\mu^{f_t}_t, \mu^{f_v}_v) = \langle\mathbf{D}, \pi^* \rangle + \kappa D_{\text{KL}}(\pi^* \mathbf{1}_m\|\boldsymbol{\Delta}_n) + \kappa D_{\text{KL}}((\pi^*)^\top \mathbf{1}_n\|\boldsymbol{\Delta}_m)
\end{align}

This implies:
\begin{align}
\langle\mathbf{D}, \pi^* \rangle \leq \mathcal{W}_{\text{SW}}(\mu^{f_t}_t, \mu^{f_v}_v)
\end{align}
since the KL divergence is non-negative, so $\kappa D_{\text{KL}}(\pi^* \mathbf{1}_m\|\boldsymbol{\Delta}_n) + \kappa D_{\text{KL}}((\pi^*)^\top \mathbf{1}_n\|\boldsymbol{\Delta}_m) \geq 0$.

Combining these relationships:
\begin{align}
\int_{\mathcal{Z}^2} d_{\text{wav}}(\mathbf{x}_t, \mathbf{x}_v) d\pi^* \leq \mathbb{E}_{\pi^*}[\mathbf{D}((\mathbf{x}_t, y_t),(\mathbf{x}_v, y_v))] = \langle\mathbf{D}, \pi^* \rangle \leq \mathcal{W}_{\text{SW}}(\mu^{f_t}_t, \mu^{f_v}_v)
\end{align}

Therefore:
\begin{align}
\int_{\mathcal{Z}^2} d_{\text{wav}}(\mathbf{x}_t, \mathbf{x}_v) d\pi^* \leq \mathcal{W}_{\text{SW}}(\mu^{f_t}_t, \mu^{f_v}_v)
\end{align}

Furthermore, in our derivation, if we assume that $\epsilon_{tv} \leq c\epsilon$ (i.e., the cross-Lipschitz constant is related to the hierarchical cost weight $c$ and the model Lipschitz constant $\epsilon$), then $\epsilon + \epsilon_{tv} \leq \epsilon(1 + c)$. This allows us to further simplify the upper bound:
\begin{align}
\Delta L &\leq k(\epsilon + \epsilon_{tv})\int_{\mathcal{Z}^2} d_{\text{wav}}(\mathbf{x}_t, \mathbf{x}_v) d\pi^* + 2kM\delta_{\text{wav}} \\
&\leq k\epsilon(1 + c)\int_{\mathcal{Z}^2} d_{\text{wav}}(\mathbf{x}_t, \mathbf{x}_v) d\pi^* + 2kM\delta_{\text{wav}} \\
&\leq k\epsilon(1 + c)\mathcal{W}_{\text{SW}}(\mu^{f_t}_t, \mu^{f_v}_v) + 2kM\delta_{\text{wav}}
\end{align}

For simplicity of notation, we absorb the constant $(1 + c)$ into $\epsilon$, and write $k\epsilon(1 + c)$ as $k\epsilon$. This leads to:
\begin{align}
\Delta L \leq k\epsilon \cdot \mathcal{W}_{\text{SW}}(\mu^{f_t}_t, \mu^{f_v}_v) + 2kM\delta_{\text{wav}}
\end{align}

This completes the derivation, showing that the gap between validation and training performance is bounded by the $\mathcal{W}_\text{SW}$ distance, which is our main result.
\end{proof}

\subsection{Proof of Theorem \ref{thm:segment_value}: Data Value Sensitivity}
\label{app:proof_theorem4}

The proof proceeds in three main steps: establishing the dual formulation, proving uniqueness of the optimal solution, and deriving the properties of data values.

\begin{proof}
We begin with the primal unbalanced optimal transport problem. Using the KL divergence formulation $D_{\text{KL}}(p \| q) = \sum_i p_i \log(p_i/q_i) - p_i + q_i$, the primal problem becomes
\begin{align}
\min_{\mathbf{T} \geq 0} \Biggl\{
\langle \mathbf{D}, \mathbf{T} \rangle &+ \kappa \sum_{i=1}^n h\left(\frac{(\mathbf{T}\mathbf{1}_m)_i}{1/n}\right) + \kappa \sum_{j=1}^m h\left(\frac{(\mathbf{T}^\top\mathbf{1}_n)_j}{1/m}\right) \Biggr\}
\end{align}
where $h(x) = x\log x - x + 1$ is the generalized KL divergence kernel. To derive the dual, we apply the Fenchel-Rockafellar duality theorem with the Lagrangian
\begin{equation}
\mathcal{L}(\mathbf{T}, \lambda) = \langle \mathbf{D}, \mathbf{T} \rangle + \kappa \sum_{i=1}^n h\left(\frac{(\mathbf{T}\mathbf{1}_m)_i}{1/n}\right) + \kappa \sum_{j=1}^m h\left(\frac{(\mathbf{T}^\top\mathbf{1}_n)_j}{1/m}\right) - \langle \lambda, \mathbf{T} \rangle.
\end{equation}

Since the Fenchel conjugate of $h$ is $h^*(y) = e^y - 1$, we can apply the minimax theorem. Taking the gradient with respect to $\mathbf{T}$ and setting it to zero yields
\begin{equation}
\mathbf{D}_{ij} + \kappa \log\bigl(n(\mathbf{T}\mathbf{1}_m)_i\bigr) + \kappa \log\bigl(m(\mathbf{T}^\top\mathbf{1}_n)_j\bigr) = \lambda_{ij}.
\end{equation}
Setting $\mathbf{f}_i = -\kappa \log\bigl(n(\mathbf{T}\mathbf{1}_m)_i\bigr)$ and $\mathbf{g}_j = -\kappa \log\bigl(m(\mathbf{T}^\top\mathbf{1}_n)_j\bigr)$, we obtain $\lambda_{ij} = \mathbf{D}_{ij} - \mathbf{f}_i - \mathbf{g}_j$. For $\mathbf{T} \geq 0$, we require $\lambda_{ij} \geq 0$, yielding the constraint $\mathbf{f}_i + \mathbf{g}_j \leq \mathbf{D}_{ij}$.

Substituting back into the dual objective and using $\psi_\kappa(u) = \kappa(1 - e^{-u/\kappa})$ as the Fenchel conjugate of the scaled entropy, we obtain the dual formulation:
\begin{equation}
\mathcal{W}_{\text{SW}}(\mu_t, \mu_v) = \max_{\mathbf{f},\mathbf{g}} \sum_{i=1}^n \frac{\kappa}{n} \psi_\kappa(\mathbf{f}_i) + \sum_{j=1}^m \frac{\kappa}{m} \psi_\kappa(\mathbf{g}_j)
\end{equation}
subject to $\mathbf{f}_i + \mathbf{g}_j \leq \mathbf{D}_{ij}$ for all $i,j$.

To establish uniqueness of the optimal solution, we analyze the dual objective function
\begin{equation}
L(\mathbf{f},\mathbf{g}) = \sum_{i=1}^n \frac{\kappa}{n} \psi_\kappa(\mathbf{f}_i) + \sum_{j=1}^m \frac{\kappa}{m} \psi_\kappa(\mathbf{g}_j).
\end{equation}
The second derivative of $\psi_\kappa$ is $\frac{d^2 \psi_\kappa(u)}{du^2} = -\frac{1}{\kappa} e^{-u/\kappa} < 0$, showing that $\psi_\kappa$ is strictly concave. Therefore, the objective function $L(\mathbf{f},\mathbf{g})$ is strictly concave in $(\mathbf{f},\mathbf{g})$, while the constraint set defined by linear inequalities is convex. By the fundamental theorem of convex optimization, the optimal dual solution $(\mathbf{f}^*, \mathbf{g}^*)$ is unique and satisfies the KKT conditions: $T^*_{ij} > 0 \Rightarrow \mathbf{f}^*_i + \mathbf{g}^*_j = \mathbf{D}_{ij}$, along with the primal-dual relationships $\sum_{j=1}^m T^*_{ij} = \frac{1}{n} e^{-\mathbf{f}^*_i/\kappa}$ and $\sum_{i=1}^n T^*_{ij} = \frac{1}{m} e^{-\mathbf{g}^*_j/\kappa}$.

Given the unique optimal dual solution, we define the relative data value as
\begin{equation}
    v(\mathbf{x}_i) = -\left[\psi_\kappa(\mathbf{f}^*_i) - \frac{1}{n-1}\sum_{j \neq i} \psi_\kappa(\mathbf{f}^*_j)\right]
\end{equation}
The uniqueness and deterministic ranking properties follow directly from the uniqueness of $(\mathbf{f}^*, \mathbf{g}^*)$ and the strict monotonicity of $\psi_\kappa$. For interpretability, $v(\mathbf{x}_i) > 0$ indicates that $\psi_\kappa(\mathbf{f}^*_i)$ is below the average value of other samples, suggesting that this sample helps align the distributions (lower transport cost).

To establish the sensitivity interpretation, we consider a mass perturbation around sample $i$:
\begin{equation}
\mu^\delta = \left(\frac{1}{n} + \delta\right)\delta_{\mathbf{x}_i} + \sum_{j \neq i}\left(\frac{1}{n} - \frac{\delta}{n-1}\right)\delta_{\mathbf{x}_j}
\end{equation}
which preserves total mass. The directional derivative of $\mathcal{W}_{\text{SW}}$ with respect to this perturbation at $\delta = 0$ is
\begin{align}
\frac{d}{d\delta}\mathcal{W}_{\text{SW}}(\mu^\delta, \mu_v)\bigg|_{\delta=0} &= \frac{\partial \mathcal{W}_{\text{SW}}}{\partial \mu_i} - \frac{1}{n-1}\sum_{j \neq i}\frac{\partial \mathcal{W}_{\text{SW}}}{\partial \mu_j}.
\end{align}
By the envelope theorem applied to the dual formulation, $\frac{\partial \mathcal{W}_{\text{SW}}}{\partial \mu_i} = \frac{\kappa}{n}\psi_\kappa(\mathbf{f}^*_i)$, which yields

\begin{align}
\frac{d}{d\delta}\mathcal{W}_{\text{SW}}(\mu^\delta, \mu_v)\bigg|_{\delta=0} &= \frac{\kappa}{n}\left[\psi_\kappa(\mathbf{f}^*_i) - \frac{1}{n-1}\sum_{j \neq i}\psi_\kappa(\mathbf{f}^*_j)\right] = -\frac{\kappa}{n}v(\mathbf{x}_i).
\end{align}
This shows that $v(\mathbf{x}_i)$ measures the sensitivity of the $\mathcal{W}_\text{SW}$ distance to local mass perturbations at sample $i$, providing an economic interpretation of data value in terms of marginal contribution to distributional alignment.
\end{proof}

\subsection{Proof of Theorem \ref{thm:entropic_convergence}: Convergence of Entropy-Regularized Approximation}
\label{app:proof_theorem5}

\begin{theorem}[Convergence of Entropy-Regularized Approximation]
\label{thm:entropic_convergence}
Let $v(\mathbf{x}_i)$ and $v_\varepsilon(\mathbf{x}_i)$ be the data values derived from the optimal dual potentials $(\mathbf{f}^*, \mathbf{g}^*)$ and $(\mathbf{f}^*_\varepsilon, \mathbf{g}^*_\varepsilon)$ of the unregularized and entropy-regularized $\mathcal{W}_\text{SW}$ problems, respectively. Then:

\begin{enumerate}
\item \textbf{Pointwise Convergence:} As $\varepsilon \to 0$, we have $v_\varepsilon(\mathbf{x}_i) \to v(\mathbf{x}_i)$ for all $i \in [n]$.

\item \textbf{Ranking Preservation:} For any pair of time series segments such that $v(\mathbf{x}_i) > v(\mathbf{x}_j)$, there exists an $\varepsilon_0 > 0$ such that for all $0 < \varepsilon < \varepsilon_0$, the ranking is preserved: $v_\varepsilon(\mathbf{x}_i) > v_\varepsilon(\mathbf{x}_j)$.
\end{enumerate}
\end{theorem}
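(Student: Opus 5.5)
The plan is to reduce both claims to the convergence $\mathbf{f}^*_\varepsilon \to \mathbf{f}^*$ of the dual potentials, and then pass this through the continuous map that turns potentials into values.

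\textbf{Setting up the regularized dual.} I would first write the entropy-regularized $\mathcal{W}_\text{SW}$ problem as the primal of Eq.~\ref{eq:wsw} plus $\varepsilon D_{\text{KL}}(\mathbf{T}\,\|\,\boldsymbol{\Delta}_n\otimes\boldsymbol{\Delta}_m)$. Its dual is the unconstrained problem
\begin{equation*}
\max_{\mathbf{f},\mathbf{g}} \; \sum_i \tfrac{1}{n}\psi_\kappa(\mathbf{f}_i) + \sum_j \tfrac{1}{m}\psi_\kappa(\mathbf{g}_j) - \varepsilon\sum_{i,j}\tfrac{1}{nm}\bigl(e^{(\mathbf{f}_i+\mathbf{g}_j-\mathbf{D}_{ij})/\varepsilon}-1\bigr),
\end{equation*}
which is the standard form from \citet{sejourne2019sinkhorn}. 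The penalty term is a smooth barrier for the constraint $\mathbf{f}_i+\mathbf{g}_j\le \mathbf{D}_{ij}$, and it converges pointwise, as $\varepsilon\to0$, to the indicator of the feasible set used in the proof of Theorem~\ref{thm:segment_value}.

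\textbf{Step 1: uniform bounds on the potentials.} I would show that $\mathbf{f}^*_\varepsilon$ and $\mathbf{g}^*_\varepsilon$ lie in a compact set independent of $\varepsilon\in(0,1]$.
\begin{itemize}
\item For the upper bound, the first-order condition $e^{-\mathbf{f}_i/\kappa} = \sum_j \tfrac{1}{m} e^{(\mathbf{f}_i+\mathbf{g}_j-\mathbf{D}_{ij})/\varepsilon}$ forces $\mathbf{f}_i+\mathbf{g}_j-\mathbf{D}_{ij}\le O(\varepsilon\log m)$ for some $j$. Combining this with the symmetric condition for $\mathbf{g}$ bounds the potentials above by roughly $\max \mathbf{D} + O(\varepsilon)$.
\item For the lower bound, I would compare the objective value at $(\mathbf{f}^*_\varepsilon,\mathbf{g}^*_\varepsilon)$ with its value at $(0,0)$. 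Since $\psi_\kappa(u)\to-\infty$ as $u\to-\infty$, this comparison bounds the potentials below.
\end{itemize}

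\textbf{Step 2: $\Gamma$-convergence of the duals.} I would argue that the negated dual objectives $\Gamma$-converge to the negated constrained dual of Theorem~\ref{thm:segment_value}.
\begin{itemize}
\item The liminf inequality: any limit point of a sequence satisfying $\mathbf{f}_i+\mathbf{g}_j-\mathbf{D}_{ij}>\eta>0$ makes the barrier blow up, so limits of near-optimal points are feasible.
\item The recovery sequence: take a feasible point and shift it by $-\delta$ into the strict interior, where the barrier vanishes as $\varepsilon\to0$; then let $\delta\to0$.
\end{itemize}
Together with the compactness from Step 1, every limit point of $(\mathbf{f}^*_\varepsilon,\mathbf{g}^*_\varepsilon)$ maximizes the unregularized dual. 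By the strict concavity established in the proof of Theorem~\ref{thm:segment_value}, that maximizer is unique, so the whole family converges to $(\mathbf{f}^*,\mathbf{g}^*)$.

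\textbf{Step 3: pass to the values.} The map $\mathbf{f}\mapsto v$ of Eq.~\ref{eq:segment_value} is continuous, because $\psi_\kappa$ is smooth and the map is affine in $\psi_\kappa(\mathbf{f})$. Hence $v_\varepsilon(\mathbf{x}_i)\to v(\mathbf{x}_i)$ for every $i$, which is Claim 1.

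For Claim 2, take $i,j$ with gap $\gamma = v(\mathbf{x}_i)-v(\mathbf{x}_j) > 0$. Choose $\varepsilon_0$ so that $|v_\varepsilon-v|<\gamma/2$ at both indices for all $\varepsilon<\varepsilon_0$; then $v_\varepsilon(\mathbf{x}_i) > v_\varepsilon(\mathbf{x}_j)$. If quantitative rates are desired, one could instead quote a $O(\varepsilon\log(nm))$ bound on $\|\mathbf{f}^*_\varepsilon-\mathbf{f}^*\|_\infty$ and combine it with the Lipschitz constant of $\psi_\kappa$ on the compact set from Step 1.

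\textbf{Main obstacle.} I expect the hard part to be Step 2, and in particular the identification of the limit. Uniqueness of the unregularized dual maximizer is only asserted in the paper via strict concavity of $\psi_\kappa$. That assertion is genuinely valid here because the dual objective is strictly concave in each coordinate of $(\mathbf{f},\mathbf{g})$, which rules out the usual constant-shift degeneracy of balanced OT. So the proof hinges on this point, and I would verify it carefully before relying on it.
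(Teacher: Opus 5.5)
Your proposal is correct. Its outer structure matches the paper's: reduce to $\mathbf{f}^*_\varepsilon\to\mathbf{f}^*$, pass this through the continuous map $\mathbf{f}\mapsto v$, then use a gap argument for ranking. The difference lies entirely in how you obtain convergence of the potentials.

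The paper does not prove that step. It cites \citet{sejourne2019sinkhorn}: Proposition~21 for a uniform bound $M$ on the potentials, and Proposition~10 for $\|\mathbf{f}^*_\varepsilon-\mathbf{f}^*\|_\infty\to0$, both stated under compact-domain, Lipschitz-cost hypotheses. It then uses the Lipschitz constant $e^{M/\kappa}$ of $\psi_\kappa$ on $[-M,M]$ to get $|v_\varepsilon(\mathbf{x}_i)-v(\mathbf{x}_i)|\le 2e^{M/\kappa}\|\mathbf{f}^*_\varepsilon-\mathbf{f}^*\|_\infty$, and finishes with a $\delta/3$ argument.

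You instead prove the key step directly in the finite-dimensional setting. You use a priori bounds, $\Gamma$-convergence of the dual objectives, and uniqueness of the unregularized maximizer from the separable strict concavity of $\psi_\kappa$ to upgrade subsequential convergence to convergence of the whole family. Your route is self-contained, shows exactly where strict concavity is needed, gives existence of the unregularized maximizer for free, and avoids checking the hypotheses of continuous-setting results. The paper's route is shorter and gives an explicit Lipschitz link between value error and potential error, though neither route gives a rate.

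Your argument also covers the $-\varepsilon H(\mathbf{T})$ regularizer actually used in Algorithm~\ref{alg:w-lava}. That regularizer differs from yours only by shifting $\mathbf{D}$ by $O(\varepsilon\log(nm))$ plus a constant, and its barrier is still between $0$ and $\varepsilon nm$ on the feasible set. Incidentally, constant recovery sequences already suffice: your barrier lies in $[-\varepsilon,0]$ on the feasible set, so the $-\delta$ shift is unnecessary.

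There are two small repairs.

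First, in Step~1 the order should be reversed. The first-order condition gives $\mathbf{f}_i(1+\varepsilon/\kappa)+\mathbf{g}_j-\mathbf{D}_{ij}\le\varepsilon\log m$ for \emph{every} $j$, not for some $j$. This bounds $\mathbf{f}_i$ above only once you have a lower bound on $\mathbf{g}_j$, which comes from your objective comparison. That comparison works because the value at $(0,0)$ is nonnegative (since $\mathbf{D}\ge0$), while $\psi_\kappa\le\kappa$ and the barrier term is at most $\varepsilon$. The resulting upper bound is $\max\mathbf{D}$ plus that lower-bound constant, not $\max\mathbf{D}+O(\varepsilon)$; this is harmless for compactness.

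Second, drop the aside about quoting an $O(\varepsilon\log(nm))$ rate for $\|\mathbf{f}^*_\varepsilon-\mathbf{f}^*\|_\infty$. That is the familiar bound on the \emph{value} gap. The straightforward conversion to potentials, via strong concavity of $\psi_\kappa$ on the bounded set, yields only a square-root rate, and neither claim of the theorem needs a rate.
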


\begin{proof}
The proof builds on the convergence theory for unbalanced optimal transport~\citep{sejourne2019sinkhorn,pham2020unbalanced}. The data values are defined as functions of the optimal dual potentials:
\begin{align}
v(\mathbf{x}_i) &= \psi_\kappa(\mathbf{f}^*_i) - \frac{1}{n-1}\sum_{j \neq i} \psi_\kappa(\mathbf{f}^*_j) \\
v_\varepsilon(\mathbf{x}_i) &= \psi_\kappa(\mathbf{f}^*_{\varepsilon,i}) - \frac{1}{n-1}\sum_{j \neq i} \psi_\kappa(\mathbf{f}^*_{\varepsilon,j})
\end{align}
where $\psi_\kappa(u) = \kappa(1 - e^{-u/\kappa})$ is the transformation function arising from the KL divergence penalty.

Following~\citet{sejourne2019sinkhorn} (Proposition 21), the optimal dual potentials are uniformly bounded: there exists $M > 0$ independent of $\varepsilon$ such that $\|\mathbf{f}^*\|_\infty, \|\mathbf{f}^*_\varepsilon\|_\infty \leq M$ and $\|\mathbf{g}^*\|_\infty, \|\mathbf{g}^*_\varepsilon\|_\infty \leq M$. This boundedness is established through the coercivity of the dual functional and holds for unbalanced optimal transport with KL divergence penalties on compact domains.

\textbf{Convergence of Dual Potentials.} From~\citet{sejourne2019sinkhorn} (Proposition 10), for the unbalanced optimal transport setting with KL divergence penalties, under the assumptions of compact domain $\mathcal{X}$ and Lipschitz continuous cost $D^{(W)}$, the optimal dual potentials converge uniformly as the regularization parameter vanishes:
\begin{equation}
\lim_{\varepsilon \to 0} \|\mathbf{f}^*_\varepsilon - \mathbf{f}^*\|_\infty = 0 \quad \text{and} \quad \lim_{\varepsilon \to 0} \|\mathbf{g}^*_\varepsilon - \mathbf{g}^*\|_\infty = 0
\end{equation}

This convergence is established through weak continuous dependence of dual potentials on the regularization parameter. Furthermore, \citet{sejourne2019sinkhorn} (Theorem 1) guarantees that for any fixed $\varepsilon > 0$, the optimal potentials $(\mathbf{f}^*_\varepsilon, \mathbf{g}^*_\varepsilon)$ can be computed via the Sinkhorn algorithm with linear convergence.

\textbf{Convergence of Data Values.} The function $\psi_\kappa(u) = \kappa(1 - e^{-u/\kappa})$ has derivative $\psi'_\kappa(u) = e^{-u/\kappa}$. Since the dual potentials are uniformly bounded in $[-M, M]$, $\psi_\kappa$ is Lipschitz continuous on this compact set with Lipschitz constant:
\begin{equation}
L = \sup_{u \in [-M, M]} |\psi'_\kappa(u)| = \sup_{u \in [-M, M]} e^{-u/\kappa} = e^{M/\kappa} < \infty
\end{equation}

Using this Lipschitz property and the triangle inequality:
\begin{align}
|v_\varepsilon(\mathbf{x}_i) - v(\mathbf{x}_i)| &= \Big|\left(\psi_\kappa(\mathbf{f}^*_{\varepsilon,i}) - \psi_\kappa(\mathbf{f}^*_i)\right) - \frac{1}{n-1}\sum_{j \neq i} \left(\psi_\kappa(\mathbf{f}^*_{\varepsilon,j}) - \psi_\kappa(\mathbf{f}^*_j)\right)\Bigr| \\
&\leq |\psi_\kappa(\mathbf{f}^*_{\varepsilon,i}) - \psi_\kappa(\mathbf{f}^*_i)| + \frac{1}{n-1}\sum_{j \neq i} |\psi_\kappa(\mathbf{f}^*_{\varepsilon,j}) - \psi_\kappa(\mathbf{f}^*_j)| \\
&\leq L|\mathbf{f}^*_{\varepsilon,i} - \mathbf{f}^*_i| + \frac{L}{n-1}\sum_{j \neq i} |\mathbf{f}^*_{\varepsilon,j} - \mathbf{f}^*_j| \\
&\leq L\|\mathbf{f}^*_\varepsilon - \mathbf{f}^*\|_\infty + L\|\mathbf{f}^*_\varepsilon - \mathbf{f}^*\|_\infty \\
&= 2L\|\mathbf{f}^*_\varepsilon - \mathbf{f}^*\|_\infty
\end{align}

From above, we know that $\|\mathbf{f}^*_\varepsilon - \mathbf{f}^*\|_\infty \to 0$ as $\varepsilon \to 0$. Since $L$ is a finite constant (independent of $\varepsilon$), we obtain:
\begin{equation}
\lim_{\varepsilon \to 0} |v_\varepsilon(\mathbf{x}_i) - v(\mathbf{x}_i)| = \lim_{\varepsilon \to 0} 2L\|\mathbf{f}^*_\varepsilon - \mathbf{f}^*\|_\infty = 0 \quad \text{for all } i \in [n]
\end{equation}
which establishes pointwise convergence.

\textbf{Ranking Preservation.} Suppose $v(\mathbf{x}_i) > v(\mathbf{x}_j)$ and let $\delta = v(\mathbf{x}_i) - v(\mathbf{x}_j) > 0$ be the gap between these data values. From the pointwise convergence established in Step 3, for the positive value $\delta/3$, there exists an $\varepsilon_0 > 0$ such that for all $0 < \varepsilon < \varepsilon_0$:
\begin{equation}
|v_\varepsilon(\mathbf{x}_k) - v(\mathbf{x}_k)| < \frac{\delta}{3} \quad \text{for all } k \in \{i, j\}
\end{equation}

Using this bound:
\begin{align}
v_\varepsilon(\mathbf{x}_i) - v_\varepsilon(\mathbf{x}_j) &= (v(\mathbf{x}_i) - v(\mathbf{x}_j)) + (v_\varepsilon(\mathbf{x}_i) - v(\mathbf{x}_i)) - (v_\varepsilon(\mathbf{x}_j) - v(\mathbf{x}_j)) \\
&= \delta + (v_\varepsilon(\mathbf{x}_i) - v(\mathbf{x}_i)) - (v_\varepsilon(\mathbf{x}_j) - v(\mathbf{x}_j)) \\
&\geq \delta - |v_\varepsilon(\mathbf{x}_i) - v(\mathbf{x}_i)| - |v_\varepsilon(\mathbf{x}_j) - v(\mathbf{x}_j)| \\
&> \delta - \frac{\delta}{3} - \frac{\delta}{3} \\
&= \frac{\delta}{3} > 0
\end{align}

Therefore, for all $0 < \varepsilon < \varepsilon_0$, we have $v_\varepsilon(\mathbf{x}_i) > v_\varepsilon(\mathbf{x}_j)$, which proves that the ranking is preserved.
\end{proof}

\textbf{Numerical Verification of Theorem \ref{thm:entropic_convergence}.} We verify the convergence properties using synthetic time series data. We generate $40$ training time series and $30$ validation time series, each segmented into overlapping windows. This yields training segments labeled as high quality (clean sinusoids, $\sigma=0.05$), medium quality (moderate noise with harmonics, $\sigma=0.2$), or low quality (pure noise or severe outliers). We compute segment-wise data values using $\varepsilon \in [10^{-4}, 1]$ with $20$ logarithmically-spaced values and $\kappa=1.0$.

Fig.\ref{fig:convergence} presents three verification aspects. The left plot shows segment value trajectories for $10$ representative segments, demonstrating pointwise convergence as $\varepsilon \to 0$. The middle plot displays ${L}_1$ and $L_\infty$ errors between $v_\varepsilon$ and the reference solution ($\varepsilon_{\text{ref}}=10^{-4}$) on a log-log scale. The empirical convergence rate of $O(\varepsilon^{0.88})$ closely matches the theoretical $O(\varepsilon)$ prediction, with the $L_\infty$ error naturally exceeding ${L}_1$ as it captures worst-case segment behavior. The right plot shows Spearman correlation coefficients consistently above $0.95$, confirming that segment rankings remain stable across all $\varepsilon$ values. These results validate both claims of Theorem \ref{thm:entropic_convergence}: (i) segment-wise convergence $v_\varepsilon(\mathbf{x}_i) \to v(\mathbf{x}_i)$ as $\varepsilon \to 0$, and (ii) ranking preservation across the full range of regularization parameters.

\begin{figure}[h]
    \centering
    \includegraphics[width=\textwidth]{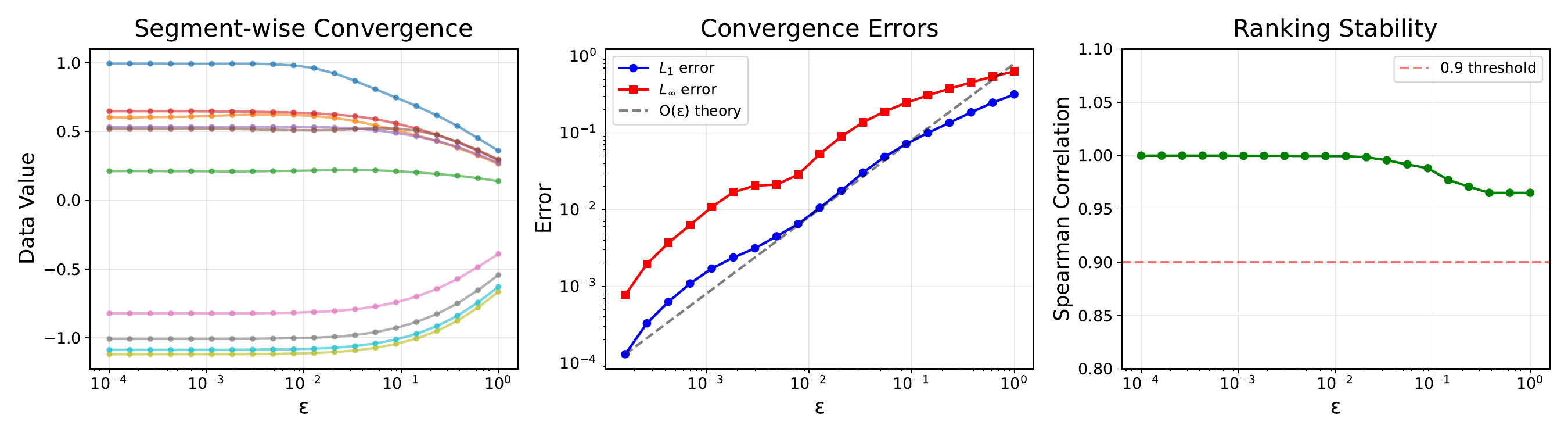}
    \caption{Numerical verification of entropy-regularized UOT convergence for time series segments. Left: Value trajectories showing segment-wise convergence. Middle: ${L}_1$ and $L_\infty$ convergence errors on log-log scale with theoretical $O(\varepsilon)$ reference (dashed). Right: Spearman correlation demonstrating ranking stability.}
    \label{fig:convergence}
\end{figure}
\section{Experiments}
\label{app:experiments}

\subsection{The \textsc{TimeLAVA} Algorithm}
\label{app:algorithm}

In this appendix, we present the segment-wise and point-wise valuation algorithms for time series, which together complete the \textsc{TimeLAVA} framework. We also provide an analysis of the computational complexity of the algorithm 
and clarify the role of the reference time series.

\subsubsection{Segment-wise Valuation Algorithm}
Building on the methodology developed, the detailed segment-wise valuation procedure of \textsc{TimeLAVA} is summarized in Algorithm~\ref{alg:w-lava}.

\begin{algorithm}[H]
\caption{\textsc{TimeLAVA} Algorithm}
\label{alg:w-lava}
\begin{algorithmic}[1]
\Require  Evaluated segments $\mathcal{D}_\text{eval} = \{(\mathbf{x}_i, y_i)\}_{i=1}^n$, reference segments $\mathcal{D}_\text{ref} = \{(\mathbf{x}'_j, y'_j)\}_{j=1}^m$, wavelet $\psi$, parameters $(\kappa, c, \epsilon)$
\Ensure Segment values $\{v_{\epsilon}(\mathbf{x}_i)\}_{i=1}^n$

\Statex {\color{blue1} // Compute Wavelet Representations}
\State Compute DWT coefficients $\Psi(\mathbf{x}_i)$ for all $\mathbf{x}_i \in \mathcal{D}_\text{eval}$ and $\Psi(\mathbf{x}'_j)$ for all $\mathbf{x}'_j \in \mathcal{D}_\text{ref}$.

\Statex {\color{blue1} // Compute Pairwise Cost Matrix $\mathbf{D}^{(W)}$}
\For{$i = 1$ to $n$}
    \For{$j = 1$ to $m$}
        \State Compute ground wavelet distance $d_\text{wav}(\mathbf{x}_i, \mathbf{x}'_j) = \| \Psi(\mathbf{x}_i) - \Psi(\mathbf{x}'_j) \|_1$.
        
        \If{$c > 0$}
            \State Compute $\mathcal{W}_{d_\text{wav}} \bigl(\mu_\text{eval}(\cdot | y_i), \mu_\text{ref}(\cdot | y'_j) \bigr)$ using $d_\text{wav}$ as the ground metric.
            \State Set $\mathbf{D}^{(W)}_{i,j} = d_\text{wav}(\mathbf{x}_i, \mathbf{x}'_j) + c \cdot \mathcal{W}_{d_\text{wav}}(\dots)$.
        \Else
            \State Set $\mathbf{D}^{(W)}_{i,j} = d_\text{wav}(\mathbf{x}_i, \mathbf{x}'_j)$.
        \EndIf
    \EndFor
\EndFor

\Statex {\color{blue1} // Solve Regularized UOT Problem}
\State Solve the entropy-regularized UOT problem using Sinkhorn algorithm:
   $$ \min_{\mathbf{T} \geq 0} \langle \mathbf{D}^{(W)}, \mathbf{T} \rangle + \kappa D_\text{KL}(\mathbf{T}\mathbf{1}_m \| \boldsymbol{\Delta}_n) + \kappa D_\text{KL}(\mathbf{T}^\top\mathbf{1}_n \| \boldsymbol{\Delta}_m) - \epsilon H(\mathbf{T}) $$
   to obtain the optimal dual variables $(\mathbf{f}^*_{\epsilon}, \mathbf{g}^*_{\epsilon})$.

\Statex {\color{blue1} // Compute Data Values}
\For{$i = 1$ to $n$}
    \State Compute $\phi_i = \psi_\kappa(\mathbf{f}^*_{\epsilon,i}) = \kappa(1 - e^{-\mathbf{f}^*_{\epsilon,i}/\kappa})$.
\EndFor
\State Compute sum $S = \sum_{j=1}^n \phi_j$.
\For{$i = 1$ to $n$}
    \State Compute $v_\epsilon(\mathbf{x}_i) = -\left(\phi_i - \frac{S-\phi_i}{n-1}\right)$.
\EndFor \\
\Return $\{v_{\epsilon}(\mathbf{x}_i)\}_{i=1}^n$.
\end{algorithmic}
\end{algorithm}

\subsubsection{Point-wise Valuation Algorithm}

Algorithm \ref{alg:point-wise} outlines the mapping process from segment contributions to point-wise contributions.

\begin{algorithm}
\caption{Point-wise Contribution Calculation}
\label{alg:point-wise}
\begin{algorithmic}[1]
\Require Segment contribution values $\{v_\epsilon(\mathbf{z}^{[n]})\}$, original time series length $N$, segment length $T$, sliding stride $s$
\Ensure Point-wise contribution values $\{v(t)\}_{t=1}^{N}$
\State Initialize contribution counters $\text{count}[t] = 0$ and contribution sums $\text{sum}[t] = 0$ for each time point $t = 1, 2, \ldots, N$
\For{each segment $\mathbf{z}^{[n]}$ and its contribution value $v_\epsilon(\mathbf{z}^{[n]})$}
    \State Determine the time range $[\text{start}, \text{end}]$ covered by this segment
    \For{$t \in [\text{start}, \text{end}]$}
        \State $\text{sum}[t] \mathrel{+}= v_\epsilon(\mathbf{z}^{[n]})$
        \State $\text{count}[t] \mathrel{+}= 1$
    \EndFor
\EndFor
\For{each time point $t = 1, 2, \ldots, N$}
    \If{$\text{count}[t] > 0$}
        \State $v(t) = \text{sum}[t] / \text{count}[t]$
    \Else
        \State $v(t) = 0$ \Comment{Point not covered by any segment}
    \EndIf
\EndFor \\
\Return $\{v(t)\}_{t=1}^{N}$
\end{algorithmic}
\end{algorithm}

\subsubsection{Computational Complexity}
\label{app:computationalcomplexity}
The computational complexity of \textsc{TimeLAVA} is dominated by three components. First, the wavelet transform for $N$ training and $M$ validation segments of length $L$ requires $O((N+M) \cdot L)$ using Mallat's pyramidal algorithm. Second, computing the pairwise cost matrix requires $O(N \cdot M \cdot L)$ operations. 
Third, solving the entropy-regularized UOT problem requires $O(N \cdot M \cdot \log(1/\epsilon))$ operations. 
Together, the overall complexity is $O((N+M) \cdot L + N \cdot M \cdot L)$, which simplifies to $O(N \cdot M \cdot L)$ when $N \cdot M >> N+M$, as is typical in practice. When using label information ($c > 0$), an additional $O(N\cdot M\cdot K^2)$ term arises from computing conditional Wasserstein distances, where $K$ is the average number of segments per label class. All experiments were conducted on a workstation equipped with an {Intel Xeon Gold 6448H CPU (2.0 GHz, 32 cores)}, and {512 GB RAM}. While our implementation is CPU-based, it can achieve even faster runtimes with GPU acceleration. Unless otherwise specified, we set $\kappa = 2.0$ for the UOT regularization parameter, $\varepsilon = 0.01$ for the entropy regularization, and use a Daubechies-4 (db4) wavelet with decomposition level $2$ across all experiments.


\subsubsection{On the Role of Reference Time Series} 
In learning-agnostic valuation frameworks, the \emph{reference distribution} is the distribution against which the contributions of training data are evaluated. In i.i.d.\ settings such as \textsc{LAVA}, the reference set is the validation set~\citep{Just2023LAVA}. For time series applications, however, the choice of reference data depends on the task: validation sequences in forecasting, normal (non-anomalous) segments in anomaly detection, or clean segment–label pairs in label-noise detection. In our experiments, we adopt these task-specific settings when defining the reference time series. In all cases, the reference distribution provides a meaningful anchor, allowing valuation to quantify how much each training segment contributes to aligning the training distribution with the target task distribution.

\subsection{Anomaly Detection}
\label{app:anomalydetection}

Details of the datasets used in our anomaly detection experiments are summarized in Table~\ref{tab:anomaly_datasets}. For each dataset, we report the dimensionality, number of time series, average length, and average anomaly ratio. We categorize the datasets into two groups: 
\begin{enumerate} 
    \item \textbf{Pre-partitioned datasets} provide predefined splits consisting of a clean validation set, a contaminated test set with anomalies, and the corresponding labels (e.g., SMD~\citep{su2019robust}, SMAP and MSL~\citep{hundman2018detecting}, PSM~\citep{abdulaal2021practical}, SWaT~\citep{mathur2016swat}). We directly adopt these official splits for evaluating \textsc{TimeLAVA}.
    \item \textbf{Single-set datasets} consist of a single continuous time series with anomaly labels only (e.g., WADI~\citep{wadi}, NAB~\citep{nab}, KDD-CUP99~\citep{kdd}). We manually partition each dataset into a clean validation set and a contaminated test set. 
\end{enumerate} 

\begin{table}[h]
\centering
\caption{Details of the anomaly detection datasets. Note that all discrete-valued dimensions are excluded from each dataset. The term ``Average Length'' refers to the average length across all time series in the dataset.}
\label{tab:anomaly_datasets}
{\small
\resizebox{0.9\textwidth}{!}{
\begin{tabular}{l|c|c|c|c}
\toprule
Dataset & Dimensions & Num. of Time Series & Average Length & Average Anomaly Ratio \\
\midrule
UCR & 1 & 250 & 56,205 & 0.8\% \\
SMAP & 25 & 55 & 8,068 & 12.8\% \\
MSL & 55 & 27 & 2,730 & 10.5\% \\
NAB-Traffic & 1 & 7 & 2,238 & 10.0\% \\
SMD & 38 & 28 & 25,300 & 4.2\% \\
NAB-AdExchange & 1 & 6 & 1,602 & 10.0\% \\
NAB-Tweets & 1 & 10 & 15,863 & 9.9\% \\
NAB-Taxi & 1 & 1 & 10,320 & 5.2\% \\
PSM & 25 & 1 & 87,841 & 27.8\% \\
SWaT & 51 & 1 & 449,919 & 12.1\% \\
WADI & 123 & 1 & 172,751 & 5.7\% \\
KDD-CUP99 & 34 & 1 & 494,021 & 19.7\% \\
\bottomrule
\end{tabular}}}
\end{table}

\subsubsection{Additional experimental results}

Table~\ref{tab:supplementary} and Fig.~\ref{fig:UCR_full} demonstrate \textsc{TimeLAVA}'s consistently strong performance across all datasets, achieving the highest AUC on four of six benchmarks and leading F1 scores on NAB-Traffic and NAB-Tweets. This performance spans both univariate and multivariate anomaly detection tasks.

In contrast, TimeInf, despite being designed for time series valuation, shows limited effectiveness, particularly for local contextual and point anomalies, as shown in Fig.~\ref{fig:UCR_full}. This weakness stems from TimeInf's reliance on influence functions, which assume approximate stationarity and struggle to capture highly localized temporal events. While influence functions excel at measuring global perturbation effects, they lack the time-frequency localization needed to identify brief, transient anomalies that manifest at specific temporal scales, a capability that is instead provided by \textsc{TimeLAVA}'s wavelet-based approach. {Moreover, TIMELAVA's performance benefits from larger datasets. Larger datasets provide more segment pairs for UOT matching, reducing valuation score variance and improving stability. This trend is evident across our benchmarks: as dataset size increases from SMAP (8k points, AUC=0.74) to PSM (87k, AUC=0.77) to SWaT (449k, AUC=0.86) to KDD-Cup99 (494k, AUC=0.98), performance consistently improves.}

Additionally, our experimental setup intentionally restricts reference time series to approximately 2,000 consecutive time points for large datasets (e.g. PSM), reflecting realistic data-scarce scenarios that are common in practical deployments. This constraint particularly impacts deep learning methods: DCdetector and AnomalyTransformer require extensive parameterization, which depends on large-scale training data to learn robust temporal representations. When reference data is scarce, \textsc{TimeLAVA} maintains strong performance while deep learning baselines deteriorate significantly, demonstrating its superior robustness and practical applicability, a common challenge in real-world anomaly detection systems.

{We have also included the comparison of anomaly score distributions between normal and anomalous segments to \textit{visually} assess the discriminative capability of different methods (\textsc{TimeLAVA}, {TimeInf}, {IsolationForest}, and {AnomalyTransformer}) on the PSM dataset. As shown in Fig.~\ref{fig:scoredistribution}, \textsc{TimeLAVA} demonstrates the clearest separation with minimal overlap, indicating superior discriminative power compared to other methods.}

\begin{figure}[H]
  \centering
      \includegraphics[width=\linewidth]{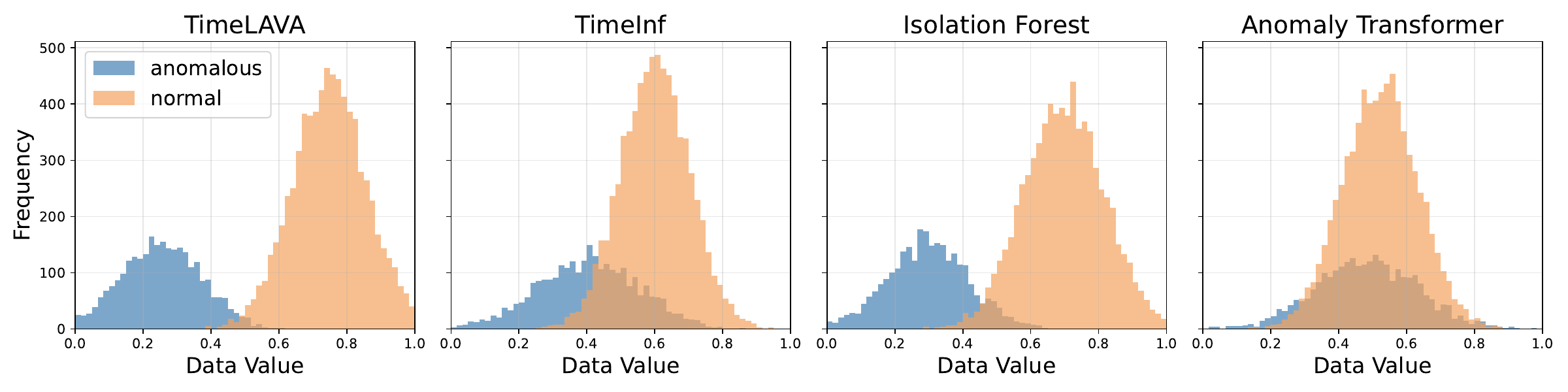}
        \caption{\textbf{Anomaly score distributions.} TimeLAVA achieves distinct separation between normal and anomalous segments, whereas baselines exhibit significant overlap.}
  \label{fig:scoredistribution}
  \vspace{-15pt}
\end{figure}

\begin{figure}[H]
\centering
\includegraphics[width=\linewidth]{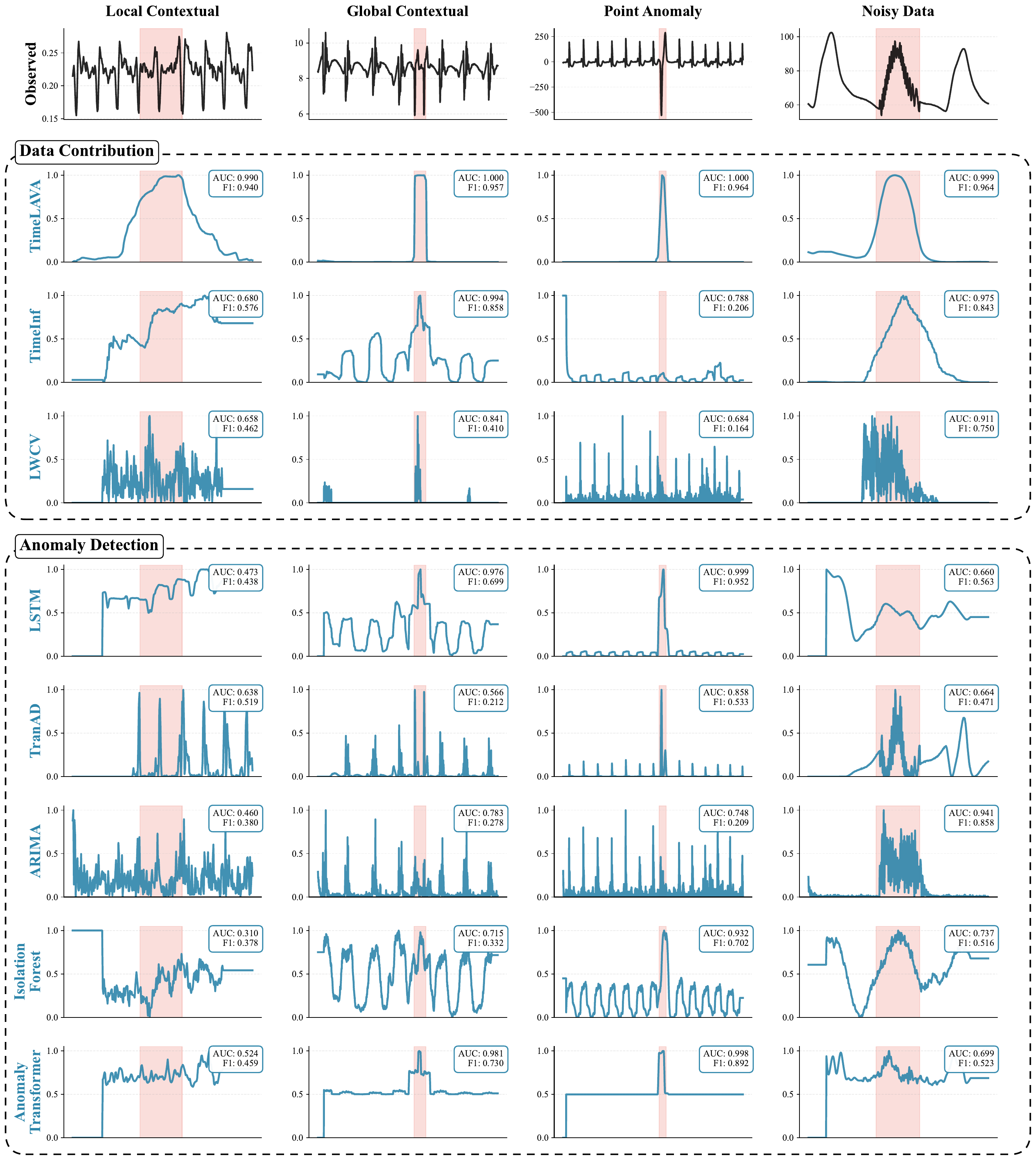}
\caption{\small
Comprehensive results on the \textbf{UCR Time Series Anomaly Archive.}}
\label{fig:UCR_full}
\end{figure}

\subsubsection{Parameter Sensitivity Analysis}
\label{app:parameter sensitivity analysis}
In this section, we analyze the impact of key parameters on the performance of \textsc{TimeLAVA} using the SMAP and MSL datasets~\citep{hundman2018detecting}. The main observations can be summarized as follows:

\begin{wrapfigure}{l}{0.5\textwidth}  
  \centering
  \includegraphics[width=0.24\textwidth]{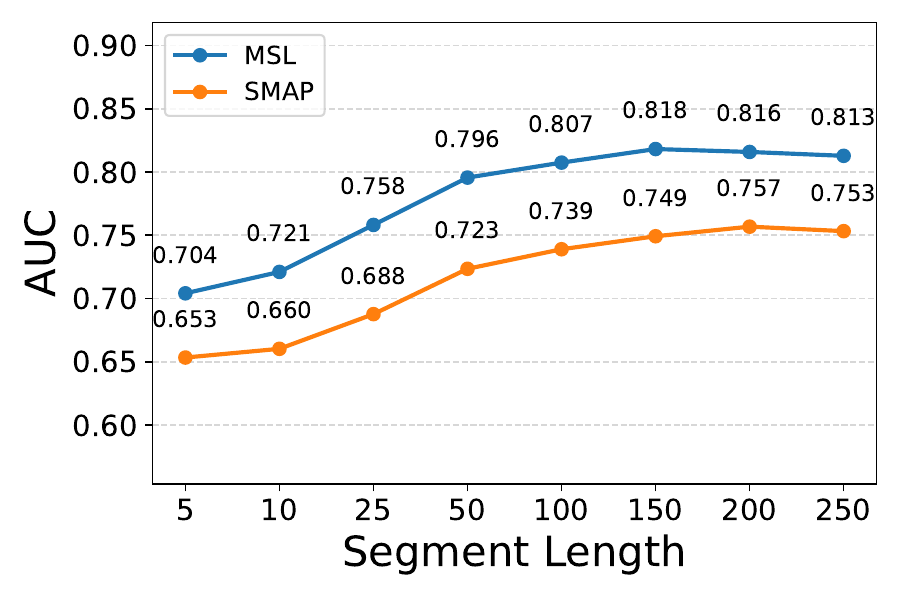}\hfill
  \includegraphics[width=0.24\textwidth]{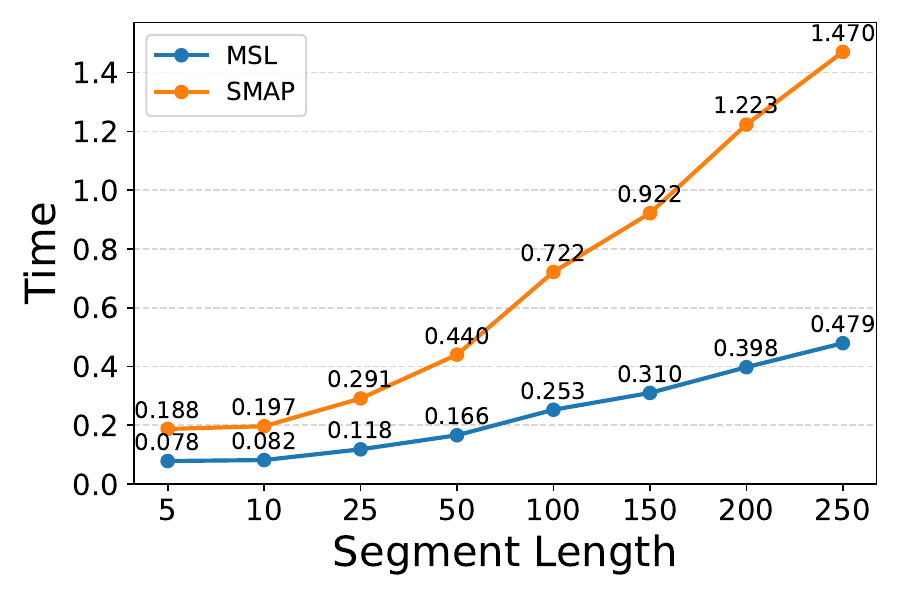}
  \caption{\small Parameter sensitivity analysis on segment length for \textsc{TimeLAVA}.}
  \label{fig:segment_length}
  \vspace{-10pt}  
\end{wrapfigure}

\textbf{Segment Length.} As shown in Fig.~\ref{fig:segment_length}, increasing the segment length generally leads to improved AUC for both datasets. This observation is consistent with the intuition that longer segments provide a broader temporal context, enabling the wavelet transform to capture more comprehensive patterns and dependencies. For the MSL dataset, AUC improves steadily from $0.704$ at a length of $5$ to approximately $0.818$ at a length of $150$, after which the gains plateau. A similar trend is observed for SMAP. The empirical growth in computational cost observed in Fig.~\ref{fig:segment_length} aligns with the theoretical time complexity of the discrete wavelet transform, which is $O(L)$ in the segment length $L$. Hence, there is a clear trade-off between detection performance and runtime: longer segments enhance detection accuracy by capturing richer temporal structure and preserving local patterns, but they also incur higher computational overhead. 

\begin{wrapfigure}{r}{0.3\textwidth}  
  \centering
  \vspace{-15pt}
  \includegraphics[width=0.28\textwidth]{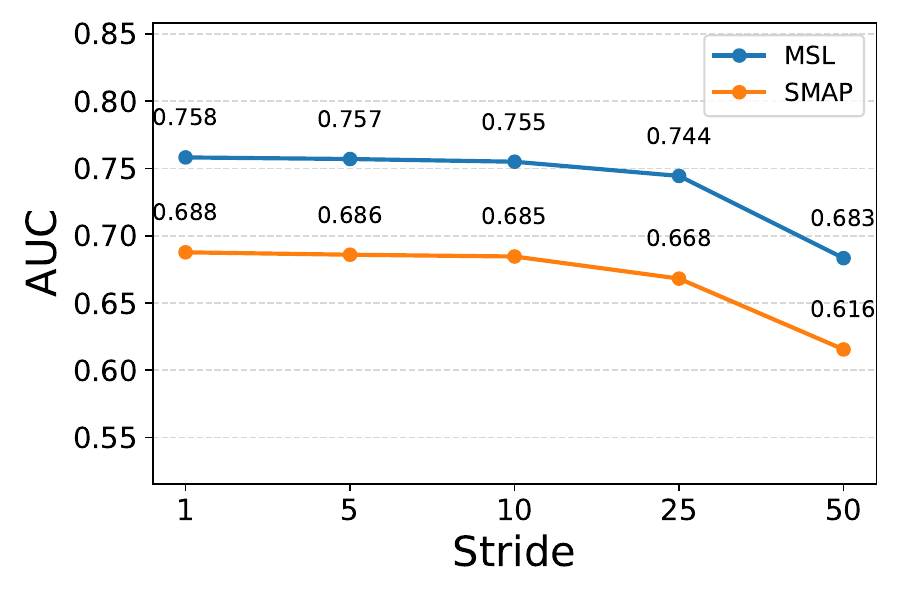}
  \scriptsize \caption{Parameter sensitivity analysis on Stride $s$.}
  \label{fig:stride}
  \vspace{-15pt}
\end{wrapfigure}

Moreover, the appropriate segment length is often domain dependent, as temporal dynamics unfold at different scales in applications such as electricity consumption, mobility, or traffic monitoring.

\textbf{Stride.} Fig.~\ref{fig:stride} illustrates the impact of stride on performance. Smaller stride consistently yields higher AUC for both datasets, as increased overlap between consecutive segments preserves fine-grained temporal information, reducing the risk of missing anomalies of just a few points. As the stride increases from $1$ to $50$, AUC decreases notably, for instance, MSL drops from $0.76$ to $0.68$ and SMAP from $0.69$ to $0.62$. This degradation aligns with the intuition that larger stride will lose certain local information. Therefore, in practice we avoid setting the stride too large, as non-overlapping segments risk missing short anomalies and degrading detection accuracy. The observed computational trends agree with the expected complexity that if the sequence length is $T$ and stride is $s$, then the number of

\begin{wrapfigure}{l}{0.25\textwidth}  
  \centering
  \vspace{0pt}
  \includegraphics[width=0.23\textwidth]{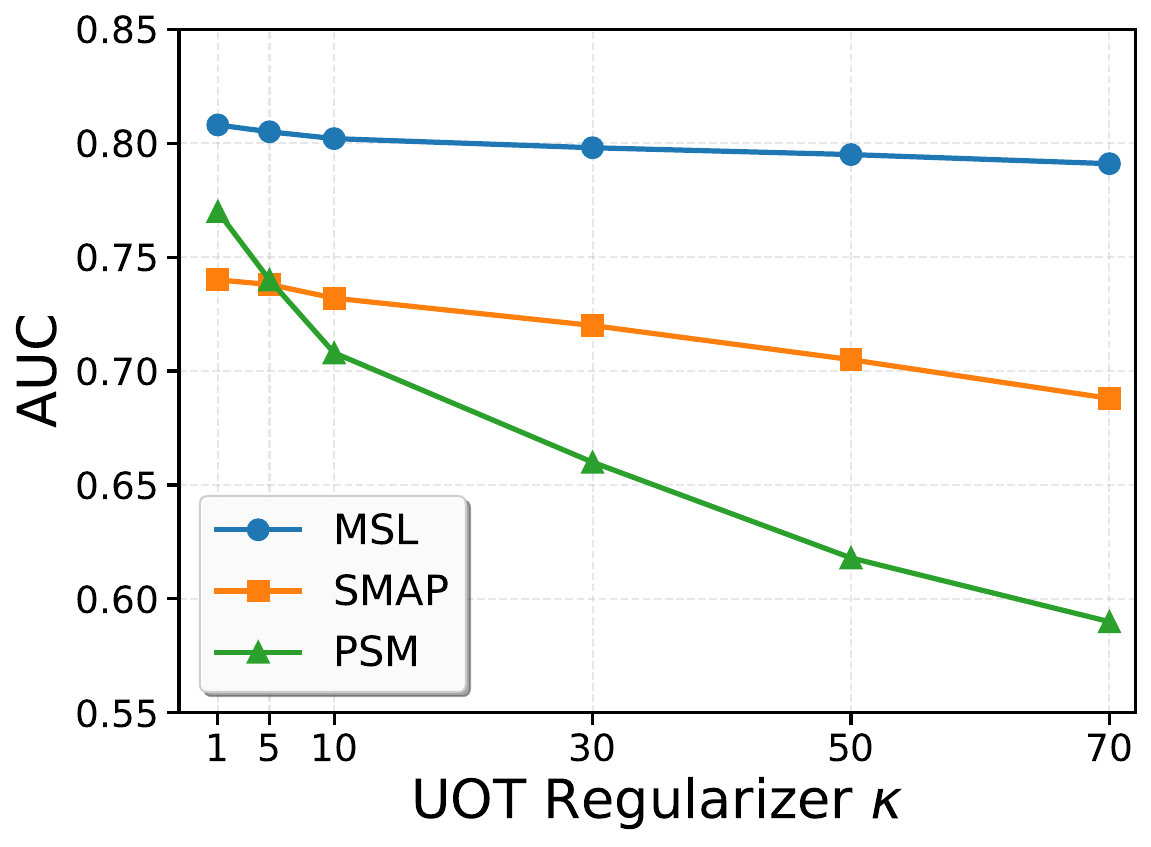}
  \scriptsize  \caption{Parameter sensitivity analysis on UOT regularizer $\kappa$.}
  \label{fig:kappa}
  \vspace{-10pt}
\end{wrapfigure}

segments is approximately $T/s$, giving an overall cost of $O((T/s)\cdot N)$~\citep{mallat1989theory}. Hence, using $s=1$ is significantly more expensive than $s=50$, consistent with the linear dependence on $1/s$.

\textbf{UOT Regularizer.} Figure~\ref{fig:kappa} illustrates the effect of the UOT regularization strength~$\kappa$ on performance. \textsc{TimeLAVA} remains highly stable for $\kappa$ between~1 and~10, where the AUC varies only minimally; our default choice $\kappa=2$ lies well within this flat and reliable region. As $\kappa$ increases beyond~30, the formulation begins to resemble balanced OT, which forces mass-conserving matches between segments. This reduces the method's ability to downweight unmatched or outlier segments and leads to a gradual decline in performance. Overall, the trend demonstrates that \textsc{TimeLAVA} is robust to $\kappa$ over a wide operating range, with degradation only at extreme values.

\begin{table}[t]
\centering
\caption{Supplementary result details. Higher AUC/F1 is better ($\uparrow$).}
\label{tab:supplementary}

\resizebox{0.88\textwidth}{!}{
\begin{tabular}{l|cc|cc|cc|cc|cc|cc|cc}
\toprule
\multicolumn{1}{c|}{\textbf{Method}} &
\multicolumn{2}{c|}{\textbf{SWaT}} &
\multicolumn{2}{c|}{\textbf{WADI}} &
\multicolumn{2}{c|}{\textbf{NAB-Taxi}} &
\multicolumn{2}{c|}{\textbf{NAB-Tweets}} &
\multicolumn{2}{c|}{\textbf{NAB-AdExchange}} &
\multicolumn{2}{c|}{\textbf{NAB-Traffic}} &
\multicolumn{2}{c}{\textbf{KDD-Cup99}} \\
\cmidrule(lr){2-3}\cmidrule(lr){4-5}\cmidrule(lr){6-7}\cmidrule(lr){8-9}\cmidrule(lr){10-11}\cmidrule(lr){12-13}\cmidrule(lr){14-15}
 & AUC $\uparrow$ & F1 $\uparrow$
 & AUC $\uparrow$ & F1 $\uparrow$
 & AUC $\uparrow$ & F1 $\uparrow$
 & AUC $\uparrow$ & F1 $\uparrow$
 & AUC $\uparrow$ & F1 $\uparrow$
 & AUC $\uparrow$ & F1 $\uparrow$
 & AUC $\uparrow$ & F1 $\uparrow$ \\
\midrule
Isolation Forest     & \textbf{0.87} & \textbf{0.69} & \underline{0.74} & \underline{0.34} & \textbf{0.64} & \underline{0.13} & 0.55 & \underline{0.18} & 0.50 & 0.11 & 0.57 & 0.20 & 0.74 & 0.41 \\
LSTM                 & 0.30 & 0.16 & 0.49 & 0.08 & 0.33 & 0.04 & 0.55 & 0.17 & \underline{0.56} & 0.12 & 0.57 & 0.19 & \textbf{0.98} & \underline{0.88} \\
ARIMA/VAR            & 0.46 & 0.11 & 0.41 & 0.04 & 0.49 & 0.05 & \underline{0.56} & 0.17 & 0.51 & 0.14 & 0.55 & 0.20 & \underline{0.87} & 0.52 \\
Anomaly Transformer  & 0.60 & 0.22 & 0.50 & 0.06 & 0.51 & 0.06 & 0.50 & 0.08 & 0.44 & 0.07 & 0.46 & 0.05 & 0.54 & 0.17 \\
TimeInf              & 0.61 & 0.08 & 0.63 & 0.14 & 0.52 & 0.02 & 0.52 & 0.16 & 0.54 & \textbf{0.34} & \underline{0.64} & \textbf{0.39} & 0.79 & 0.43 \\
\midrule
\textbf{\textsc{TimeLAVA} (Ours)} &
\underline{0.86} & \underline{0.68} &
\textbf{0.85} & \textbf{0.43} &
\underline{0.56} & \textbf{0.18} &
\textbf{0.64} & \textbf{0.24} &
\textbf{0.73} & \underline{0.27} &
\textbf{0.74} & \underline{0.34} &
\textbf{0.98} & {\textbf{0.93}} \\
\bottomrule
\end{tabular}
}
\end{table}

\subsubsection{Generality Analysis}
\label{exp:generality analysis}
In this section, we explore alternative implementations to the key components of $\mathcal{W}_{\text{SW}}$ to justify its rationale and advantages. Table~\ref{tab:timelava_compare_full} present an ablation study evaluating the impact of two key components in \textsc{TimeLAVA}: the UOT formulation and the wavelet-based representation.

\paragraph{Effects of the Discrepancy Measure.} Standard OT enforces strict mass-preserving matching constraints, requiring every unit of probability mass to be transported exactly. This becomes problematic under non-stationarity: when anomalies introduce or remove structure, OT spreads the cost across both normal and anomalous regions, blurring detection boundaries. UOT addresses this by relaxing mass conservation through divergence regularization~\citep{fatras2021unbalanced, sejourne2019sinkhorn}, allowing unmatched mass to be penalized directly. This produces sharper, more localized anomaly scores. Replacing UOT with standard OT (\textsc{TimeLAVA}-OT) leads to consistent drops in both AUC and F1 across most datasets (Table~\ref{tab:timelava_compare_full}), with Fig.~\ref{fig:OT-compare} illustrating how UOT's selective matching yields superior anomaly localization. These empirical findings also corroborate Theorem~\ref{thm:robustness}, which shows that the distribution patterns of data value scores can distinguish isolated anomalies from systemic regime shifts in real data.

\paragraph{Effects of the Distance Metric.} Fourier transform provides global frequency decomposition but suffers from poor temporal localization. In contrast, wavelet transform offers joint time-frequency decomposition, preserving both spectral and temporal localization (Fig.~\ref{fig:wavelet_fourier_comparison}). This multi-scale representation is crucial for detecting anomalies manifesting as short-lived bursts, shifts, or local contextual changes. Substituting wavelets with Fourier (\textsc{TimeLAVA}-Fourier) generally degrades performance, particularly for localized anomalies. Interestingly, \textsc{TimeLAVA}-Fourier achieves higher scores on SMAP and MSL datasets, but this stems from ground-truth label inaccuracies, where anomalies are annotated as extended intervals rather than precise points (Fig.~\ref{fig:Fourier-compare}). While \textsc{TimeLAVA} identifies exact anomaly onsets, \textsc{TimeLAVA}-Fourier flags broader regions, better matching the overly wide labels and inflating metrics, though \textsc{TimeLAVA}'s finer localization more accurately reflects true anomaly occurrences.

The proposed \textsc{TimeLAVA}, integrating both UOT and wavelet transform, achieves the highest AUC and F1 in the majority of datasets, demonstrating that these design choices contribute synergistically to improved detection accuracy.

\begin{table}[t]
\centering
{\small
\caption{\textbf{Ablation study on key components of \textsc{TimeLAVA}.} 
\textsc{TimeLAVA}-OT replaces UOT with OT, and \textsc{TimeLAVA}-Fourier replaces Wavelet with Fourier. \textbf{Bold} = best, \underline{underline} = second best.}
\label{tab:timelava_compare_full}
\resizebox{0.9\textwidth}{!}{%
\begin{tabular}{l|ccc|ccc|ccc}
\toprule
\multirow{2}{*}{\textbf{Dataset}} 
& \multicolumn{3}{c|}{\textsc{TimeLAVA}-OT} 
& \multicolumn{3}{c|}{\textsc{TimeLAVA}-Fourier} 
& \multicolumn{3}{c}{\textbf{\textsc{TimeLAVA} (Ours)}} \\
\cmidrule(lr){2-4} \cmidrule(lr){5-7} \cmidrule(lr){8-10}
& AUC $\uparrow$ & F1 $\uparrow$ & Time (s) $\downarrow$ 
& AUC $\uparrow$ & F1 $\uparrow$ & Time (s) $\downarrow$ 
& AUC $\uparrow$ & F1 $\uparrow$ & Time (s) $\downarrow$ \\
\midrule
SMD         & 0.59 & \underline{0.23} & 24.65 & \underline{0.75} & 0.21 & 24.61 & \textbf{0.91} & \textbf{0.52} & 14.01 \\
SMAP        & \underline{0.74} & 0.48 & 2.13 & \textbf{0.79} & \textbf{0.63} & 1.04 & \underline{0.74} & \underline{0.54} & 0.72 \\
MSL         & 0.79 & \underline{0.49} & 0.68 & \textbf{0.84} & \textbf{0.57} & 0.41 & \underline{0.81} & \underline{0.49} & 0.25 \\
{PSM}         & 0.59 & 0.38 & 51.18 & \underline{0.74} & \underline{0.57} & 10.68 & \textbf{0.77} & \textbf{0.58} & 8.16 \\
KDD-Cup99   & \textbf{0.98} & \underline{0.92} & 815.46 & \textbf{0.98} & \underline{0.92} & 296.01 & \textbf{0.98} & \textbf{0.93} & 101.52 \\
NAB-Taxi    & \underline{0.50} & \underline{0.00} & 0.52 & \underline{0.50} & \underline{0.00} & 0.90 & \textbf{0.56} & \textbf{0.18} & 0.52 \\
NAB-Tweets  & \underline{0.62} & \underline{0.19} & 0.92 & 0.58 & 0.05 & 1.10 & \textbf{0.64} & \textbf{0.24} & 0.47 \\
NAB-AdEx    & 0.71 & \underline{0.31} & 0.05 & \textbf{0.76} & \textbf{0.47} & 0.09 & \underline{0.73} & 0.27 & 0.02 \\
SWaT        & 0.32 & 0.10 & 578.68 & \underline{0.40} & \underline{0.13} & 226.11 & \textbf{0.86} & \textbf{0.68} & 245.83 \\
WADI        & \underline{0.64} & \underline{0.25} & 741.74 & 0.63 & 0.23 & 148.89 & \textbf{0.85} & \textbf{0.43} & 91.13 \\
NAB-Traffic & \underline{0.60} & \underline{0.21} & 0.09 & 0.48 & 0.00 & 0.04 & \textbf{0.74} & \textbf{0.34} & 0.03 \\
\bottomrule
\end{tabular}
}}
\end{table}

\begin{figure}[t]
  \centering
  \begin{minipage}{0.48\linewidth}
    \centering
    \includegraphics[width=\linewidth]{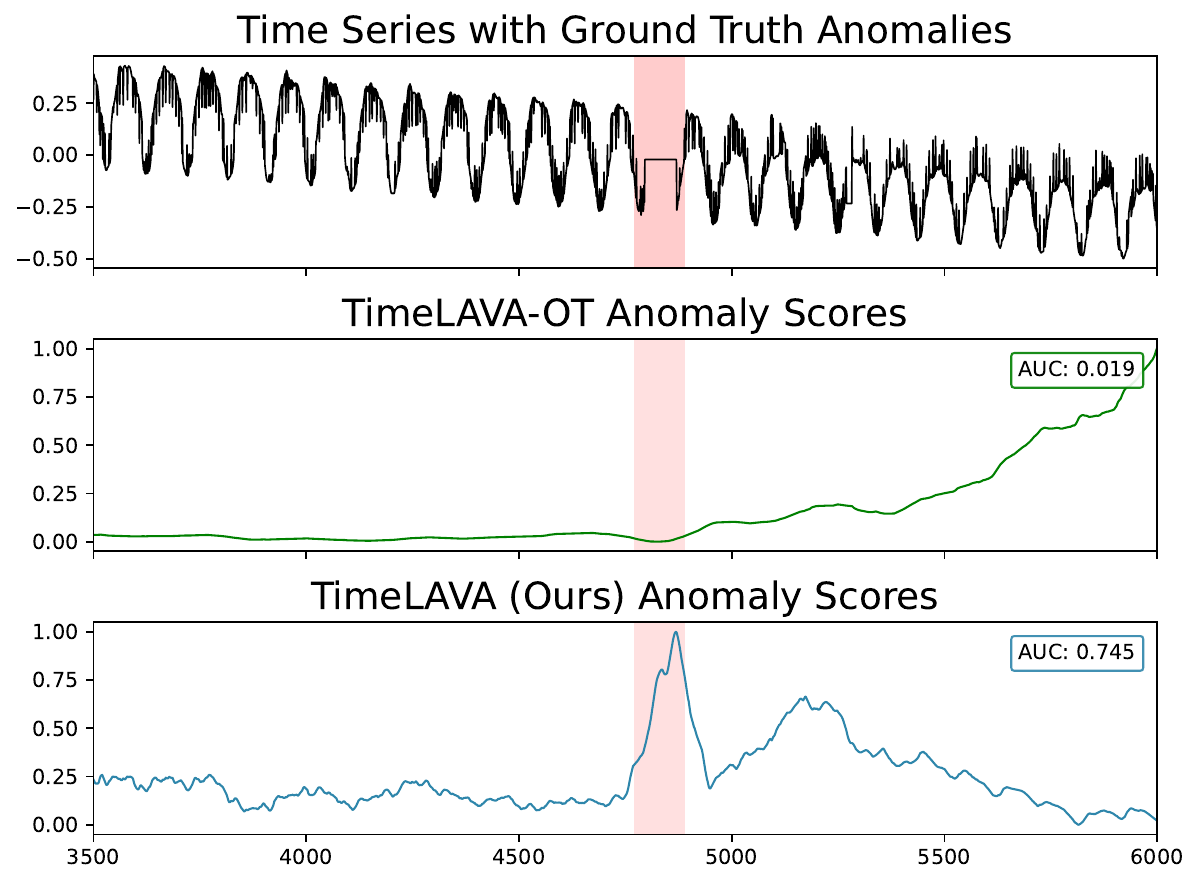}
  \end{minipage}\hfill
  \begin{minipage}{0.48\linewidth}
    \centering
    \includegraphics[width=\linewidth]{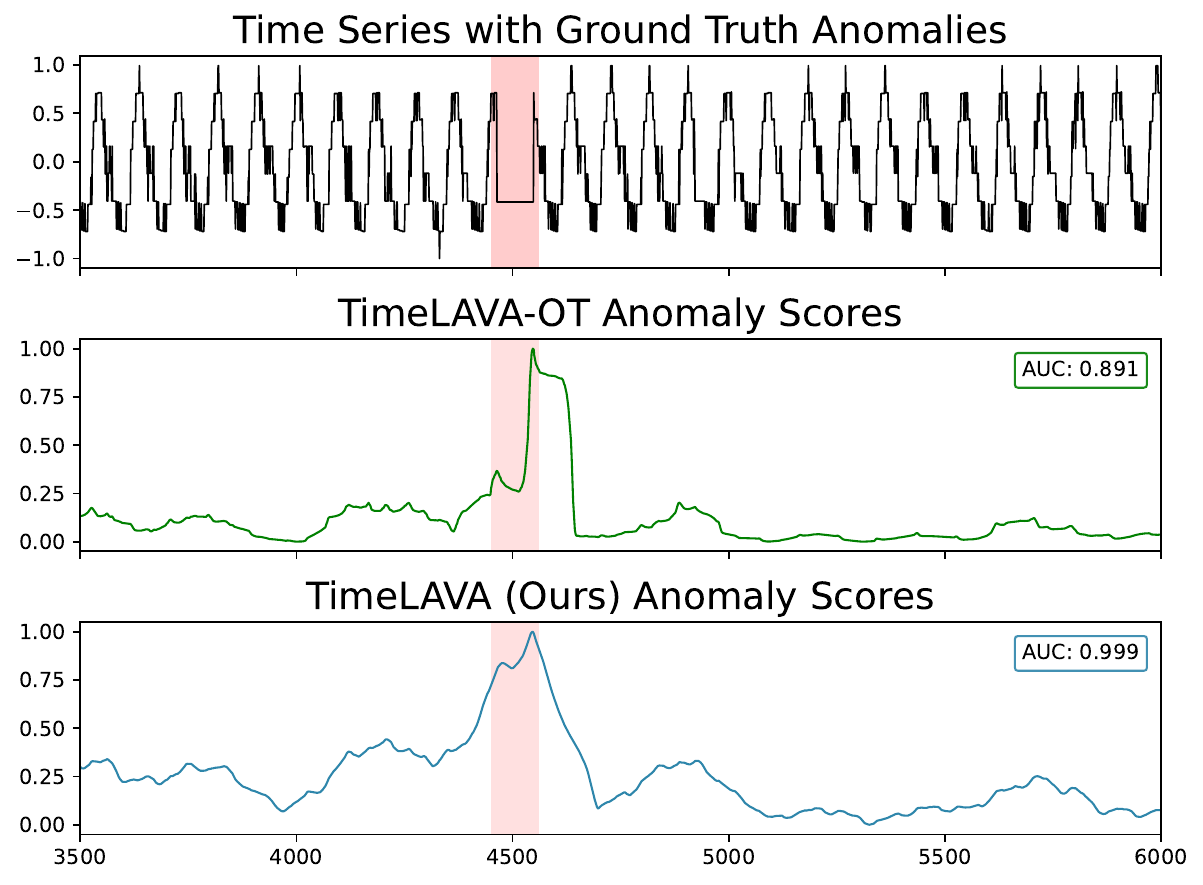}
  \end{minipage}
  \caption{\small Anomaly detection on SMAP channels G-1 (left) and A-2 (right). \textsc{TimeLAVA} with UOT (bottom) shows superior anomaly detection compared to \textsc{TimeLAVA}-OT with standard OT (middle), achieving significantly higher AUC scores.}
  \label{fig:OT-compare}
\end{figure}

\begin{figure}[t]
  \centering
  \begin{minipage}{0.48\linewidth}
    \centering
    \includegraphics[width=\linewidth]{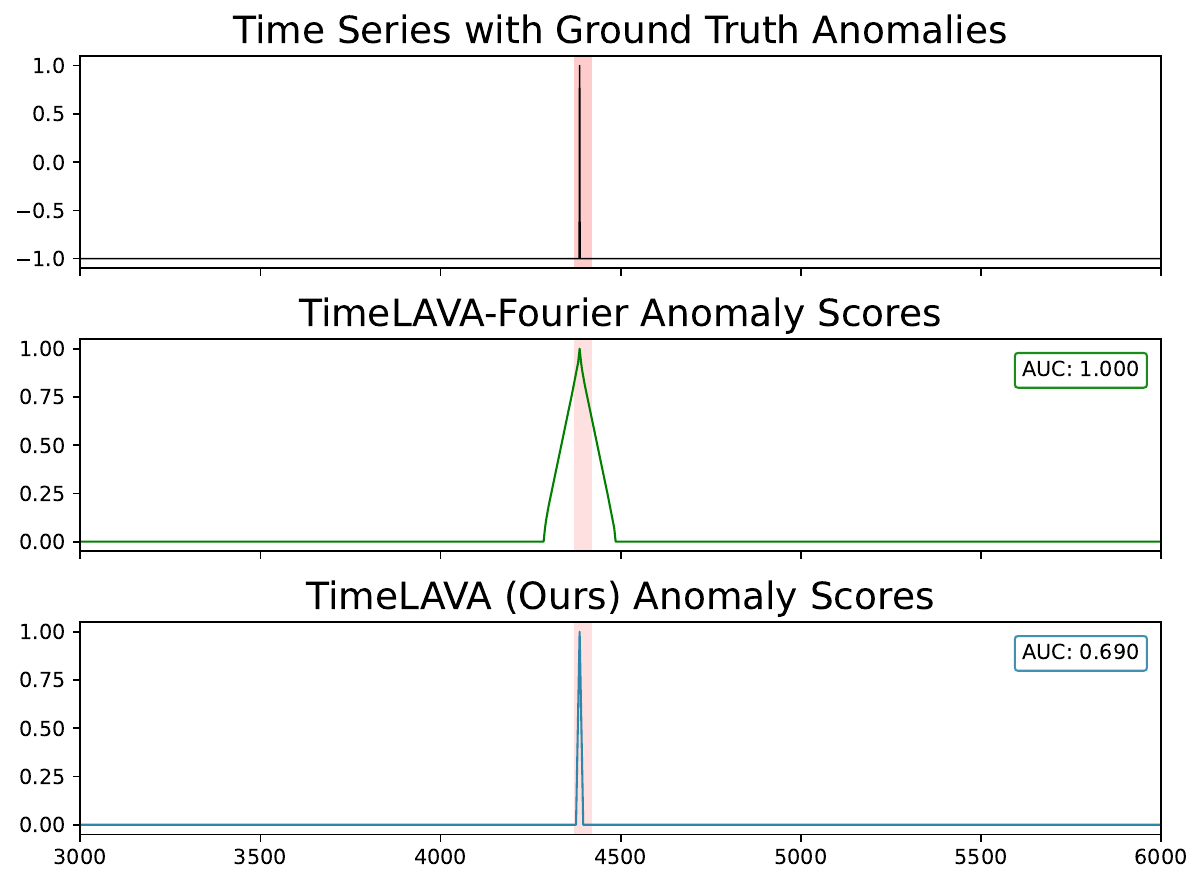}
  \end{minipage}\hfill
  \begin{minipage}{0.48\linewidth}
    \centering
    \includegraphics[width=\linewidth]{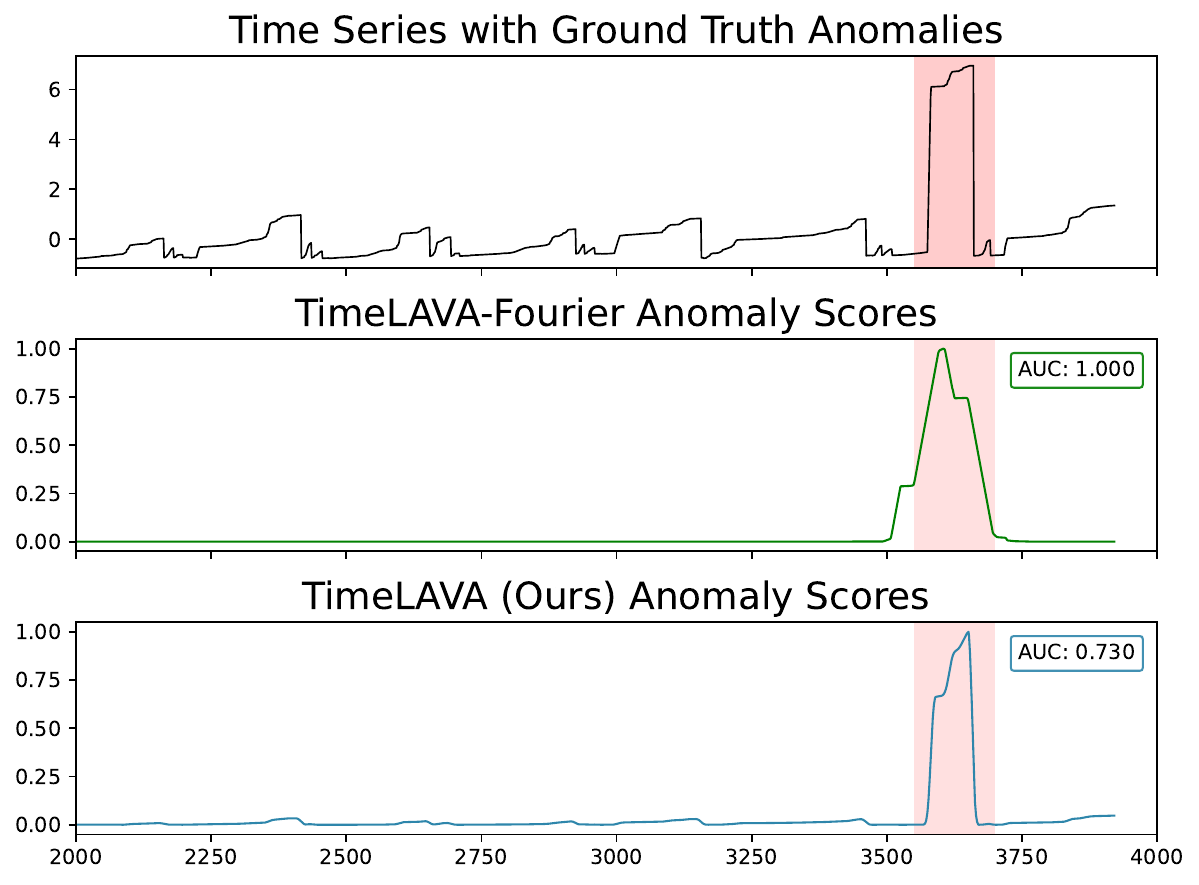}
  \end{minipage}
  \caption{\small Comparison of \textsc{TimeLAVA}-Fourier and \textsc{TimeLAVA} on SMAP channel D-8 (left) and MSL channel F-5 (right). While \textsc{TimeLAVA}-Fourier achieves perfect AUC (1.000), this inflated score results from ground-truth labels that mark extended intervals rather than precise anomaly points. \textsc{TimeLAVA} provides more accurate point-wise localization despite lower AUC.}
  \label{fig:Fourier-compare}
\end{figure}

\subsection{Data Pruning and Selection}
\label{app:datapruning}

To evaluate the practical effectiveness of \textsc{TimeLAVA} in identifying valuable time series segments, we conduct comprehensive data pruning and selection experiments across multiple public time series datasets. Our experiments compare \textsc{TimeLAVA} against several established baseline methods to show that it achieves the best or second-best performance in data quality assessment.

\paragraph{Experimental Setup.}  
We conduct two complementary experiments to assess data quality identification capabilities. In data pruning experiments, we evaluate the ability to identify and remove low-quality segments by progressively eliminating the lowest-valued segments (0\% to 50\% removal) and measuring downstream forecasting performance, testing whether our method can effectively filter out detrimental data. Conversely, in data selection experiments, we assess data selection capability by retaining only the highest-valued segments (20\% to 50\% retention) and evaluating performance, measuring the method's ability to identify the most informative data points. For each scenario, we train linear AR models on the selected segments for multi-step forecasting and measure Root Mean Square Error (RMSE), and coefficient of determination ($R^2$). For experiments with injected noise, we assess each method's ability to identify corrupted segments using AUC-ROC. We set $c = 0$ in Eq.~\ref{eq:cost_matrix} since these forecasting tasks focus purely on temporal pattern valuation. For all datasets, we compute valuations per dimension and report the average, restricting inputs to one dimension at a time to avoid excessive runtime for slow methods (e.g., TimeInf) and ensure comparability.

\paragraph{Datasets.} We evaluate our approach on four widely-used public time series datasets~\citep{wu2021current,liu2023itransformer}. 
\begin{itemize}
    \item \textbf{ETT (Electricity Transformer Temperature)}: 2-year hourly data from two counties in China, featuring oil temperature as the target variable and 6 power load features, with the dataset split into 12/4/4 months for train/validation/test.
    \item \textbf{Traffic}: hourly road occupancy rates from 862 sensors across California over 48 months, providing a high-dimensional multivariate time series with complex spatiotemporal patterns.
    \item \textbf{Electricity}: hourly electricity consumption patterns of 321 clients from 2012-2014, representing diverse consumer behavior patterns in energy usage.
    \item \textbf{Exchange}: daily exchange rates for eight countries from 1990-2016, capturing long-term economic trends and currency fluctuations.
\end{itemize}     

For the Electricity, Traffic, and Exchange datasets, we adopt a 70\%/10\%/20\% split for training/validation/testing. We partition each dataset into non-overlapping segments of fixed length, with segment lengths chosen based on dataset characteristics: 96 time steps for ETT and Electricity, 168 for Traffic, and 30 for Exchange. This segmentation allows us to evaluate data quality at a meaningful temporal granularity while maintaining sufficient statistical power for downstream tasks. To test robustness and noise detection capabilities, we systematically corrupt 20\% of training segments with four types of realistic time series noise: Gaussian noise (additive white noise scaled by segment standard deviation), spike artifacts (random impulse noise at multiple time points), linear drift (systematic trend distortion across the segment), and scale corruption (multiplicative scaling by random factors).

\paragraph{Results.} Figs.~\ref{fig:exchange}-\ref{fig:traffic} demonstrate TimeLAVA's comparable performance across all datasets. 

\emph{Data selection.}  In data selection experiments, TimeLAVA consistently achieves the highest $R^2$ values when retaining high-value segments, with particularly strong performance on the Exchange dataset where it maintains high $R^2$ even when using only 30\% of the data. The Traffic dataset results highlight TimeLAVA's robustness to high-dimensional, complex temporal patterns, where while baseline methods show significant performance degradation as data is reduced, TimeLAVA maintains stable forecasting accuracy across different retention rates, suggesting that our wavelet-based feature extraction successfully captures essential temporal structures that generalize well to forecasting tasks.

\emph{Data Pruning.}  The data pruning results indicate that TimeLAVA effectively removes redundant and low-quality segments without compromising model performance. Fig.~\ref{fig:ROC Curves} presents ROC curves for noise detection across all datasets, where TimeLAVA achieves consistently high AUC scores across different datasets, significantly outperforming baseline methods. The Influence Function method shows competitive performance on some datasets but lacks consistency, while KNN-Shapley and Data-OOB show moderate detection capabilities.

Across all datasets and experimental conditions, TimeLAVA shows consistent performance advantages, maintaining strong performance across diverse temporal patterns, dataset sizes, and noise conditions, suggesting that our approach captures fundamental aspects of time series data quality that generalize across domains.

\begin{figure}[H]
  \centering
  \includegraphics[width=\linewidth]{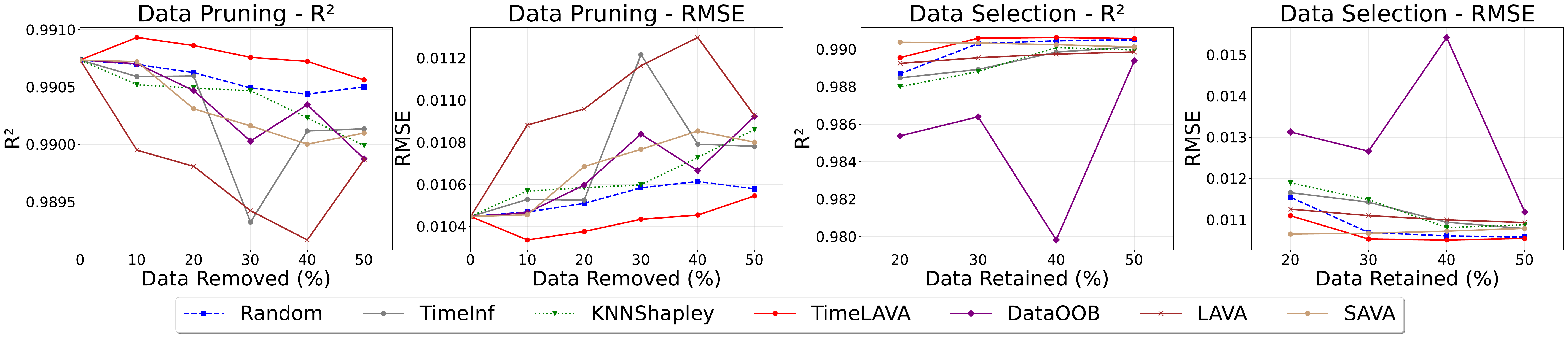}
  \caption{Data Selection and Pruning Results - Exchange}
  \label{fig:exchange}
  \vspace{-1.5em}  
\end{figure}

\begin{figure}[H]
  \centering
  \includegraphics[width=\linewidth]{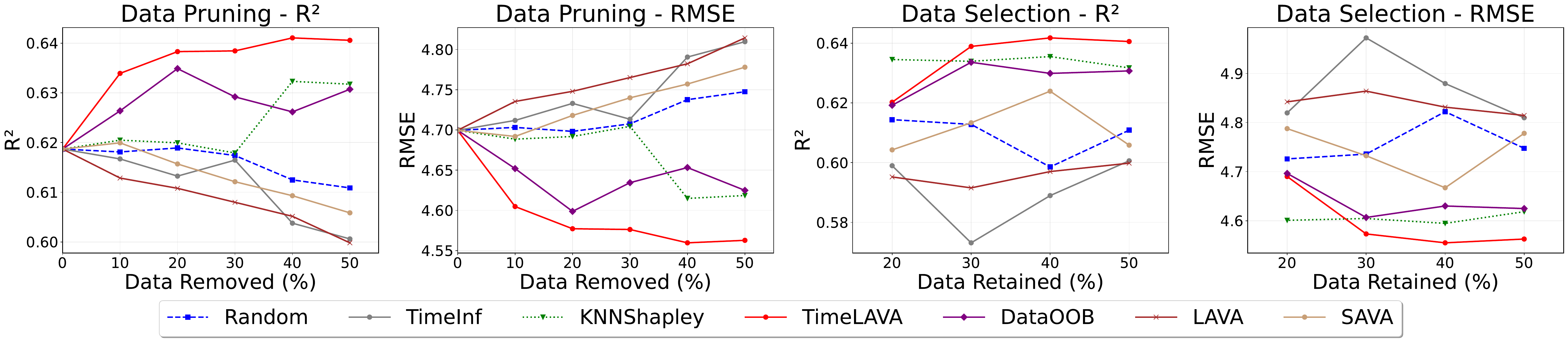}
  \caption{Data Selection and Pruning Results - ETTh1}
  \label{fig:etth1}
  \vspace{-1.5em}  
\end{figure}

\begin{figure}[H]
  \centering
  \includegraphics[width=\linewidth]{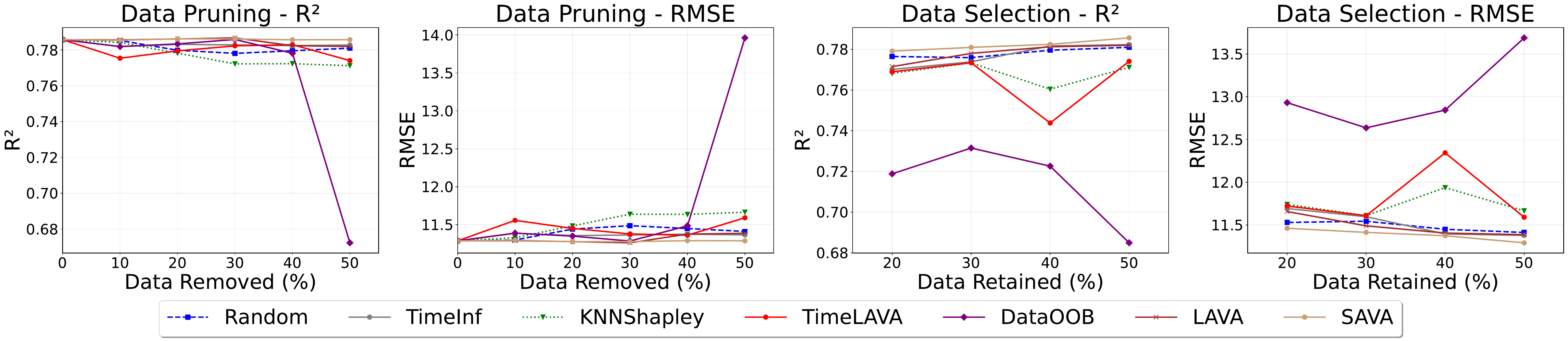}
  \caption{Data Selection and Pruning Results - Traffic}
  \label{fig:electricity}
  \vspace{-1.5em}  
\end{figure}

\begin{figure}[H]
  \centering
  \includegraphics[width=\linewidth]{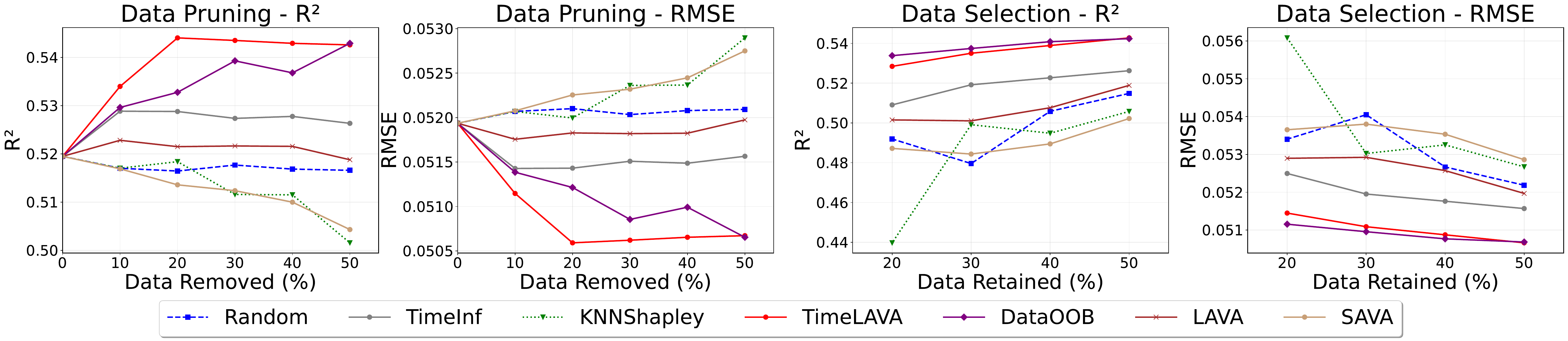}
  \caption{Data Selection and Pruning Results - Electricity}
  \label{fig:traffic}
  \vspace{-1.5em}  
\end{figure}

\begin{figure}[H]
  \centering
  \begin{minipage}{0.24\linewidth}
    \centering
    \includegraphics[width=\linewidth]{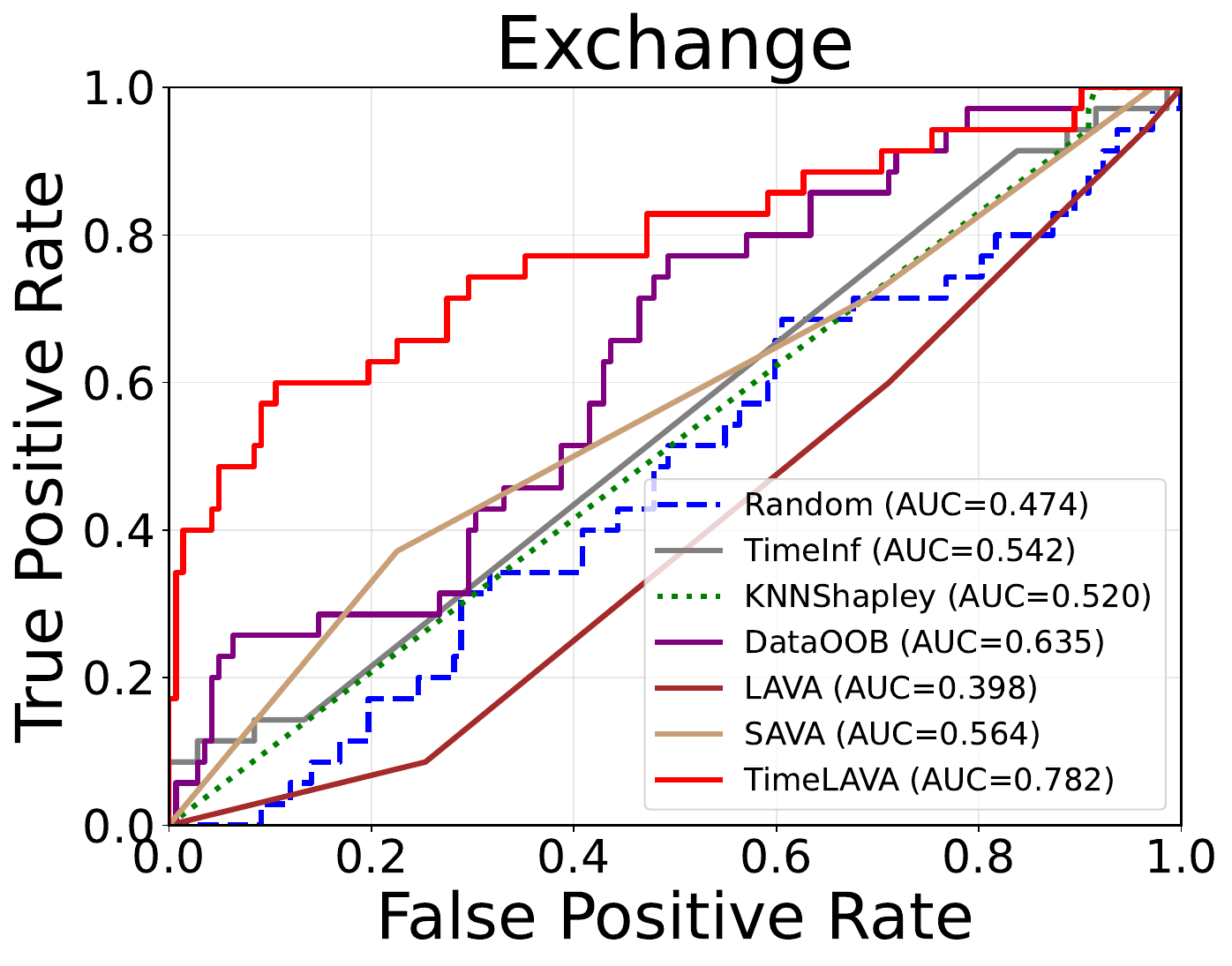}
  \end{minipage}\hfill
  \begin{minipage}{0.24\linewidth}
    \centering
    \includegraphics[width=\linewidth]{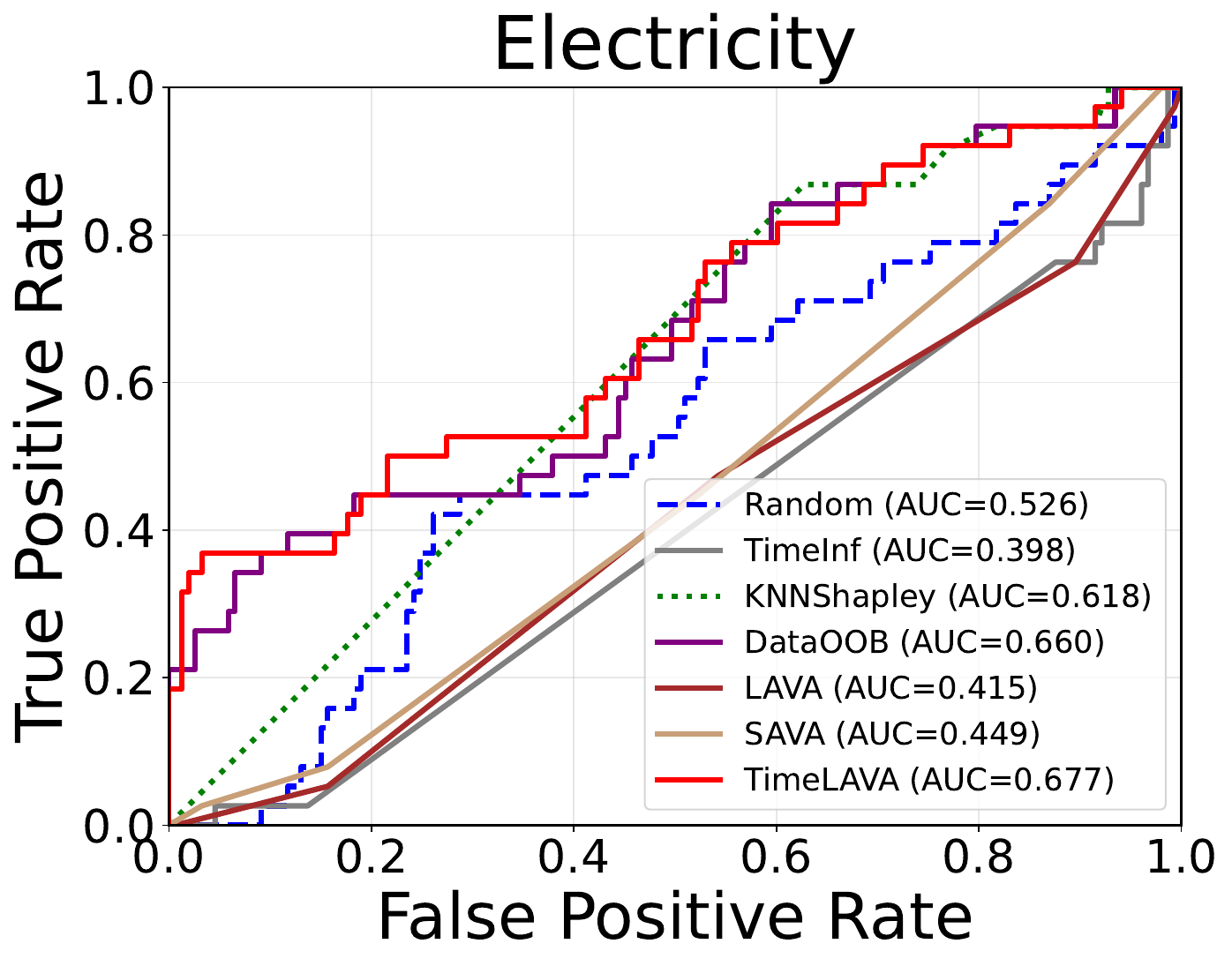}
  \end{minipage}
  \begin{minipage}{0.24\linewidth}
    \centering
    \includegraphics[width=\linewidth]{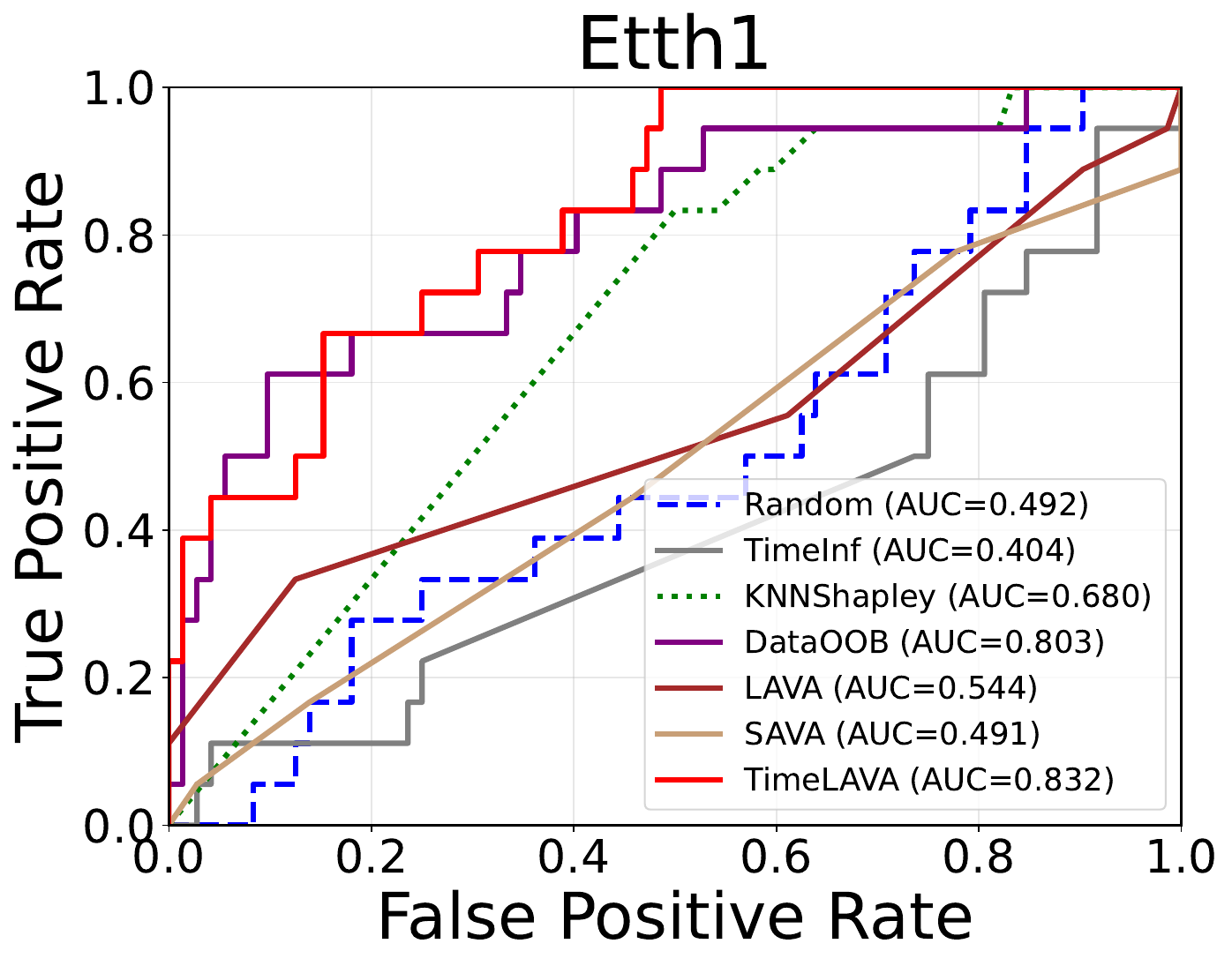}
  \end{minipage}
  \begin{minipage}{0.24\linewidth}
    \centering
    \includegraphics[width=\linewidth]{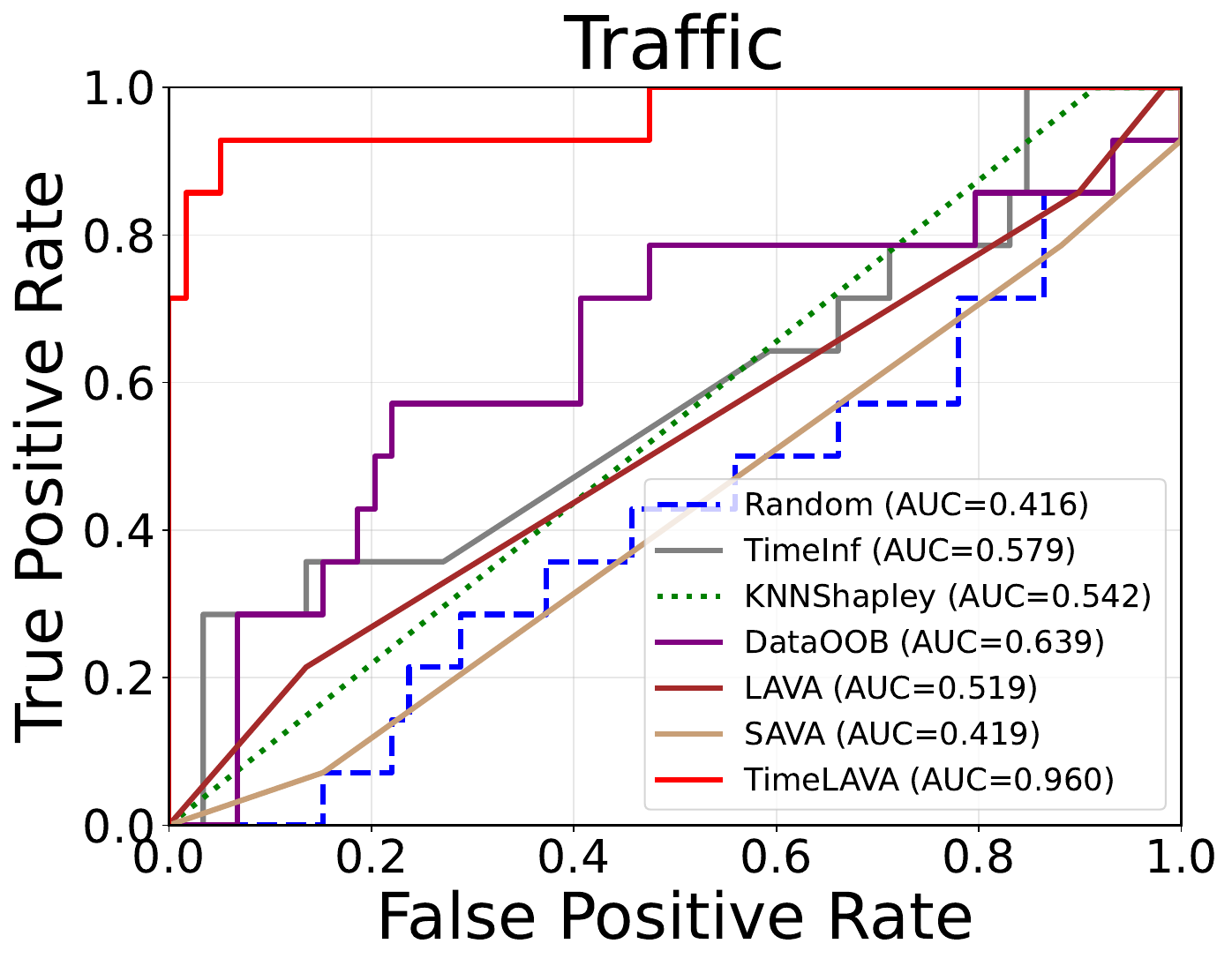}
  \end{minipage}
  \caption{\small \textbf{ROC curves} illustrating the performance of temporal label-noise segment identification on four datasets: \textbf{Exchange}, \textbf{Electricity}, \textbf{ETTh1}, and \textbf{Traffic}. The results demonstrate the robustness of \textsc{TimeLAVA} in distinguishing clean from corrupted segments across diverse temporal domains.
}

  \label{fig:ROC Curves}
\end{figure}

\subsection{Label Noise}
\label{app:labelnoise}

Temporal label noise is a critical challenge in real-world detection systems. Unlike static classification tasks, detection applications often exhibit time-varying labeling quality due to operational factors such as annotator fatigue, sensor drift, or changing environmental conditions. For instance, healthcare monitoring systems show systematic temporal variations: continuous glucose monitors experience calibration drift affecting detection accuracy, while human annotators exhibit time-of-day effects with reduced labeling quality during night shifts~\citep{carpenter2003seasonal,hsieh2016you,nagaraj2024learning}. Traditional approaches assuming static noise distributions fail to capture these temporal dynamics, leading to degraded performance. By explicitly modeling time-varying label quality, our experimental framework enables rigorous evaluation of methods designed to handle temporal label noise versus conventional approaches.

\textbf{Experimental Setup.} We consider a controlled semi-synthetic experimental framework that treats original dataset labels as ground truth and systematically injects temporal noise patterns to evaluate detection robustness. This approach follows established benchmarking practices for label noise research~\citep{jiang2023opendataval,Just2023LAVA,kwon2023dataoob} while extending them to capture temporal dynamics essential for detection applications. Since time series classification typically operates on segments rather than individual points, we inject noise at the segment level. After dividing the time series into non-overlapping segments, we corrupt segment labels according to time-dependent probability functions that reflect realistic error patterns observed in operational systems. For each temporal noise pattern, corrupted segments have their labels flipped to the opposite class, simulating common failure modes such as misclassification errors:

\begin{equation*}
    \tilde{y}_i = \begin{cases}
    1 - y_i & \text{if segment } i \in \mathcal{N} \\
    y_i & \text{otherwise}
    \end{cases}
\end{equation*}

where $\mathcal{N}$ denotes the set of segments selected for corruption according to each temporal pattern, and $\tilde{y}_i$ represents the corrupted label for segment $i$. We evaluate four distinct temporal noise patterns (Random, Periodic, Decay, and Growth) that determine when corruption occurs, as detailed below.

Each dataset is partitioned temporally with 70\% allocated for training and 30\% for validation, ensuring realistic evaluation conditions where future data remains unseen during model development. Temporal noise is injected exclusively into training labels, while validation labels remain clean to provide unbiased performance assessment. This protocol simulates realistic deployment scenarios where detection systems must learn from noisy historical data while being evaluated on clean test conditions. We vary the average noise rate from 5\% to 20\% to evaluate across different noise severities 

\textbf{Baselines.} We compare our proposed methods against the same baseline approaches used in our pruning experiments, ensuring consistent evaluation across different aspects of our work.

\textbf{Datasets.} We evaluate our temporal detection methods using two real-world datasets from healthcare applications following~\citep{nagaraj2024learning}. While these datasets were originally designed for standard classification tasks, we adapt them to study temporal label noise by treating the original labels as ground truth and systematically injecting temporal noise patterns. Each dataset represents binary classification tasks over complex temporal feature spaces that are well-suited for evaluating robustness to time-varying label corruption. The datasets include:

\begin{itemize}
    \item \textbf{Moving}: human activity recognition task where we detect movement states (e.g., walking vs. sitting) using temporal accelerometer data in adults~\citep{human_activity}.
    \item \textbf{Senior}: similar human activity recognition task as above but in senior citizens~\citep{har70+}.
    \item \textbf{Blinking}: eye movement (open vs closed) detection task using continuous EEG data~\citep{eeg_eye}.
\end{itemize}

These classification tasks provide suitable testbeds for temporal label noise evaluation because they involve sequential labeling over time where temporal corruption patterns can realistically occur in practical deployment scenarios.

\textbf{Temporal Noise Injection.} Our noise injection mechanism corrupts detection labels according to time-dependent probability functions that reflect realistic error patterns observed in operational detection systems. We evaluate four distinct temporal noise patterns that capture diverse real-world detection error scenarios:

\begin{itemize}
    \item \textbf{Random Noise} provides a baseline with uniform corruption probability: $P(i \in \mathcal{N}) = \eta$ for all segments $i$, representing temporally independent errors.
    
    \item \textbf{Periodic Noise} models cyclical error patterns: $P(i \in \mathcal{N}) = \frac{\eta}{2} + \frac{\eta}{2} \sin(\omega t_i + \phi)$ where $t_i$ denote the temporal position of segment $i$, $\omega$ controls frequency and $\phi$ is the phase offset, capturing systems affected by daily cycles or periodic maintenance.
    
    \item \textbf{Decay Noise} represents exponentially decreasing error rates: $P(i \in \mathcal{N}) = \eta \cdot \exp(-\lambda t_i / T) / Z$ where $\lambda$ controls decay rate and $Z$ is a normalization constant ensuring overall noise rate $\eta$. This models systems improving through learning or calibration.
    
    \item \textbf{Growth Noise} captures deteriorating performance: $P(i \in \mathcal{N}) = \eta / (1 + \exp(-\beta(t_i - T/2))) / Z$ where $\beta$ controls growth rate, $T/2$ is the midpoint, and $Z$ normalizes to achieve target rate $\eta$. This reflects aging or drift effects.
\end{itemize}

As shown in Fig.~\ref{fig:noisecurve}, these temporal noise patterns create distinct corruption patterns in the actual label sequences, with corrupted labels distributed according to their respective temporal probability functions.

\begin{figure}[h]
    \centering
    \includegraphics[width=\textwidth]{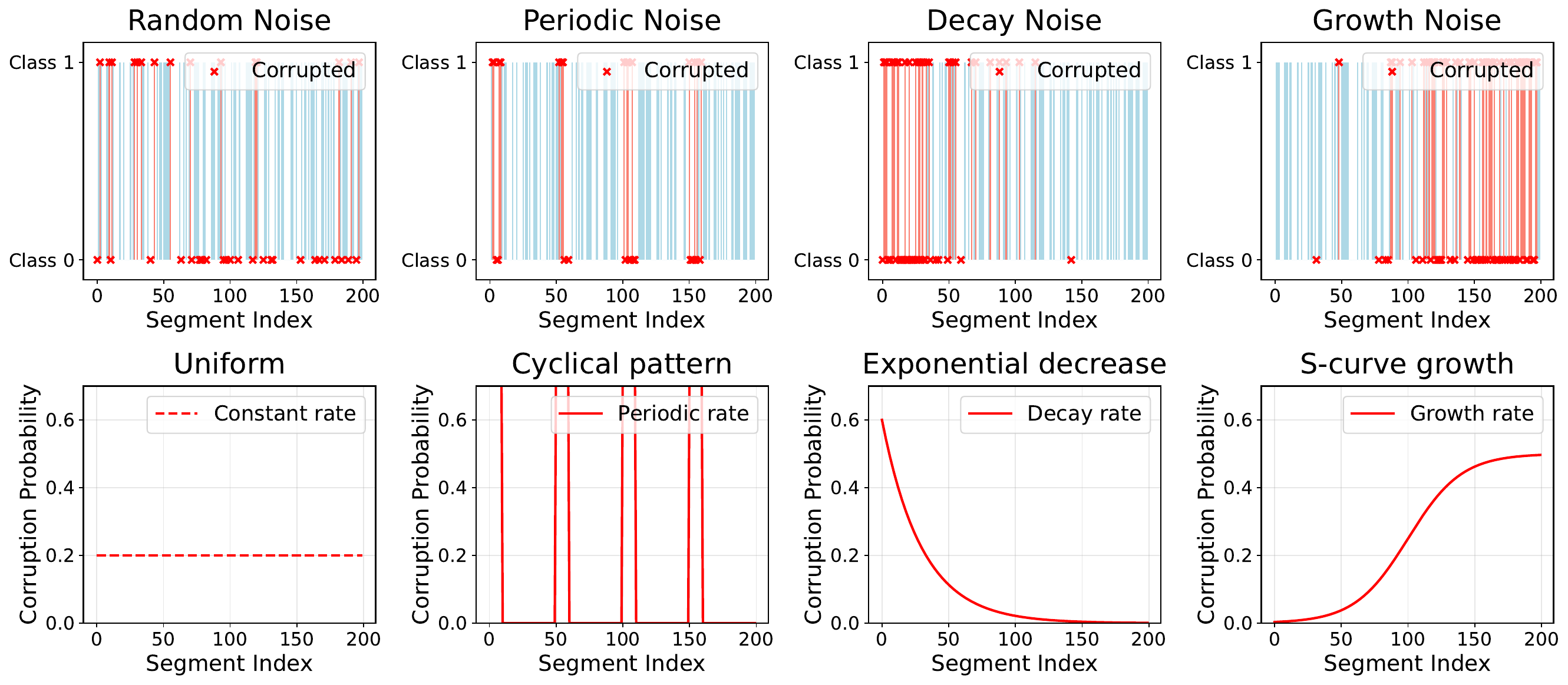}
    \caption{\small \textbf{Temporal label noise patterns in time series segments.} Upper: Binary classification labels (Class 0/1) with corrupted segments marked in red. Lower: Time-varying corruption probability $P(i \in \mathcal{N})$ for each pattern. 
    Random noise maintains constant probability $\eta$, Periodic follows sinusoidal variation simulating cyclical quality changes, Decay shows exponential decrease modeling learning effects, and Growth exhibits sigmoid increase representing degradation over time.}
    \label{fig:noisecurve}
\end{figure}

\textbf{Evaluation.} We evaluate noise detection capability using F1-score metrics to assess how effectively each method identifies temporally corrupted detection labels.

\textbf{Results.} As shown in Figs.~\ref{fig:f1_noise_EEG}--\ref{fig:f1_noise_Senior}, \textsc{TimeLAVA} achieved the best performance in periodic noise detection, substantially outperforming all baseline methods, while demonstrating competitive performance on other noise patterns. KNN-Shapley performs well as it assumes that segments with similar features should have similar labels, making it effective at detecting mislabeled segments that disagree with their nearest neighbors in the feature space when label noise percentage is large. \textsc{TimeLAVA}'s advantage stems from its wavelet-based features that capture multi-scale temporal patterns within each segment, making it more sensitive to systematic labeling errors. Unlike methods that treat time series segments as independent feature vectors, \textsc{TimeLAVA} preserves the sequential nature of temporal data through its sliding window approach and distributional alignment, making it effective for real-world scenarios where labeling errors follow temporal patterns such as periodic system failures, time-dependent annotation quality degradation, or systematic errors that evolve over time. 

\begin{figure}[H]
    \centering
    \includegraphics[width=\textwidth]{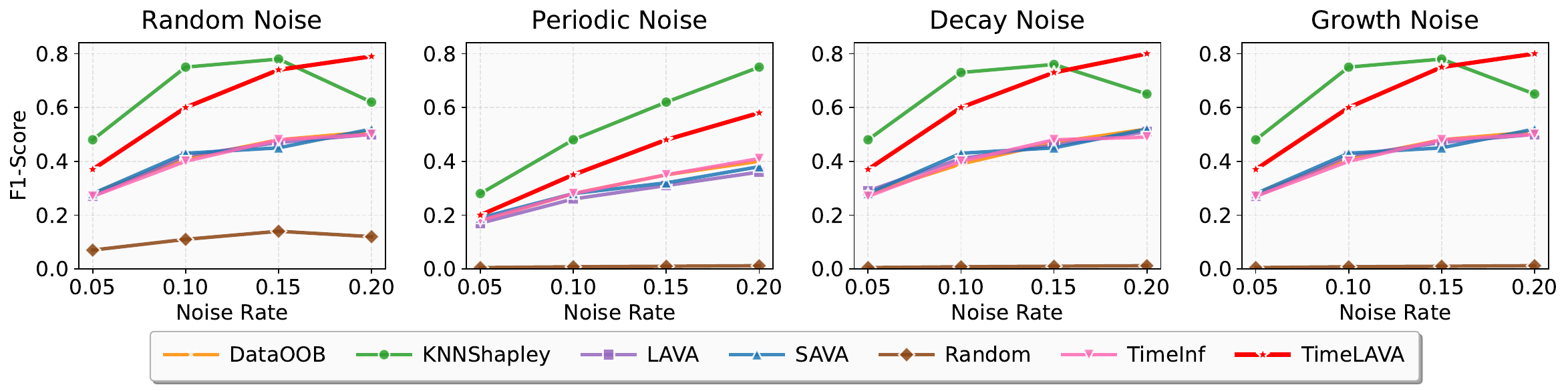}
    \caption{\small F1-score on \textbf{Blinking} across different noise types.}
    \label{fig:f1_noise_EEG}
\end{figure}

\begin{figure}[H]
    \centering
    \includegraphics[width=\textwidth]{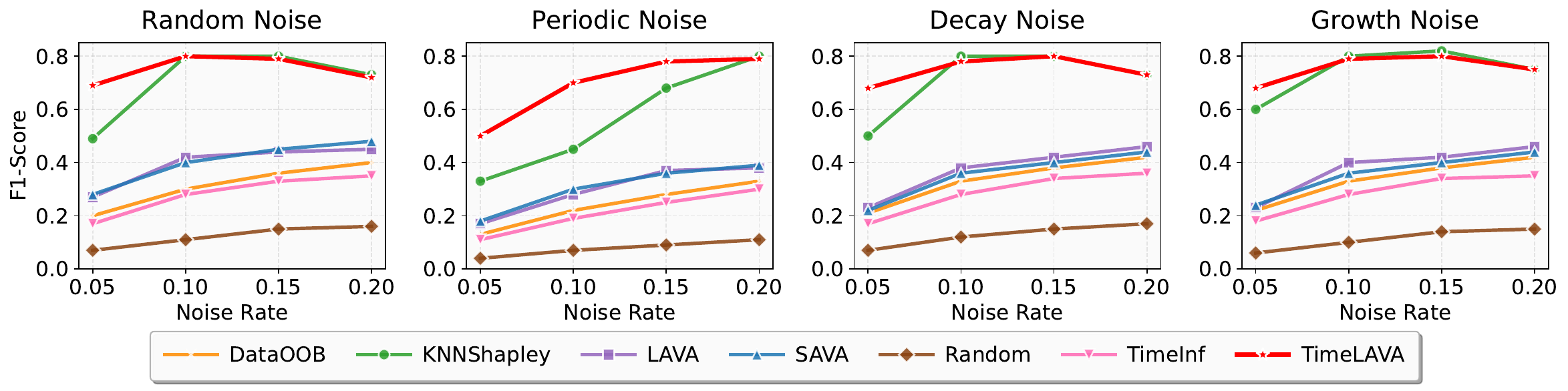}
    \caption{\small F1-score on \textbf{Moving} across different noise types.}
    \label{fig:f1_noise_UCI_HAR}
\end{figure}

\begin{figure}[H]
    \centering
    \includegraphics[width=\textwidth]{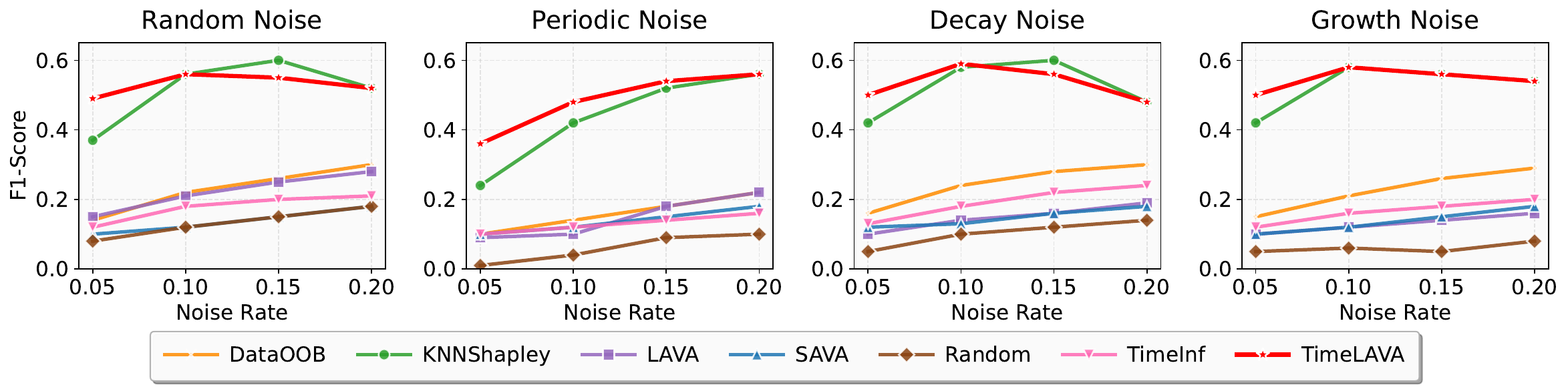}
    \caption{\small {F1-score on \textbf{Senior} across different noise types.}}
    \label{fig:f1_noise_Senior}
\end{figure}

\textbf{Parameter Sensitivity Analysis.} We conduct sensitivity analyses for the two core hyperparameters: the UOT regularization strength~$\kappa$ and the label balance parameter~$c$.

\begin{wrapfigure}{l}{0.6\textwidth}  
\vspace{-1.5em}
  \centering
  \includegraphics[width=0.60\textwidth]{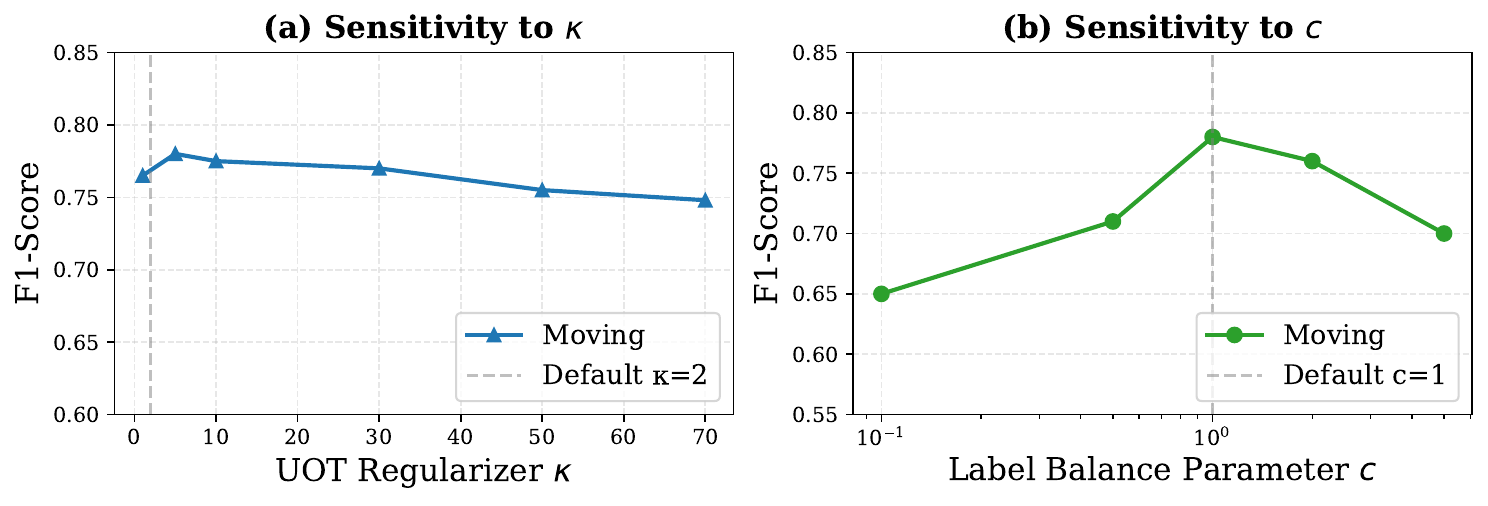}
  \caption{\small Sensitivity to hyperparameters $\kappa$ and $c$ on Moving dataset.}
  \label{fig:sensitivity}
\end{wrapfigure}

\textbf{UOT Regularizer~$\kappa$.} 
Figure~\ref{fig:sensitivity}~(a) shows the effect of UOT regularization strength~$\kappa$ on label noise detection (Moving dataset, 15\% periodic noise, $c=1$). \textsc{TimeLAVA} demonstrates remarkable stability for $\kappa \in [1, 10]$, with F1-score varying less than 1.3\% (0.765--0.775); our default $\kappa=2$ lies well within this stable region. As $\kappa$ exceeds~30, the formulation resembles balanced OT, enforcing strict mass conservation that reduces the method's ability to downweight outliers, leading to performance decline (F1=0.700 at $\kappa=70$). \textsc{TimeLAVA} is thus robust to $\kappa$ over a wide range, with degradation only at extreme values.

\textbf{Label Balance Parameter~$c$.} 
Figure~\ref{fig:sensitivity}~(b) shows the effect of label balance parameter~$c$ (fixing $\kappa=2$). Performance exhibits an inverted-U pattern, peaking at $c=1$ (F1=0.78). When $c<1$, the model underweights label information; at $c=0.1$, F1 drops to 0.65 as the method behaves like an unsupervised matcher. When $c>1$, the model overemphasizes label alignment; at $c=5$, F1 degrades to 0.70 as it becomes less robust to noisy labels. The stability around $c \in [0.5, 2.0]$ (F1: 0.71--0.78) validates our default choice $c=1$, achieving robust balance between temporal patterns and label consistency.


\end{document}